\newif\ifdrafting
\draftingfalse 

\documentclass[10pt,twocolumn,twoside]{IEEEtran}

\usepackage{url}
\usepackage{cite}
\usepackage{times}
\usepackage{epsfig}
\usepackage{epstopdf}
\usepackage{graphicx}
\usepackage{amsmath}
\usepackage{amssymb}
\usepackage{color}
\usepackage{tikz}
\usetikzlibrary{positioning}
\usepackage{subcaption}

\pagenumbering{gobble}
\newtheorem{remark}{Remark}
\newtheorem{theorem}{Theorem}
\newtheorem{lemma}{Lemma}
\newtheorem{definition}{Definition}
\newtheorem{property}{Property}
\newtheorem{corollary}{Corollary}

\definecolor{darkgreen}{RGB}{0,127,0}
\definecolor{darkblue}{RGB}{0,0,127}
\definecolor{darkred}{RGB}{127,0,0}
\definecolor{darkmagenta}{RGB}{127,0,127}
\definecolor{darkcyan}{RGB}{0,127,127}

\allowdisplaybreaks

\ifdrafting
\newcommand{\AM} [1] {\textcolor{darkblue}{[AM: #1]}} 
\newcommand{\YM}[1] {\textcolor{darkgreen}{[YM: #1]}} 
\newcommand{\Todo} [1] {\textcolor{darkmagenta}{\bf [Todo: #1]}}
\else
\newcommand{\AM} [1] {}
\newcommand{\YM} [1] {}
\newcommand{\Todo} [1] {}
\fi

\newcommand{\RC}[1]{#1}    

\newcommand{\rpm}{\raisebox{.2ex}{$\scriptstyle\pm$}}
\DeclareMathOperator*{\argmin}{arg\,min}

\begin{document}
\epstopdfsetup{outdir=./}

\thispagestyle{plain} \pagestyle{plain} \date{1st April 2017}
\title{Path Planning for Minimizing the \\
Expected Cost \RC{until} Success}
\author{Arjun~Muralidharan~and~Yasamin~Mostofi
\thanks{This work is supported in part by NSF RI award 1619376 and NSF CCSS award 1611254.}
\thanks{The authors are with the Department of Electrical and Computer Engineering,
University of California Santa Barbara, Santa Barbara, CA 93106, USA email:
$\{$arjunm, ymostofi$\}$@ece.ucsb.edu.}}
\maketitle

\begin{abstract}
Consider a general path planning problem of a robot on a graph with edge costs, and where each node has a Boolean value of success or failure (with respect to some task) with a given probability.
The objective is to plan a path for the robot on the graph that minimizes the expected cost \RC{until} success.
In this paper, it is our goal to bring a foundational understanding to this problem.
We start by showing how this problem can be optimally solved by formulating it as an infinite horizon Markov Decision Process, but with an exponential space complexity.
We then formally prove its NP-hardness.
To address the space complexity, we then propose a path planner, using a game-theoretic framework, that asymptotically gets arbitrarily close to the optimal solution.
Moreover, we also propose two fast and non-myopic path planners.
To show the performance of our framework, we do extensive simulations for two scenarios: a rover on Mars searching for an object for scientific studies, and a robot looking for a connected spot to a remote station (with real data from downtown San Francisco).
Our numerical results show a considerable performance improvement over existing state-of-the-art approaches.
\end{abstract}

\section{Introduction}\label{sec:intro}

Consider the scenario of a rover on mars looking for an object of interest, for instance a sample of water, for scientific studies.
Based on prior information, it has an estimate of the likelihood of finding such an object at any particular location.
The goal in such a scenario would be to locate one such object with a minimum expected cost.
\RC{Note that there may be multiple such objects in the environment, and that we only care about the expected cost until the first such object is found.}
In this paper, we tackle such a problem by posing it as a graph-theoretic path planning problem where there is a probability of success in finding an object associated with each node.
The goal is then to \emph{plan a path} through the graph that would \emph{minimize the expected cost} \RC{until} an object of interest is successfully found.
Several other problems of interest also fall into this formulation.
For instance, the scenario of a robot looking for a location connected to a remote station can be posed in this setting \cite{muralidharan2018pconn},
\RC{where a connected spot is one where the signal reception quality from/to the remote node/station is high enough to facilitate the communication.}
The robot can typically have a probabilistic assessment of connectivity all over the workspace, without a need to visit the entire space \cite{malmirchegini2012spatial}.
Then, it is interested in planning a path that gets it to a connected spot while minimizing the total energy consumption.
\RC{Success in this example corresponds to the robot getting connected to the remote station.}
Another scenario would be that of astronomers searching for a habitable exoplanet.
Researchers have characterized the probability of finding exoplanets in different parts of space \cite{molaverdikhani2009mapping}.
However, repositioning satellites to target and image different celestial objects is costly and consumes fuel.
Thus, a problem of interest in this context, is to find an exoplanet while minimizing the expected fuel consumption, based on the prior probabilities.
Finally, consider a human-robot collaboration scenario, where an office robot needs help from a human, for instance in operating an elevator \cite{rosenthal2012someone}.
If the robot has an estimate of different people's willingness to help, perhaps from past observations, it can then plan its trajectory to minimize its energy consumption \RC{until} it finds help.
Fig. \ref{fig:scenarios} showcases a sample of these possible applications.

\begin{figure}
    \centering
    \includegraphics[width=1\linewidth]{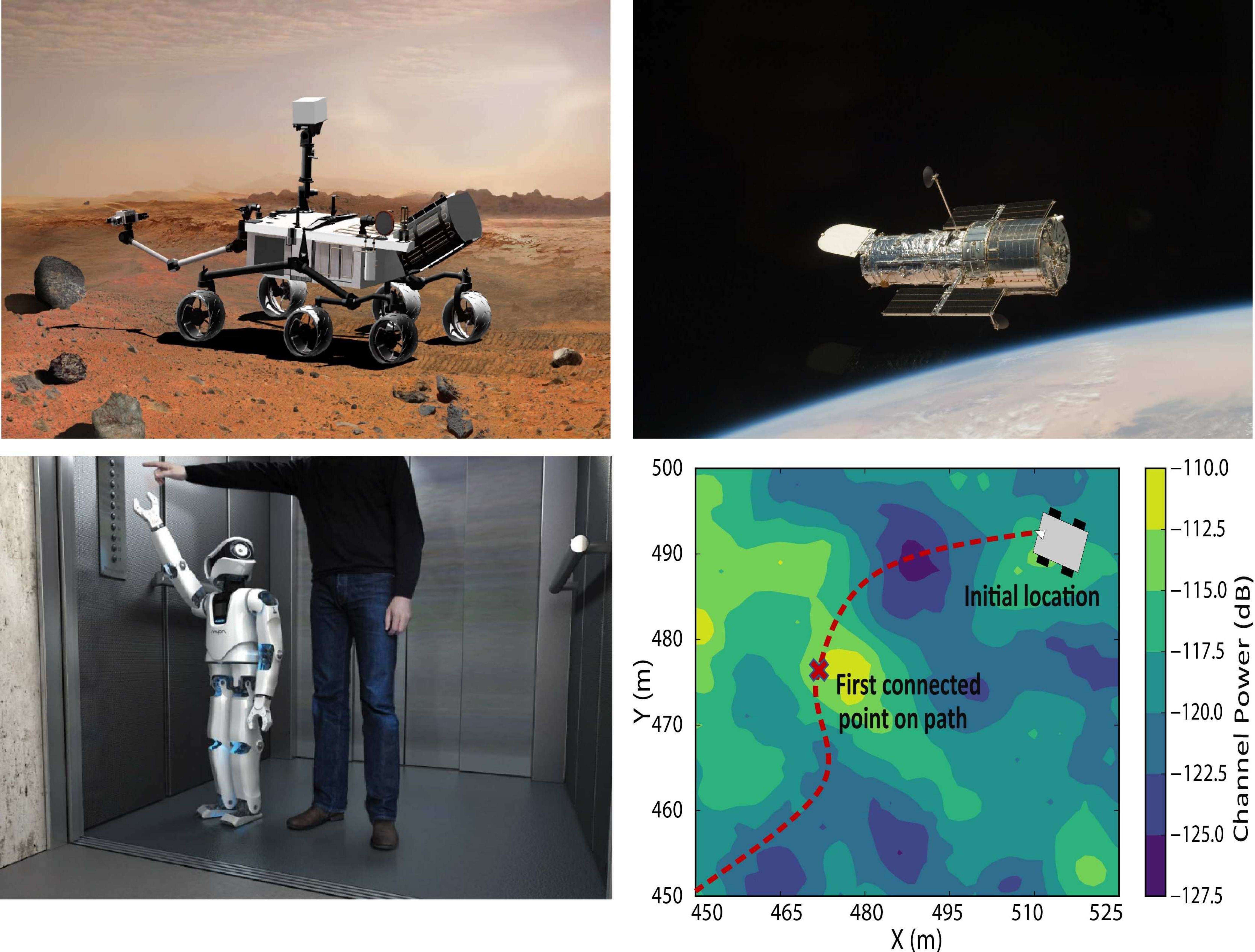}
    \vspace{-0.2in}
\caption{\small Possible applications of the problem of interest: (top left) path planning for a rover, (top right) imaging of celestial objects, (bottom left) human-robot collaboration and (bottom right) path planning to find a connected spot.
Image credit:(top left) and (top right) NASA, (bottom left) Noto: \protect\url{http://www.noto.design/}.}
    \label{fig:scenarios}
    \vspace{-0.3in}
\end{figure}

Optimal path planning for a robot has received considerable interest in the research community, and several algorithms have been proposed in the literature to tackle such problems, e.g., A*, RRT* \cite{karaman2010incremental,likhachev2008anytime}.
These works are concerned with planning a path for a robot, with a minimum cost, from an initial state to a predefined goal state.
However, this is different from our problem of interest in several aspects.
For instance, the cost metric is additive in these works, which does not apply to our setting due to its stochastic nature.
\RC{In the probabilistic traveling salesman problem \cite{jaillet1985probabilistic} and the probabilistic vehicle routing problem \cite{bertsimas1992vehicle}, each node is associated with a prior  probability of having a demand to be serviced, and the objective is to plan an a priori ordering of the nodes which minimizes the expected length of the tour.
A node is visited in a particular realization only if there is a demand to be serviced at it.
Thus, each realization has a different tour associated with it, and the expectation is computed over these tours, which is a fundamentally different problem than ours.}
Another area of active research is in path planning strategies for a robot searching for a target \cite{chung2012analysis,hollinger2009efficient,chung2011search}.
For instance, in \cite{chung2012analysis}, a mobile robot is tasked with locating a stationary target in minimum expected time.
In \cite{hollinger2009efficient}, there are multiple mobile robots and the objective is to find a moving target efficiently.
In general, these papers belong to a body of work known as optimal search theory where the objective is to find a \emph{single} hidden target based on an initial probability estimate, where the probabilities over the graph sum up to one \cite{bourgault2003optimal,chung2011search}.
\RC{The minimum latency problem \cite{blum1994minimum} is another problem related to search where the objective is to design a tour that minimizes the average wait time until a node is visited.}
In contrast, our setting is fundamentally different, and involves an \emph{unknown} number of targets where each node has a probability of containing a target ranging from $0$ to $1$.
Moreover, the objective is to plan a path that minimizes the expected cost to the \emph{first target} found.
This results in a different analysis and we utilize a different set of tools to tackle this problem.
Another related problem is that of satisficing search in the artificial intelligence literature which deals with planning a sequence of nodes to be searched \RC{until} the first satisfactory solution is found, which could be the proof of a theorem or a task to be solved \cite{simon1975optimal}.
The objective in this setting is to minimize the expected cost \RC{until} the first instance of success.
However, in this setting there is no cost associated with switching the search from one node to another.
To the best of the authors knowledge, the problem considered in this paper has not been explored before.

\textbf{Statement of contribution:}
In this paper, we start by showing that the problem of interest, i.e., minimizing the expected cost \RC{until} success, can be posed as an infinite horizon Markov Decision Process (MDP) and solved optimally, but with an exponential space complexity.
We then formally prove its NP-hardness.
To address the space complexity, we then propose an asymptotically $\epsilon$-suboptimal (i.e., within $\epsilon$ of the optimal solution value) path planner for this problem, using a game-theoretic framework.
We further show how it is possible to solve this problem very quickly by proposing two
sub-optimal but non-myopic approaches.
Our proposed approaches provide a variety of tools that can be suitable for applications with different needs.
A small part of this work has appeared in its conference version [1].
In [1], we only considered the specific scenario of a robot seeking connectivity and only discussed a single suboptimal non-myopic path planner.
This paper has a considerably more extensive analysis and results.

The rest of the paper is organized as follows.
In Section \ref{sec:problem_formulation}, we formally introduce the problem of interest and show how to optimally solve it by formulating it in an infinite horizon MDP framework as a stochastic shortest path (SSP) problem.
As we shall see, however, the state space requirement for this formulation is exponential in the number of nodes in the graph.
In Section \ref{sec:comp_complexity}, we formally prove our problem to be NP-hard, demonstrating that the exponential complexity result of the MDP formulation is not specific to it.
In Section \ref{sec:asmpt_near_opt_planner}, we propose an asymptotically $\epsilon$-suboptimal path planner and in Section \ref{sec:non_myopic_planners} we propose two suboptimal but non-myopic and fast path planners to tackle the problem.
Finally, in Section \ref{sec:numerical_results}, we confirm the efficiency of our approaches with numerical results in two different scenarios.

\section{Problem Formulation}\label{sec:problem_formulation}

In this section, we formally define the problem of interest,
which we refer to as the Min-Exp-Cost-Path problem.
We next show that we can find the optimal solution of Min-Exp-Cost-Path by formulating it as an infinite horizon MDP with an absorbing state, a formulation known in the stochastic dynamic programming literature as the \emph{stochastic shortest path} problem \cite{bertsekas1995dynamic}.
However, we show that this results in a state space requirement that is exponential in the number of nodes of the graph, implying that it is only feasible for small graphs and not scalable when increasing the size of the graph.

\subsection{Min-Exp-Cost-Path Problem}\label{subsec:min_exp_cost_path}
Consider an undirected \RC{connected} finite graph $\mathcal{G} = (\mathcal{V}, \mathcal{E})$, where $\mathcal{V}$ denotes the set of nodes and $\mathcal{E}$ denotes the set of edges.
Let $p_v \in [0,1]$ be the probability of success at node $v \in \mathcal{V}$
and let \RC{$l_{uv} > 0$} denote the cost of traversing edge $(u,v) \in \mathcal{E}$.
We assume that the success or failure of a node is independent of the success or failure of the other nodes in the graph.
Let $v_s \in \mathcal{V}$ denote the starting node.
The objective is to produce a path starting from node $v_s$ that \emph{minimizes the expected cost incurred \RC{until} success}.
In other words, the average cost \RC{until} success on the optimal path is smaller than the average cost on any other possible path on the graph.
Note that the robot may only traverse part of the entire path produced by its planning, as its planning is based on a probabilistic prior knowledge and success may occur at any node along the path.

For the expected cost \RC{until} success of a path to be well defined, the probability of failure after traversing the entire path must be $0$.
This implies that the final node of the path must be one where success is guaranteed, i.e., a $v$ such that $p_{v}=1$.
We call such a node a \emph{terminal} node and let $T=\{v \in \mathcal{V}: p_{v}=1\}$ denote the set of terminal nodes.
We assume that the set $T$ is non-empty in this subsection.
We refer to this as the \emph{Min-Exp-Cost-Path} problem.
Fig. \ref{fig:problem_setup} shows a toy example along with a feasible solution path.
In Section \ref{subsec:min_exp_cost_tour}, we will extend our discussion to the setting when the  the set $T$ is empty.
\AM{Point out that in reality only a part of the path may be traversed by the robot.
Depending on the realization different extents of the path may be covered.}

We next characterize the expected cost for paths where nodes are not revisited, i.e., simple paths, and then generalize it to all possible paths.
Let the path, $\mathcal{P} = (v_1, v_2, \cdots, v_m=v_t)$, be a sequence of $m$ nodes such that no node is revisited, i.e., $v_i \neq v_j,\; \forall i \neq j$,
and which ends at a terminal node $v_t \in T$.
Let $C(\mathcal{P},i)$ represent the expected cost of the path from node $\mathcal{P}[i]=v_i$ onward.
$C(\mathcal{P},1)$ is then given as 
\begin{align*}
C(\mathcal{P},1) & =  p_{v_1}\times 0 + (1-p_{v_1})p_{v_2}l_{v_1v_2} 
+ \cdots \\
& \;\;\;\;\;+ \Bigg[\prod_{j\leq m-1} (1-p_{v_j})\Bigg] p_{v_{m}}(l_{v_1v_2} + \cdots+ l_{v_{m-1}v_{m}})\\
& =  (1-p_{v_1})l_{v_1v_2} + (1-p_{v_1})(1-p_{v_2})l_{v_2v_3} + \cdots\\
& \;\;\;\;\;\;\;\;\;\;\;\;\;\;\;\;\;\;\;\;\;\;\;\;\;\;\;\;\;\;\;\;+ \Bigg[\prod_{j\leq m-1} (1-p_{v_j})\Bigg]l_{v_{m-1}v_{m}}\\
& =  \sum_{i=1}^{m-1} \left[\prod_{j\leq i} (1-p_{v_j})\right] l_{v_{i}v_{i+1}}.
\end{align*}
For a path which contains revisited nodes, the expected cost can then be given by
\begin{align*}
C(\mathcal{P},1) = & \sum_{i=1}^{m-1} \left[\prod_{j\leq i: v_j \neq v_k , \forall k<j} (1-p_{v_j})\right] l_{v_{i}v_{i+1}}\\
= & \sum_{e \in \mathcal{E}(\mathcal{P})} \left[\prod_{v \in \mathcal{V}(\mathcal{P}_{e})} (1-p_{v})\right] l_{e},
\end{align*}
where $\mathcal{E}(\mathcal{P})$ denotes the set of edges belonging to the path $\mathcal{P}$, and $\mathcal{V}(\mathcal{P}_{e})$ denotes the set of vertices encountered along $\mathcal{P}$ \RC{until} the edge $e \in \mathcal{E}(\mathcal{P})$.
Note that $\mathcal{C}(\mathcal{P},i)$ can be expressed recursively as 
\begin{align}\label{eq:min_exp_cost_recursion}
C(\mathcal{P},i) = \left\{\begin{array}{lll} (1-p_{v_i})\left(l_{v_iv_{i+1}} + C(\mathcal{P},i+1)\right),  \\ \;\;\;\;\;\;\;\;\;\;\;\;\;\;\;\;\;\;\;\;\;\;\;\;\;\;\;\;\;\;\;\;\;\;\;\;\;\; \text{if } v_i \neq v_k, \forall k<i \\
l_{v_iv_{i+1}} + C(\mathcal{P},i+1), \;\;\;\; \text{else}\end{array}\right. .
\end{align}
The Min-Exp-Cost-Path optimization can then be expressed as 
\begin{equation}\label{eq:min_exp_cost_path}
\begin{aligned}
& \underset{\mathcal{P}}{\text{minimize}} & & C(\mathcal{P},1)\\
& \text{subject to } & &  \mathcal{P} \text{ is a path of } \mathcal{G} \\
& & & \mathcal{P}[1] = v_s\\
& & & \mathcal{P}[\text{end}] \in T.
\end{aligned}
\end{equation}

\begin{figure}
\centering

\begin{tikzpicture}
  [scale=.6,auto=left,main node/.style={circle,fill=blue!20}]
  \node[main node] (n7) at (7,11) {$7$};
  \node[main node] (n6) at (1,10) {$6$};
  \node[main node] (n4) at (4,8)  {$4$};
  \node[main node] (n5) at (8,9)  {$5$};
  \node[main node] (n1) at (2,6) {$1$};
  \node[main node] (n3) at (9,6)  {$3$};
  \node[main node] (n2) at (6,5)  {$2$};
  
  \node (p1) [below=0.1cm of n1] {$p_{1}<1$};
  \node (p2) [below=0.1cm of n2] {$p_{2}<1$};
  \node (p7) [right=0.1cm of n7] {$p_{7}=1$};
  
  \path (n1) edge [->, thick, bend right =25] (n2);   
   \path (n1) edge [-, dashed] node [above] {$l_{12}$}(n2); 

  \path (n2) edge [->, thick, bend right =25] (n4);  
  \path (n4) edge [->, thick, bend left =25] (n6); 
  \path (n6) edge [->, thick, bend right =25] (n7); 
  
  \foreach \from/\to in {n6/n1,n6/n4,n4/n5,n4/n1,n2/n4,n2/n3,
                         n3/n5,n6/n7,n4/n7,n5/n7}
    \draw[dashed] (\from) -- (\to);

\end{tikzpicture}
\caption{\small{A toy example along with a feasible solution path starting from node $1$.}}
\label{fig:problem_setup}
\end{figure}
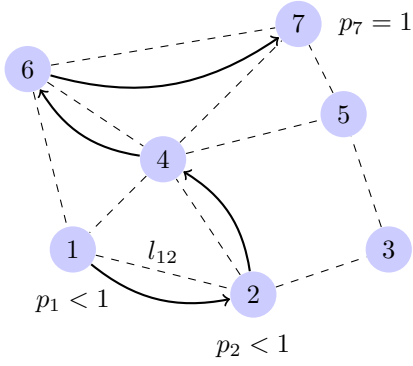

We next show how to optimally solve the Min-Exp-Cost-Path problem by formulating it as an infinite horizon MDP.

\subsection{Optimal Solution via MDP Formulation}\label{subsec:mdp_formulation}
The stochastic shortest path problem (SSP) \cite{bertsekas1995dynamic} is an infinite horizon MDP formulation, which is specified by a state space $S$, control/action constraint sets $A_s$ for $s \in S$, state transition probabilities $P_{ss'}(a_s)= \mathrm{P}\left(s_{k+1}=s'|s_{k}=s, a_k = a_s \right)$, an absorbing terminal state $s_{t} \in S$, and a cost function $g(s,a_s)$ for $s \in S$ and $a_s \in A_s$.
The goal is to obtain a policy that would lead to the terminal state $s_t$ with a probability $1$ and with a minimum expected cost.

We next show that the Min-Exp-Cost-Path problem formulation of (\ref{eq:min_exp_cost_path}) can be posed in an SSP formulation.
Utilizing the recursive expression of (\ref{eq:min_exp_cost_recursion}), we can see that the expected cost from a node, conditioned on the set of nodes already visited by the path can be expressed in terms of the expected cost from the neighboring node that the path visits next.
Thus, the optimal path from a node can be expressed in terms of the optimal path from the neighboring node that the path visits next, conditioned on the set of nodes already visited.
This motivates the use of a stochastic dynamic programming framework where a state is given by the current node as well as the set of nodes already visited.

More precisely, we formulate the SSP as follows.
Let $\mathcal{V}' = \mathcal{V}\setminus T$ be the set of non-terminal nodes in the graph.
A state of the MDP is given by $s=(v,H)$, where $v \in \mathcal{V}'$ is the current node and $H \subseteq \mathcal{V}'$ is the set of nodes already visited (keeping track of the history of the nodes visited), i.e., $u \in H$, if $u$ is visited.
The state space is then given by $S = \left\{(v,H):v \in \mathcal{V}', H \subseteq \mathcal{V}' \right\} \cup \{s_t\}$, where $s_t$ is the absorbing terminal state.
In this setting, the state $s_t$ denotes the state of success.
The actions/controls available at a given state is the set of neighbors of the current node,
i.e., $A_s = \{u \in \mathcal{V}: (v,u) \in \mathcal{E}\}$ for $s=(v,H)$.
The state transition probabilities are denoted by $P_{ss'}(u) = \mathrm{P}\left(s_{k+1}=s'|s_{k}=s, a_k = u \right)$ where $s,s' \in S$ and $u \in A_s$.
Then, for $s=(v,H)$ and $u \in A_s$, if $v \in H$ (i.e., $v$ is revisited), we have 
\begin{align*}
P_{ss'}(u) = \left\{\begin{array}{ll} 1, & \text{if } s'=f(u,H)\\
0, & \text{else}\end{array}\right. ,
\end{align*}
and if $v \notin H$, we have
\begin{align*}
P_{ss'}(u) = \left\{\begin{array}{lll} 1-p_{v}, & \text{if } s'=f(u,H)\\
p_{v}, & \text{if } s'=s_{t}\\
0, & \text{else}\end{array}\right.,
\end{align*}
where $
f(u,H) = \left\{\begin{array}{ll} (u,H\cup\{v\}), & \text{if } u \in \mathcal{V}'\\
s_{t}, & \text{if } u \in T \end{array}\right. $.
This implies that at node $v$, the robot will experience success with probability $p_v$ if $v$ has not been visited before, i.e., $v \notin H$.
The terminal state $s_t$ is absorbing, i.e., $P_{s_{t}s_{t}}(u) = 1, \forall u \in A_{s_t}$.
The cost $g(s,u)$ incurred when action/control $u \in A_{s}$ is taken in state $s\in S$ is given by $
g\left(s=(v,H), u\right) = \left\{\begin{array}{ll} (1-p_{v})l_{uv}, & \text{if } v \notin H\\
l_{uv}, & \text{if } v \in H\end{array} \right.$,
representing the expected cost incurred when going from $v$ to $u$ conditioned on the set of already visited nodes $H$.

The optimal (minimum expected) cost incurred from any state $s_1$ is then given by
\begin{align*}
J^{*}_{s_1} = \min_{\mu} \underset{\{s_{k}\}}{\mathbb{E}}\left[ \sum_{k=1}^{\infty} g(s_k,\mu_{s_k}) \right],
\end{align*}
where $\mu$ is a policy that prescribes what action to take/neighbor to choose at a given state, i.e., $\mu_s$ is the action to take at state $s$.
The policy $\mu$, specifies which node to move to next, i.e., if at state $s$, then $\mu_s$ denotes which node to go to next.
The objective is to find the optimal policy $\mu^{*}$ that would minimize the expected cost from any given state of the SSP formulation.
Given the optimal policy $\mu^{*}$, we can then extract the optimal solution path of (\ref{eq:min_exp_cost_path}).
Let $\left(s_1, \cdots , s_m=s_{t}\right)$ be the sequence of states such that $s_1 = (v_s,H_1=\{\})$ and $s_{k+1}=(v^{*}_{k+1},H_{k+1})$, $k=1,\cdots,m-2$, where $v^{*}_{k+1} = \mu^{*}(s_{k})$ and $H_{k+1}=H_{k} \cup \{v_{k}^{*}\}$.
This sequence must end at $s_m=s_t$ for some finite $m$, since the expected cost is not well defined otherwise.
The optimal path starting from node $v_s$ is then extracted from this solution as 
$
\mathcal{P}^{*} = (v_s, v^{*}_{2}, \cdots, v^{*}_{m})
$.

In the following Lemma, we show that the optimal solution can be characterized by the Bellman equation.
\begin{lemma}\label{lemma:mdp_bellman}
The optimal cost function $J^{*}$ is the unique solution of the Bellman equation:
\begin{align*}
J^{*}_{s} & = \min_{u \in A_{s}} \left[g(s,u) + \sum_{s'\in S\setminus\{s_t\}}P_{ss'}(u)J^{*}_{s'}\right],
\end{align*}
and the optimal policy $\mu^{*}$ is given by 
\begin{align*}
\mu^{*}_{s} = \argmin_{u \in A_{s}} \left[g(s,u) + \sum_{s'\in S\setminus\{s_t\}}P_{ss'}(u)J^{*}_{s'}\right],
\end{align*}
for all $s \in S\setminus\{s_t\}$.
\end{lemma}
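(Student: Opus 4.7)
The plan is to invoke the standard characterization of the optimal cost function for stochastic shortest path problems (Bertsekas, Vol.~II), which states that $J^{*}$ is the unique solution of the Bellman equation and that the argmin on the right-hand side defines an optimal stationary policy, provided that (i) at least one proper stationary policy exists (one under which $s_t$ is reached with probability $1$ from every initial state), and (ii) every improper stationary policy has infinite expected cost from some initial state. Once (i) and (ii) are established, both claims of the lemma follow directly, including the restriction of the sum to $s' \in S \setminus \{s_t\}$, which is justified by the convention $J^{*}_{s_t} = 0$ for the absorbing terminal state.

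To verify (i), I would exploit that $\mathcal{G}$ is connected and that $T \neq \emptyset$. Fix any $v_t \in T$ and, for each $v \in \mathcal{V}'$, fix a simple path $\pi_v$ in $\mathcal{G}$ from $v$ to $v_t$. Consider the stationary policy $\mu$ which, at state $(v,H)$, chooses the next edge of $\pi_v$ irrespective of $H$. Since every $\pi_v$ has length at most $|\mathcal{V}|$, any trajectory under $\mu$ either reaches $s_t$ via a success transition at some unvisited node, or reaches $s_t$ via the deterministic transition forced when the policy steps into a node in $T$, within at most $|\mathcal{V}|$ steps; thus $\mu$ is proper.

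For (ii), I would use that $S \setminus \{s_t\}$ is finite and that every one-step cost is bounded below by a strictly positive constant. Indeed, since $p_v < 1$ for all $v \in \mathcal{V}'$ and $l_{uv} > 0$ for every $(u,v) \in \mathcal{E}$, we have $g(s,u) \geq c_{\min} := (1 - \max_{v \in \mathcal{V}'} p_v)\,\min_{e \in \mathcal{E}} l_e > 0$ for all admissible state-action pairs. If a stationary $\mu'$ is improper, then from some state $s_0$ the probability of never entering $s_t$ is strictly positive; on this event infinitely many transitions are incurred, each contributing at least $c_{\min}$, so $J^{\mu'}_{s_0} = \infty$.

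With (i) and (ii) in hand, the standard SSP theorem yields both the Bellman fixed-point characterization of $J^{*}$ and the argmin characterization of $\mu^{*}$, finishing the proof. The principal obstacle is really only the careful verification of (ii); it rests on the two strict-positivity assumptions baked into the formulation, namely $p_v < 1$ for non-terminal nodes and $l_{uv} > 0$ on every edge. If either were allowed to degenerate, an improper policy could in principle accumulate finite expected cost, and the Bellman characterization would require a separate argument rather than a direct appeal to the standard SSP machinery.
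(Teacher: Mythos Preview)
Your proposal is correct and follows essentially the same approach as the paper: both verify the two standard SSP hypotheses (existence of a proper policy via a ``head toward a terminal node regardless of history'' policy, and infinite cost of improper policies via strict positivity of all stage costs) and then invoke Bertsekas. Your argument for (ii) is a bit more explicit in exhibiting the uniform lower bound $c_{\min}$, whereas the paper simply notes that $g(s,u)>0$ forces every non-terminal cycle to have positive cost; the substance is the same.
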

\begin{proof}
\RC{
Let $J_{s}^{\mu}$ denote the cost of state $s$ for a policy $\mu$.
We first review the definition of a \emph{proper policy}.
A policy $\mu$ is said to be proper if, when using this policy, there is a positive probability that the terminal state will be reached after at most $|S|$ stages, regardless of the initial state \cite{bertsekas1995dynamic}.
We next show that the MDP formulation satisfies the following properties: 
1) there exists at least one proper policy, and 
2) for every improper policy $\mu$, there exists at least one state with cost $J_{s}^{\mu} = \infty$.
We know that there exists at least one proper policy since the policy corresponding to taking the shortest path to the nearest terminal node, irrespective of the history of nodes visited, is a proper policy.
Moreover, since $g(s,u)> 0 $ for all $s\neq s_t$, every cycle in the state space not including the destination has strictly positive cost.
This implies property $2$ is true.}
The proof is then provided in \cite{bertsekas1995dynamic}.
\end{proof}

The optimal solution can then be found by the value iteration method.
Given an initialization $J_s(0)$, for all $s \in S\setminus\{s_t\}$, value iteration produces the sequence:
\begin{align*}
J_s(k+1) = \min_{u \in A_{s}} \left[g(s,u) + \sum_{s'\in S\setminus\{s_t\}}P_{ss'}(u)J_{s'}(k)\right],
\end{align*}
for all $s \in S\setminus\{s_t\}$.
This sequence converges to the optimal cost $J^{*}_{s}$, for each $s \in S\setminus\{s_t\}$.

\RC{
\begin{lemma}\label{lemma:value_iteration_MDP}
When starting from $J_s(0) = \infty$ for all $s \in S\setminus \{s_t\}$, the value iteration method yields the optimal solution after at most $|S|=|\mathcal{V}'|\times 2^{|\mathcal{V}'|}+1$ iterations.
\end{lemma}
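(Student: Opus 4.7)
The plan is to interpret each value iteration iterate $J_s(k)$ as a constrained optimum and then bound the trajectory length induced by the optimal policy. For the first step, I would show by induction on $k$ that, with the initialization $J_s(0) = \infty$ for $s \neq s_t$ and $J_{s_t}(0) = 0$, the iterate $J_s(k)$ equals the minimum expected cost over all (possibly non-stationary) policies whose induced trajectory, starting from $s$, reaches $s_t$ with probability one within $k$ stages. The base case is immediate. In the inductive step, the quantity $\sum_{s'} P_{ss'}(u) J_{s'}(k)$ is finite if and only if every next state $s'$ reachable from $s$ under action $u$ can itself reach $s_t$ within $k$ further stages; minimizing over $u$ then yields exactly the $(k{+}1)$-stage constrained optimum.

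For the second step, I would show that under the optimal stationary policy $\mu^{*}$ of Lemma~\ref{lemma:mdp_bellman}, the trajectory starting from any state reaches $s_t$ within at most $|S|-1$ transitions. Because $\mu^{*}$ is deterministic, the ``non-terminal branch'' obtained by always taking the non-$s_t$ transition whenever one is available is a uniquely determined sequence of non-terminal states. If this branch had length exceeding $|S|-1$, by pigeonhole it would revisit some non-terminal state and thus enter a cycle. Such a cycle cannot contain any new-visit state $(v,H)$ with $v \notin H$, since the non-terminal transition from such a state strictly enlarges the history to $H \cup \{v\}$, and histories never shrink under the MDP dynamics. Hence the cycle would consist entirely of revisit states, from which the MDP transitions deterministically and never to $s_t$; this would trap the random trajectory in the cycle with probability one, making $J^{\mu^{*}}_{s} = \infty$ and contradicting the finiteness of $J^{*}_{s}$ guaranteed by the existence of a proper policy.

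Combining the two pieces, $\mu^{*}$ is feasible for the $k$-stage constrained problem whenever $k \geq |S|-1$, so $J_s(k) \leq J^{*}_{s}$; and any feasible $k$-stage policy is also a valid policy for the original SSP, so $J_s(k) \geq J^{*}_{s}$. Hence $J_s(k) = J^{*}_{s}$ for all $k \geq |S|-1$, and in particular after $|S| = |\mathcal{V}'| \cdot 2^{|\mathcal{V}'|} + 1$ iterations. The main obstacle I anticipate is cleanly handling the $\infty$-valued initialization in the induction of the first step; the trajectory-length bound of the second step then reduces to a transparent pigeonhole-plus-monotonicity argument on histories, and the final sandwich inequalities close the proof.
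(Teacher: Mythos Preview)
Your proposal is correct and follows essentially the same route as the paper. The paper's proof also hinges on showing that the transition graph induced by $\mu^{*}$ is acyclic, via exactly your argument: any cycle could only contain revisit states $(v,H)$ with $v\in H$ (since histories are monotone), and such a cycle never reaches $s_t$, contradicting optimality. The only difference is packaging: where the paper then cites \cite{bertsekas1995dynamic} for the fact that value iteration on an SSP whose optimal policy is acyclic converges in $|S|$ iterations, you unfold that citation into the explicit finite-horizon interpretation of $J_s(k)$ under the $\infty$ initialization and close with the sandwich $J^{*}_s \le J_s(k) \le J^{*}_s$. This makes your argument more self-contained, at the cost of a little extra bookkeeping in the induction; the acyclicity step is identical in both.
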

\begin{proof}
Let $\mu^{*}$ be the optimal policy.
Consider a directed graph with the states of the MDP as nodes, which has an edge $(s,s')$ if $P_{ss'}(\mu^{*}_s) > 0$.
We will first show that this graph is acyclic.
Note that a state $s=(v,H)$, where $v\notin H$, can never be revisited regardless of the policy used, since a transition from $s$ will occur either to $s_t$ or a state with $H = H\cup \{v\}$.
Then, any cycle in the directed graph corresponding to $\mu^{*}$ would only have states of the form $s = (v, H)$ with $v \in H$.
Moreover, any state $s=(v,H)$ in the cycle cannot have a transition to state $s_t$ since $v \in H$.
Thus, if there is a cycle, the cost of any state $s$ in the cycle will be $J_s^{\mu^{*}} = \infty$, which results in a contradiction.
The value iteration method converges in $|S|$ iterations when the graph corresponding to the optimal policy $\mu^{*}$ is acyclic \cite{bertsekas1995dynamic}.
\end{proof}
}

\RC{
\begin{remark}
Each stage of the value iteration process has a computational cost of $O(|\mathcal{E}|2^{|\mathcal{V}'|})$ since for each state $s=(v,H)$ there is an associated computational cost of $O(|A_v|)$.
Then, from Lemma \ref{lemma:value_iteration_MDP}, we can see that the overall computational cost of value iteration is $O(|\mathcal{V}'||\mathcal{E}|2^{2|\mathcal{V}'|})$, which is exponential in the number of nodes in the graph.
Note, however, that the brute force approach of enumerating all paths has a much larger computational cost of $O(|\mathcal{V}'|!)$.
\end{remark}
}

\RC{
The exponential space complexity prevents the stochastic shortest path formulation from providing a scalable solution for solving the problem for larger graphs.}
A general question then arises as to whether this high computational complexity result is a result of the Markov Decision Process formulation.
In other words, can we optimally solve the Min-Exp-Cost-Path problem with a low computational complexity using an alternate method?
We next show that the Min-Exp-Cost-Path problem is inherently computationally complex (NP-hard).

\section{Computational Complexity}\label{sec:comp_complexity}
In this section, we prove that Min-Exp-Cost-Path is NP-hard.
In order to do so, we first consider the extension of the Min-Exp-Cost-Path problem to the setting where there is no terminal node, which we refer to as the Min-Exp-Cost-Path-NT problem (Min-Exp-Cost-Path No Terminal node).
We prove that Min-Exp-Cost-Path-NT is NP-hard, a result we then utilize to prove that Min-Exp-Cost-Path is NP-hard.

Motivated by the negative space complexity result of our MDP formulation, we then discuss a setting where we restrict ourselves to the class of \emph{simple paths}, i.e., cycle free paths, and we refer to the minimum expected cost \RC{until} success problem in this setting as the Min-Exp-Cost-Simple-Path problem.
This serves as the setting for our path planning approaches of Section \ref{sec:asmpt_near_opt_planner} and \ref{sec:non_myopic_planners}.
Furthermore, we show that we can obtain a solution to the Min-Exp-Cost-Path problem from a solution of the Min-Exp-Cost-Simple-Path problem in an appropriately defined complete graph.

\subsection{Min-Exp-Cost-Path-NT Problem}\label{subsec:min_exp_cost_tour}
Consider the graph-theoretic setup of the Min-Exp-Cost-Path problem of Section \ref{subsec:min_exp_cost_path}.
In this subsection, we assume that there is no terminal node, i.e., the set $T = \{v \in \mathcal{V}: p_{v}=1\}$ is empty.
There is thus a finite probability of failure for any path in the graph and as a result the expected cost \RC{until} success is not well defined.
The expected cost of a path then includes the event of failure after traversing the entire path and its associated cost.
The objective in \emph{Min-Exp-Cost-Path-NT} is to obtain a path that visits all the vertices with a non-zero probability of success, i.e., $\{v \in \mathcal{V}: p_{v} > 0\}$, such that the expected cost is minimized.
This objective finds the minimum expected cost path among all paths that have a minimum probability of failure.
More formally, the objective for Min-Exp-Cost-Path-NT is given as 
\begin{equation}\label{eq:min_exp_cost_tour}
\begin{aligned}
& \underset{\mathcal{P}}{\text{minimize}} & & \sum_{e \in \mathcal{E}(\mathcal{P})} \left[\prod_{v \in \mathcal{V}(\mathcal{P}_{e})} (1-p_{v})\right] l_{e}\\
& \text{subject to } & &  \mathcal{P} \text{ is a path of } \mathcal{G} \\
& & & \mathcal{P}[1] = v_s\\
& & & \mathcal{V}(\mathcal{P}) = \{v \in \mathcal{V}: p_{v} >0\},
\end{aligned}
\end{equation}
where $\mathcal{V}(\mathcal{P})$ is the set of all vertices in path $\mathcal{P}$.

\begin{remark}
The Min-Exp-Cost-Path-NT problem is an important problem on its own (to address cases where no prior knowledge is available on nodes with $p_v=1$), even though we have primarily introduced it here to help prove that the Min-Exp-Cost-Path problem is NP-hard.
\end{remark}

\subsection{NP-hardness}
In order to establish that Min-Exp-Cost-Path is NP-hard, we first introduce the decision versions of Min-Exp-Cost-Path (MECPD) and Min-Exp-Cost-Path-NT (MECPNTD).

\begin{definition}[Min-Exp-Cost-Path Decision Problem]
Given a graph $\mathcal{G}=(\mathcal{V},\mathcal{E})$ with starting node $v_{s} \in \mathcal{V}$, edge weights $l_{e}$, $\forall e \in \mathcal{E}$, probability of success $p_{v} \in [0,1]$, $\forall v \in \mathcal{V}$, such that $T \neq \emptyset$, and budget $B_{\text{MECP}}$, does there exist a path $\mathcal{P}$ from $v_{s}$ such that the expected cost of the path $\mathcal{C}(\mathcal{P},1) \leq B_{\text{MECP}}$?
\end{definition}

\begin{definition}[Min-Exp-Cost-Path-NT Decision Problem]
Given a graph $\mathcal{G}=(\mathcal{V},\mathcal{E})$ with starting node $v_{s} \in \mathcal{V}$, edge weights $l_{e}, \forall e \in \mathcal{E}$, probability of success $p_{v} \in [0,1), \forall v \in \mathcal{V}$ and budget $B_{\text{MECPNT}}$,  does there exist a path $\mathcal{P}$ from $v_{s}$ that visits all nodes in $\{v \in \mathcal{V}: p_v>0\}$  such that  $\sum_{e \in \mathcal{E}(\mathcal{P})} \left[\prod_{v \in \mathcal{V}(\mathcal{P}_{e})} (1-p_{v})\right] l_{e} \leq B_{\text{MECPNT}}$?
\end{definition}

In the following Lemma, we first show that we can reduce MECPNTD to MECPD.
This implies that if we have a solver for MECPD, we can use it to solve MECPNTD as well.

\begin{lemma}\label{lemma:mecpntd_red_mecpd}
Min-Exp-Cost-Path-NT Decision problem reduces to Min-Exp-Cost-Path Decision problem.
\end{lemma}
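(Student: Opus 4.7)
The plan is to exhibit a polynomial-time mapping that sends every MECPNTD instance to a MECPD instance with the same yes/no answer, by augmenting the input graph with a single freshly-added terminal node reachable from every vertex through a long, heavily-weighted edge. Given a MECPNTD instance $(\mathcal{G}=(\mathcal{V},\mathcal{E}),v_s,\{l_e\},\{p_v\},B_{\text{MECPNT}})$ with $p_v<1$ for all $v$, I would form $\mathcal{G}'=(\mathcal{V}\cup\{t\},\mathcal{E}\cup\{(v,t):v\in\mathcal{V}\})$, set $p_t=1$ while leaving all other probabilities unchanged, keep the original edge weights, and attach a common weight $L$ (to be chosen later) to each of the new edges $(v,t)$. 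Writing $V^+=\{v\in\mathcal{V}:p_v>0\}$, the MECPD budget is set to $B_{\text{MECP}}=B_{\text{MECPNT}}+L\prod_{v\in V^+}(1-p_v)$.

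The observation underlying the equivalence is a cost decomposition. For any path $\mathcal{P}'=(v_s=v_1,\ldots,v_k,t)$ in $\mathcal{G}'$, the expected cost $C(\mathcal{P}',1)$ splits cleanly into (i) the sum over the original $\mathcal{E}$-edges, which coincides with the MECPNTD objective evaluated on the prefix $\mathcal{P}=(v_1,\ldots,v_k)$ in $\mathcal{G}$, and (ii) the single final-edge term $L\prod_{v\in V^+\cap\{v_1,\ldots,v_k\}}(1-p_v)$, since only $V^+$ nodes contribute factors strictly less than $1$ to the product. Thus the MECPD cost of $\mathcal{P}'$ equals the MECPNTD cost of $\mathcal{P}$ plus this $L$-term, and the $L$-term attains its minimum value $L\prod_{v\in V^+}(1-p_v)$ precisely when $\mathcal{P}$ covers $V^+$.

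I would then use this decomposition to establish both directions of the yes/no equivalence. Forward direction: any MECPNTD-feasible path $\mathcal{P}$ of cost at most $B_{\text{MECPNT}}$ visits $V^+$ by definition, so appending the edge $(v_k,t)$ produces a MECPD-feasible path of cost at most $B_{\text{MECP}}$. Backward direction: if some MECPD path of cost at most $B_{\text{MECP}}$ missed a node $v^*\in V^+$, the final-edge term alone would be at least $L\prod_{v\in V^+}(1-p_v)/(1-p_{v^*})$, which exceeds $B_{\text{MECP}}$ whenever $L>B_{\text{MECPNT}}(1-p_{v^*})/\bigl(p_{v^*}\prod_{v\in V^+}(1-p_v)\bigr)$. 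Choosing $L$ to dominate this bound uniformly over $v^*\in V^+$ therefore forces every feasible MECPD path to cover $V^+$; stripping its last edge then yields a MECPNTD-feasible path of cost at most $B_{\text{MECP}}-L\prod_{v\in V^+}(1-p_v)=B_{\text{MECPNT}}$.

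The main obstacle I anticipate is the selection of $L$: it must be large enough to guarantee $V^+$-coverage in the backward argument, yet representable with polynomially many bits so the overall reduction runs in polynomial time. For rational inputs this is not difficult, since the displayed threshold involves only products and ratios of the input rationals, but care is needed to confirm that $B_{\text{MECP}}$ and every new edge weight admit a succinct representation. Once these bookkeeping details are checked, the reduction itself is straightforward, and the remaining steps are direct algebraic manipulations of the cost expression derived in Section \ref{subsec:min_exp_cost_path}.
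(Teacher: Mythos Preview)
Your proposal is correct and follows essentially the same construction as the paper: add a single terminal node $t$ with $p_t=1$, connect it to every original vertex by a uniformly heavy edge, and set $B_{\text{MECP}}=B_{\text{MECPNT}}+L\prod_{v\in V^+}(1-p_v)$. The only notable difference is in how the edge weight is calibrated and how coverage of $V^+$ is argued. The paper picks $l=1.5D/\min_{v\in V^+}p_v$ (depending on the graph diameter but not on the budget) and shows, by a detour-improvement contradiction, that the \emph{optimal} MECPD path must visit every node of $V^+$; equivalence of the decision problems then follows from equality of the optima. You instead choose $L$ as a function of $B_{\text{MECPNT}}$ and argue directly that \emph{any} MECPD path within budget must cover $V^+$, because otherwise the final-edge term alone already exceeds $B_{\text{MECP}}$. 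Both routes are valid; yours avoids reasoning about shortest-path detours, while the paper's choice of $l$ has the minor advantage of being budget-independent (the same augmented graph works for every threshold).
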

\begin{proof}
Consider a general instance of MECPNTD with graph $\mathcal{G}=(\mathcal{V},\mathcal{E})$, starting node $v_{s} \in \mathcal{V}$, edge weights $l_{e}, \forall e \in \mathcal{E}$, probability of success $p_{v} \in [0,1), \forall v \in \mathcal{V}$, and budget $B_{\text{MECPNT}}$.
We create an instance of MECPD by introducing a new node $v_t$ into the graph with $p_{v_t}=1$.
We add edges of cost $l$ between $v_t$ and all the existing nodes of the graph.
We next show that if we choose a large enough value for $l$, then the Min-Exp-Cost-Path solution would visit all nodes in $\bar{\mathcal{V}} = \{v \in \mathcal{V}: p_v>0\}$ before moving to the terminal node $v_t$.
Let $l = 1.5D/\min_{v \in \bar{\mathcal{V}}}p_{v}$, where $D$ is the diameter of the graph.
Then, the Min-Exp-Cost-Path solution, which we denote by $\mathcal{P}^{*}$ must visit all nodes in $\bar{\mathcal{V}}$ before moving to node $v_t$.
We show this by contradiction.
Assume that this is not the case.
Since $\mathcal{P}^{*}$ has not visited all nodes in $\bar{\mathcal{V}}$, there exists a node $w \in \bar{\mathcal{V}}$ that does not belong to $\mathcal{P}^{*}$.
Let $\mathcal{Q}^{*}$ be the subpath of $\mathcal{P}^{*}$ that lies in the original graph $\mathcal{G}$ and let $u$ be the last node in $\mathcal{Q}^{*}$.
Consider the path $\mathcal{P}$ created by stitching together the path $\mathcal{Q}^{*}$, followed by the shortest path from $u$ to $w$ and then finally the terminal node $v_t$.
Let $p_{f} = \prod_{v \in \mathcal{V}(\mathcal{Q}^{*})} (1-p_{v})$ be the probability of failure after traversing path $\mathcal{Q}^{*}$.
The expected cost of path $\mathcal{P}$ then satisfies
\begin{align*}
\mathcal{C}(\mathcal{P},1) & \leq \sum_{e \in \mathcal{E}(\mathcal{Q}^{*})} \Bigg[\prod_{v \in \mathcal{V}(\mathcal{Q}^{*}_{e})} (1-p_{v})\Bigg] l_{e}  + \\
& \;\;\;\;\;\;\;\;\;\;\;\;\;\;\;\;\;\;\;\;\;\;\;\;\;\;\;\;\;\;\;\;\;\;\;\;\;\;\;\;\;\;\;\; p_{f} \left(l_{uw}^{\text{min}} + (1-p_{w})l\right)\\
& < \sum_{e \in \mathcal{E}(\mathcal{Q}^{*})} \Bigg[\prod_{v \in \mathcal{V}(\mathcal{Q}^{*}_{e})} (1-p_{v})\Bigg] l_{e}
 + p_{f}  l = \mathcal{C}(\mathcal{P}^{*},1),
\end{align*}
where $l_{uw}^{\text{min}}$ is the cost of the shortest path between $u$ and $w$.
We thus have a contradiction.

Thus, $\mathcal{Q}^{*}$ visits all the nodes in $\bar{\mathcal{V}}$.
Moreover, since $\mathcal{P}^{*}$ is a solution of Min-Exp-Cost-Path, we can see that $\mathcal{Q}^{*}$ must also be a solution of Min-Exp-Cost-Path-NT.
Thus, setting a budget of $B_{\text{MECP}} = B_{\text{MECPNT}} + p_{f}l$, where $p_{f} = \prod_{v \in \mathcal{V}(\mathcal{Q}^{*})} (1-p_{v}) = \prod_{v \in \bar{\mathcal{V}}} (1-p_{v})$, implies that the general instance of MECPNTD is satisfied if and only if our instance of MECPD is satisfied.
\end{proof}

\begin{remark}
Even though we utilize the above Lemma primarily to analyze the computational complexity of the problems, we will also utilize the construction provided for path planners for Min-Exp-Cost-Path-NT in Section \ref{sec:numerical_results}.
\end{remark}

We next show that MECPNTD is \RC{NP-complete (NP-hard and in NP)}, which together with Lemma \ref{lemma:mecpntd_red_mecpd}, implies that MECPD is NP-hard.

\begin{theorem}\label{theorem:mecpntd_np_hard}
Min-Exp-Cost-Path-NT Decision problem is \RC{NP-complete}.
\end{theorem}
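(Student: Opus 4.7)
The plan is to establish NP-completeness in the usual two steps: membership in NP, followed by a reduction from a known NP-complete problem. The natural choice for the latter is the Hamiltonian Path problem with a designated starting vertex, since the constraint in MECPNTD requires visiting every node with positive success probability, which mimics the coverage requirement in Hamiltonian Path.

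For NP membership, the certificate will be the path itself, together with a verification procedure that computes the expected cost via the closed-form expression from Section \ref{subsec:min_exp_cost_path}. The only subtlety is bounding the path length: I would argue that we never need paths longer than $O(|\mathcal{V}|^2)$ edges, because after all nodes in $\{v : p_v > 0\}$ have been visited the failure probability is frozen and any further edge strictly increases the cost, and between consecutive visits to ``new'' nodes it suffices to use a shortest path of length at most $|\mathcal{V}| - 1$. So a polynomial-size certificate exists whenever the instance is a yes-instance, and verification is clearly polynomial.

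For NP-hardness, given an instance of Hamiltonian Path (graph $G=(\mathcal{V},\mathcal{E})$, starting node $v_s$, with $n = |\mathcal{V}|$), I would construct an MECPNTD instance on the same graph with the same starting vertex, setting $p_v = 1/2$ for all $v \in \mathcal{V}$, $l_e = 1$ for all $e \in \mathcal{E}$, and budget $B = \sum_{i=1}^{n-1}(1/2)^i = 1 - (1/2)^{n-1}$. This construction is clearly polynomial. The forward direction is immediate: a Hamiltonian path from $v_s$ visits the $n$ nodes in sequence, using $n-1$ edges, and its expected cost is exactly $\sum_{i=1}^{n-1}(1/2)^i = B$, so MECPNTD is satisfied. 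The reverse direction requires showing that any path $\mathcal{P}$ visiting all $n$ nodes with at least $n$ edges has cost strictly greater than $B$.

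The main obstacle is this last step. I would parametrize by letting $k_i$ denote the number of distinct vertices among the first $i$ visited positions, so the expected cost of an $m$-edge path equals $\sum_{i=1}^{m}(1/2)^{k_i}$. The sequence $(k_i)$ satisfies $k_1 = 1$, is non-decreasing with unit increments ($k_{i+1} - k_i \in \{0,1\}$), is bounded by $k_i \leq \min(i, n)$, and must attain value $n$ by position $m+1$. Because $(1/2)^{x}$ is decreasing in $x$, the sum is minimized by taking $k_i$ as large as possible, i.e., $k_i = \min(i, n-1)$ for $i \leq m$ in the case when the $n$-th distinct node appears only at position $m+1$, or $k_i = \min(i,n)$ when it appears earlier. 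A short case split (on whether $k_m = n-1$ or $k_m = n$) shows that in every such case the cost exceeds $B$ by at least $(1/2)^{n}$, since a non-Hamiltonian traversal either incurs an extra ``plateau'' term of the form $(1/2)^{j-1}$ for some $j \leq n$ or an extra terminal term $(1/2)^n$. Therefore a path meets the budget $B$ if and only if it is a Hamiltonian path from $v_s$, completing the reduction.
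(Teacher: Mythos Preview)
Your proof is correct and follows essentially the same approach as the paper: both reduce from (rooted) Hamiltonian Path by assigning a uniform success probability to every node and setting the budget equal to the expected cost of a Hamiltonian path, so that any non-Hamiltonian traversal (which must use at least $n$ edges) strictly exceeds the budget. The only notable difference is in the construction: the paper embeds $G$ into a complete graph with edge weights $1$ on original edges and $2$ on added edges, whereas you work directly on $G$ and rely on the fact that MECPNTD permits revisited nodes; your variant is arguably cleaner, and your bound $\sum_{i=1}^{m}(1/2)^{k_i}\ge\sum_{i=1}^{n}(1/2)^{i}=1-(1/2)^{n}>B$ for $m\ge n$ makes the reverse direction explicit where the paper's argument is terse. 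Your NP-membership argument is also more careful than the paper's one-line claim, correctly bounding the certificate length by $O(|\mathcal{V}|^2)$.
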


\begin{proof}
\RC{Clearly MECPNTD is in NP, since given a path we can compute its associated expected cost in polynomial time.}
We \RC{next} show that MECPNTD is NP-hard using a reduction from a rooted version of the NP-hard Hamiltonian path problem \cite{garey2002computers}.
\AM{Does a rooted Hamiltonian path problem exist in literature?
Maybe we could instead add a node $v_s$ with zero edge weights to all other nodes.}
Consider an instance of the Hamiltonian path problem $\mathcal{G}=(\mathcal{V}, \mathcal{E})$,
where the objective is to determine if there exists a path originating from $v_s$ that visits each vertex only once.
We create an instance of MECPNTD by setting the probability of success to a non-zero constant for all nodes, i.e., $p_v = p>0$, $\forall v \in \mathcal{V}$.
We create a complete graph and set edge weights as 
$
l_{e} = \left\{\begin{array}{ll} 
1, & \text{if } e \in \mathcal{E} \\
2, & \text{else}  
\end{array}\right..
$

A Hamiltonian path $\mathcal{P}$ on $\mathcal{G}$, if it exists, would have an expected distance cost of
\begin{align*}
\sum_{e \in \mathcal{E}(\mathcal{P})} \left[\prod_{v \in \mathcal{V}(\mathcal{P}_{e})} (1-p_{v})\right] l_{e} 
& = \frac{1-p}{p}\left(1-(1-p)^{|\mathcal{V}|-1}\right).
\end{align*}
Any path on the complete graph that is not Hamiltonian on $\mathcal{G}$, would involve either more edges or an edge with a larger cost than $1$ and would thus have a cost strictly greater than that of $\mathcal{P}$.
Thus, by setting $B_{\text{MECPNT}} = \frac{1-p}{p}\left(1-(1-p)^{|\mathcal{V}|-1}\right)$, there exists a Hamiltonian path if and only if the specific MECPNTD instance created is satisfied.
Thus, the general MECPNTD problem is at least as hard as the Hamiltonian path problem.
Since the Hamiltonian path problem is NP-hard, this implies that MECPNTD is NP-hard.
\end{proof}

\begin{corollary}
Min-Exp-Cost-Path Decision problem is \RC{NP-complete}.
\end{corollary}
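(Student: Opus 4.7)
The plan is to observe that the corollary follows almost immediately by combining the two preceding results, so the proposal is mainly about assembling them cleanly. First I would argue MECPD is in NP: given a candidate path $\mathcal{P}$ on the graph, the expected cost $C(\mathcal{P},1)$ can be evaluated in polynomial time using the closed-form expression
\begin{equation*}
C(\mathcal{P},1) = \sum_{e \in \mathcal{E}(\mathcal{P})} \left[\prod_{v \in \mathcal{V}(\mathcal{P}_{e})} (1-p_{v})\right] l_{e},
\end{equation*}
and then compared against the budget $B_{\text{MECP}}$. One subtlety to note is that a valid certificate path may in principle contain revisited vertices, but the MDP formulation of Section \ref{subsec:mdp_formulation} (together with Lemma \ref{lemma:value_iteration_MDP}) guarantees that an optimal solution visits each non-terminal node at most once, so we may restrict certificates to simple paths of length at most $|\mathcal{V}|$, which are trivially polynomial in the input size.

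For NP-hardness, I would simply chain the two prior reductions. Theorem \ref{theorem:mecpntd_np_hard} establishes that MECPNTD is NP-hard via a reduction from the rooted Hamiltonian path problem, and Lemma \ref{lemma:mecpntd_red_mecpd} shows that MECPNTD reduces to MECPD. It remains only to verify that the reduction in Lemma \ref{lemma:mecpntd_red_mecpd} is polynomial time: the construction adds a single terminal node $v_t$ together with $|\mathcal{V}|$ new edges of common weight $l = 1.5D/\min_{v \in \bar{\mathcal{V}}} p_v$, and sets $B_{\text{MECP}} = B_{\text{MECPNT}} + p_f l$ with $p_f = \prod_{v \in \bar{\mathcal{V}}}(1-p_v)$. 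Since the diameter $D$ and the quantities $p_f$, $\min p_v$ can all be computed in polynomial time from the input, and the resulting numerical values have polynomial bit-length in the input, the reduction is indeed polynomial.

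Combining these two ingredients, MECPD is in NP and is at least as hard as the NP-hard problem MECPNTD, hence MECPD is NP-complete. The main obstacle, such as it is, is purely bookkeeping: making sure that the certificate for NP membership is of polynomial size (which required invoking the structural observation from the MDP analysis that revisiting nodes is never strictly beneficial for an optimal solution) and that the reduction from Lemma \ref{lemma:mecpntd_red_mecpd} is polynomial rather than merely computable. No new combinatorial argument is needed beyond citing the earlier results.
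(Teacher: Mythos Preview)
Your overall structure matches the paper's brief proof exactly: verify membership in NP, then invoke Lemma~\ref{lemma:mecpntd_red_mecpd} (together with Theorem~\ref{theorem:mecpntd_np_hard}) for hardness. The additional bookkeeping you supply about the reduction being polynomial-time is correct and is simply more detail than the paper bothers to write.

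There is, however, a factual error in your NP-membership argument. You assert that Lemma~\ref{lemma:value_iteration_MDP} ``guarantees that an optimal solution visits each non-terminal node at most once,'' and therefore certificates may be taken to be simple paths. This is false: the paper itself exhibits in Fig.~\ref{fig:counter_example} an instance whose optimal path is $(2,1,2,3,4)$, which revisits node~$2$. Lemma~\ref{lemma:value_iteration_MDP} only shows that the \emph{state-space} graph of the optimal policy is acyclic, where a state is a pair $(v,H)$; it does not preclude revisiting a node $v$ with a larger history set $H$.

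The conclusion you want still holds, but for a different reason. One clean route is to invoke Lemma~\ref{lemma:simple_path_complete_graph}: an optimal Min-Exp-Cost-Path on $\mathcal{G}$ corresponds to a simple path on $\mathcal{G}_{\text{comp}}$, which has at most $|\mathcal{V}|$ vertices; expanding each complete-graph edge back to a shortest path in $\mathcal{G}$ yields a certificate of length $O(|\mathcal{V}|^{2})$. Alternatively, one can argue directly from the acyclicity in Lemma~\ref{lemma:value_iteration_MDP}: along the optimal trajectory the history set $H$ is monotone, and between two consecutive enlargements of $H$ at most $|H|$ distinct revisited states can occur, giving a total path length of $O(|\mathcal{V}'|^{2})$. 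Either argument supplies the missing polynomial bound on the certificate size; your appeal to simple paths does not.
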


\begin{proof}
\RC{We can see that MECPD is in NP.
The proof of NP-hardness  follows directly} from Lemma \ref{lemma:mecpntd_red_mecpd}.
\RC{MECPD is thus NP-complete.}
\end{proof}

\subsection{Min-Exp-Cost-Simple-Path}\label{subsec:min_exp_cost_simple_path}
We now propose ways to tackle the prohibitive computational complexity (space complexity) of our MDP formulation of Section \ref{subsec:mdp_formulation}, which possesses a state space of size exponential in the number of nodes in the graph.
If we can restrict ourselves to paths that do not revisit nodes, known as \emph{simple paths} (i.e., cycle free paths), then the expected cost from a node could be expressed in terms of the expected cost from the neighboring node that the path visits next.\footnote{
Note that depending on how we impose a simple path, we may need to keep track of the visited nodes.
However, as we shall see, this keeping track of the history will not result in an exponential memory requirement, as was the case for the original MDP formulation.
We further note that it is also possible to impose simple paths without a need to keep track of the history of the visited nodes, as we shall see in Section \ref{subsec:idag}.}
We refer to this problem of minimizing the expected cost, while restricted to the space of simple paths, as the \emph{Min-Exp-Cost-Simple-Path} problem.
The Min-Exp-Cost-Simple-Path problem is also computationally hard as shown in the following Lemma.

\begin{lemma}\label{lemma:mecspd_np_hard}
The decision version of Min-Exp-Cost-Simple-Path is NP-hard.
\end{lemma}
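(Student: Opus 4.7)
The plan is to reduce from the rooted Hamiltonian path problem, mirroring the construction used in the proof of Theorem \ref{theorem:mecpntd_np_hard} but adapted so that a terminal node exists and the candidate solutions are automatically simple paths. Given an arbitrary rooted Hamiltonian path instance $(\mathcal{G} = (\mathcal{V}, \mathcal{E}), v_s)$, I would build an MECSPD instance on the vertex set $\mathcal{V} \cup \{v_t\}$, with $p_{v_t} = 1$ and $p_v = p$ for some fixed $p \in (0,1)$ on the original nodes. I would complete the graph among the original nodes with edge weight $1$ for edges in $\mathcal{E}$ and $2$ for non-$\mathcal{E}$ edges, and add an edge of weight $l$ from every $v \in \mathcal{V}$ to $v_t$, with $l$ chosen large enough to force the optimal simple path to traverse all of $\mathcal{V}$ before hopping to $v_t$.

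The next step is to analyze the cost of simple paths in this construction. Any simple path from $v_s$ to $v_t$ has the form $(v_s = u_1, u_2, \ldots, u_k, v_t)$ with $k \le |\mathcal{V}|$ and distinct $u_i$, and its expected cost evaluates to $\sum_{i=1}^{k-1} (1-p)^i l_{u_i u_{i+1}} + (1-p)^k l$. The candidate Hamiltonian-path solution, if it exists, gives cost $\frac{(1-p)(1-(1-p)^{|\mathcal{V}|-1})}{p} + (1-p)^{|\mathcal{V}|} l$. The main calculation is to pick $l$ so that every simple path with $k < |\mathcal{V}|$ is strictly worse: since the terminal-edge contribution $(1-p)^k l$ shrinks geometrically with $k$, while the omitted intermediate-edge contributions are uniformly bounded by $2(1-p)^k/p$, an $l$ polynomial in $(1-p)^{-|\mathcal{V}|}$ suffices to penalize every early termination.

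Once $k = |\mathcal{V}|$ is forced, the minimizing simple path corresponds to a Hamiltonian path in $\mathcal{G}$ using only weight-$1$ edges if one exists, and is strictly more expensive otherwise because any alternative deploys at least one weight-$2$ edge. Setting $B_{\text{MECSPD}} = \frac{(1-p)(1-(1-p)^{|\mathcal{V}|-1})}{p} + (1-p)^{|\mathcal{V}|} l$, the constructed MECSPD instance is a yes-instance if and only if $\mathcal{G}$ admits a Hamiltonian path rooted at $v_s$, and the reduction is clearly polynomial. The main obstacle is verifying the bound on $l$ cleanly for every $k \in \{1, \ldots, |\mathcal{V}|-1\}$, including the degenerate case $k=1$ where the path is simply $(v_s, v_t)$; this is a routine but slightly delicate geometric-series bound, and is the only place where care is needed in the argument.
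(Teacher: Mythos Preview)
Your reduction is correct and is precisely the composite of the constructions in Theorem~\ref{theorem:mecpntd_np_hard} and Lemma~\ref{lemma:mecpntd_red_mecpd}; the paper's proof is a one-liner observing that the optimal path in that combined reduction is already simple (a Hamiltonian path on the completed graph followed by $v_t$), so the same instance and budget certify NP-hardness of MECSPD without any new work. One minor point: your exponential choice of $l$ is overkill, since the incremental argument of Lemma~\ref{lemma:mecpntd_red_mecpd} (append one unvisited node $w$ just before $v_t$, which strictly lowers the cost whenever $l_{uw} < p\,l$) applies verbatim to simple paths on the complete graph, so $l = 3/p$ already forces $k = |\mathcal{V}|$.
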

\begin{proof}
This follows from Theorem \ref{theorem:mecpntd_np_hard} and Lemma \ref{lemma:mecpntd_red_mecpd}, since the optimal path considered in the construction of Theorem \ref{theorem:mecpntd_np_hard} was a simple path that visited all nodes.
\end{proof}

Note that the optimal path of Min-Exp-Cost-Path could involve revisiting nodes, implying that the optimal solution to Min-Exp-Cost-Simple-Path on $\mathcal{G}$ could be suboptimal.
For instance, consider the toy problem of Fig \ref{fig:counter_example}.
The optimal path starting from node $2$, in this case, is $\mathcal{P}^{*} = (2,1,2,3,4)$.

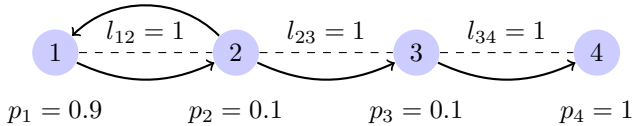
\begin{figure}[ht]
\centering

\begin{tikzpicture}
  [scale=.6,auto=left,main node/.style={circle,fill=blue!20}]
  \node[main node] (n1) at (1,1) {$1$};
  \node[main node] (n2) at (5,1) {$2$};
  \node[main node] (n3) at (9,1) {$3$};
  \node[main node] (n4) at (13,1) {$4$};
  \node (p1) [below=0.2cm of n1] {$p_1=0.9$};
  \node (p2) [below=0.2cm of n2] {$p_2=0.1$};
  \node (p3) [below=0.2cm of n3] {$p_3=0.1$};
  \node (p4) [below=0.2cm of n4] {$p_4=1$};
  
  \path (n1) edge[dashed] node[above] {$l_{12}=1$} (n2);   
  \path (n2) edge [dashed] node[above] {$l_{23}=1$} (n3);  
  \path (n3) edge [dashed] node[above] {$l_{34}=1$} (n4); 

  \path (n2) edge [->, thick, bend right =45] (n1);   
  \path (n1) edge [->, thick, bend right =25] (n2);  
  \path (n2) edge [->, thick, bend right =25] (n3); 
  \path (n3) edge [->, thick, bend right =25] (n4); 
\end{tikzpicture}

\caption{\small{A toy example with the optimal path from node $2$.
The optimal path involves revisiting node $2$.}}

\label{fig:counter_example}
\end{figure}

Consider Min-Exp-Cost-Simple-Path on the following complete graph.
This complete graph $\mathcal{G}_{\text{comp}}$ is formed from the original graph $\mathcal{G}=(\mathcal{V},\mathcal{E})$ by adding an edge between all pairs of vertices of the graph, excluding self-loops.
The cost of the edge $(u,v)$ is the cost of the shortest path between $u$ and $v$ on $\mathcal{G}$ which we denote by $l_{uv}^{\text{min}}$.
This can be computed by the all-pairs shortest path Floyd-Warshall algorithm in $O(|\mathcal{V}|^3)$ computations.
We next show in the following Lemma that the optimal solution of Min-Exp-Cost-Simple-Path on this complete graph can provide us with the optimal solution to Min-Exp-Cost-Path on the original graph.

\begin{lemma}\label{lemma:simple_path_complete_graph}
The solution to Min-Exp-Cost-Simple-Path on $\mathcal{G}_{\text{comp}}$ can be used to obtain the solution to Min-Exp-Cost-Path on $\mathcal{G}$.
\end{lemma}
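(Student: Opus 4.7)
The plan is to show that the optimal cost $C^{*}_{\mathcal{G}}$ of Min-Exp-Cost-Path on $\mathcal{G}$ equals the optimal cost $C^{*}_{\mathcal{G}_{\text{comp}}}$ of Min-Exp-Cost-Simple-Path on $\mathcal{G}_{\text{comp}}$, and moreover that an optimal solution to the latter can be expanded along shortest paths on $\mathcal{G}$ to produce an optimal solution to the former. The argument consists of a ``contraction'' direction and an ``expansion'' direction, each based on a careful bookkeeping of the probability weights of the form $\prod_{v \in \mathcal{V}(\mathcal{P}_{e})} (1-p_{v})$ appearing in the cost expression.

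For the direction $C^{*}_{\mathcal{G}_{\text{comp}}} \leq C^{*}_{\mathcal{G}}$, I would take any feasible path $\mathcal{P}$ on $\mathcal{G}$, list its distinct vertices in order of first visit as $u_1, u_2, \ldots, u_k$ (with $u_1 = v_s$ and $u_k \in T$), and associate to $\mathcal{P}$ the simple path $u_1 \to u_2 \to \cdots \to u_k$ on $\mathcal{G}_{\text{comp}}$. I would then partition the edges of $\mathcal{P}$ into $k-1$ sub-paths, where the $i$-th sub-path goes from the first visit of $u_i$ to the first visit of $u_{i+1}$. The key observation is that every edge of the $i$-th sub-path has \emph{exactly} the same probability weight $\prod_{j=1}^{i}(1-p_{u_j})$, since at each such edge the set of distinct vertices encountered so far is exactly $\{u_1, \ldots, u_i\}$. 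Therefore the contribution of the $i$-th sub-path to $C(\mathcal{P},1)$ equals $\prod_{j=1}^{i}(1-p_{u_j})$ times its total edge length, which is at least $\prod_{j=1}^{i}(1-p_{u_j}) \cdot l^{\text{min}}_{u_i u_{i+1}}$. Summing over $i$ bounds $C(\mathcal{P},1)$ from below by the expected cost of the associated simple path on $\mathcal{G}_{\text{comp}}$.

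For the reverse direction $C^{*}_{\mathcal{G}} \leq C^{*}_{\mathcal{G}_{\text{comp}}}$, I would take an optimal simple path $u_1, \ldots, u_k$ on $\mathcal{G}_{\text{comp}}$ and replace each edge $(u_i, u_{i+1})$ by a shortest path on $\mathcal{G}$ of length $l^{\text{min}}_{u_i u_{i+1}}$, thereby producing a (possibly non-simple) feasible path on $\mathcal{G}$ that still starts at $v_s$ and ends at the same terminal node. At any edge of this expansion that lies in the $i$-th segment, the distinct vertices visited so far contain $\{u_1, \ldots, u_i\}$, so the probability weight is at most $\prod_{j=1}^{i}(1-p_{u_j})$. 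Summing over edges, the cost of each expanded segment is at most $\prod_{j=1}^{i}(1-p_{u_j}) \cdot l^{\text{min}}_{u_i u_{i+1}}$, and the total is at most $C^{*}_{\mathcal{G}_{\text{comp}}}$. Since the expanded path is feasible for Min-Exp-Cost-Path on $\mathcal{G}$, we get $C^{*}_{\mathcal{G}} \leq C^{*}_{\mathcal{G}_{\text{comp}}}$. Combining both inequalities shows equality and yields the claimed reconstruction.

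The main obstacle is the careful tracking of the weights in both directions. In the contraction direction, the subtlety is to verify that the weight is indeed \emph{constant} along each sub-path between consecutive first visits, which relies on those sub-paths only traversing already-visited nodes apart from the final new endpoint. In the expansion direction, one must handle the fact that a shortest path in $\mathcal{G}$ between two successive main nodes $u_i, u_{i+1}$ may pass through vertices with positive success probability, or even through a later main node $u_j$ with $j > i{+}1$; such occurrences can only introduce additional $(1-p_v)$ factors and hence only decrease the cost of the expanded path, so the upper bound of $C^{*}_{\mathcal{G}_{\text{comp}}}$ continues to hold.
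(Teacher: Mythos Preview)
Your proposal is correct and proves the same result, but the route differs from the paper's in a meaningful way. The paper first establishes two structural properties of \emph{optimal} solutions via contradiction: (i) in an optimal Min-Exp-Cost-Path solution on $\mathcal{G}$, every maximal subpath between consecutive first-visit nodes is a shortest path in $\mathcal{G}$; and (ii) in an optimal Min-Exp-Cost-Simple-Path solution on $\mathcal{G}_{\text{comp}}$, the shortest path in $\mathcal{G}$ between any two consecutive nodes passes only through already-visited vertices. With these in hand, contraction and expansion preserve cost \emph{exactly}, so equality of the optima is immediate. Your argument instead proves two one-sided inequalities directly, for arbitrary feasible paths on the contraction side and for any simple path on the expansion side, using only the monotonicity of the probability weights: extra first-visits along a segment can only introduce additional $(1-p_v)$ factors, and sub-path lengths between first visits are bounded below by shortest-path lengths. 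This is more elementary and avoids both contradiction arguments, at the price of not yielding the structural Properties~1 and~2 as byproducts. One small point to make explicit in your contraction step: to ensure $u_k \in T$, first truncate $\mathcal{P}$ at its first visit to a terminal node (everything afterward carries weight zero and contributes nothing to the cost), so that the last first-visit vertex is indeed terminal.
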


\begin{proof}
See Appendix \ref{appendix:lemma_simple_path} for the proof.
\end{proof}

Lemma \ref{lemma:simple_path_complete_graph} is a powerful result that allows us to asymptotically solve the Min-Exp-Cost-Path problem, with $\epsilon$ sub-optimality, as we shall see in the next Section.

\section{Asymptotically $\epsilon$-suboptimal Path Planner}\label{sec:asmpt_near_opt_planner}
In this section, we propose a path planner, based on a game theoretic framework, that asymptotically gets arbitrarily close to the optimum solution of the Min-Exp-Cost-Path problem, i.e., it is an \emph{asymptotically $\epsilon$-suboptimal solver}.
This is important as it allows us to solve the NP-hard Min-Exp-Cost-Path problem, with near optimality, given enough time.
More specifically, we utilize log-linear learning to asymptotically obtain the global potential minimizer of an appropriately defined potential game.

We start with the space of simple paths, i.e., we are interested in the Min-Exp-Cost-Simple-Path problem on a given graph $\mathcal{G}$.
A node $v$ will then route to a single other node.
Moreover, the expected cost from a node can then be expressed in terms of the expected cost from the neighbor it routes through.
The state of the system can then be considered to be just the current node $v$, and the actions available at state $v$, $A_v = \{u\in \mathcal{V}:(v,u) \in \mathcal{E}\}$, is the set of neighbors of $v$.
The policy $\mu$ specifies which node to move to next, i.e., if the current node is $v$, then $\mu_{v}$ is the next node to go to.

We next discuss our game-theoretic setting.
So far, we viewed a node $v$ as a state and $A_v$ as the action space for state $v$.
In contrast, in this game-theoretic setting, we interpret node $v$ as a player and $A_v$ as the action set of player $v$.
Similarly, $\mu$ was viewed as a policy with $\mu_{v}$ specifying the action to take at state $v$.
Here, we reinterpret $\mu$ as the joint action profile of the players with $\mu_v$ being the action of player $v$.

We consider a game $\{\mathcal{V}', \{A_v\}, \{\mathcal{J}_{v}\}\}$, where the set of non-terminal nodes $\mathcal{V}'$ are the players of the game and $A_v$ is the action set of node/player $v$.
Moreover, $\mathcal{J}_{v}:A\rightarrow \mathbb{R}$ is the local cost function of player $v$, where $A = \prod_{v \in \mathcal{V}'} A_{v}$ is the space of joint actions.
Finally, $\mathcal{J}_{v}(\mu)$ is the cost of the action profile $\mu$ as experienced by player $v$.

We first describe the expected cost from a node $v$ in terms of the action profile $\mu$.
An action profile $\mu$ induces a directed graph on $\mathcal{G}$, which has the same set of nodes as $\mathcal{G}$ and directed edges from $v$ to $\mu_{v}$ for all $v\in \mathcal{V}'$.
We call this the successor graph, using terminology from \cite{garcia1993loop}, and denote it by $\mathcal{SG}(\mu)$.
As we shall show, our proposed strategy produces an action profile $\mu$ which induces a directed acyclic graph.
This is referred to as an \emph{acyclic successor graph} (ASG) \cite{garcia1993loop}.

\begin{figure}
\centering
\begin{tikzpicture}
  [scale=.6,auto=left,main node/.style={circle,fill=blue!20}]
  \node[main node] (n7) at (7,11) {$7$};
  \node[main node] (n6) at (1,10) {$6$};
  \node[main node] (n4) at (4,8)  {$4$};
  \node[main node] (n5) at (8,9)  {$5$};
  \node[main node] (n1) at (2,6) {$1$};
  \node[main node] (n3) at (9,6)  {$3$};
  \node[main node] (n2) at (6,5)  {$2$};
  
  \node (p1) [below=0.1cm of n1] {$p_{1}<1$};
  \node (p2) [below=0.1cm of n2] {$p_{2}<1$};
  \node (p7) [right=0.1cm of n7] {$p_{7}=1$};
  
  \path (n1) edge [->, semithick](n2);   
  \path (n2) edge [->, semithick] (n4);  
  \path (n4) edge [->, semithick] (n6); 
  \path (n6) edge [->, semithick] (n7); 
  \path (n5) edge [->, semithick] (n4); 
  \path (n3) edge [->, semithick] (n5); 
  

\end{tikzpicture}
\caption{\small{An example ASG induced by an action profile $\mu$.}}
\label{fig:ASG}
\end{figure}
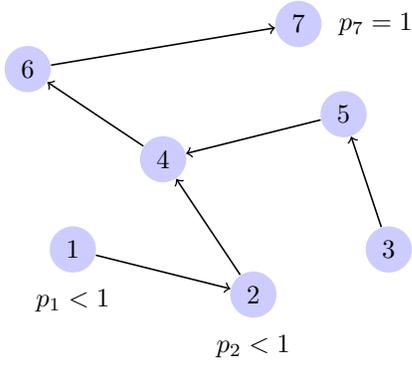

\RC{
Node $v$ is said to be \emph{downstream} of $u$ in $\mathcal{SG}(\mu)$ if $v$ lies on the directed path from $u$ to the corresponding sink.
Moreover, node $u$ is said to be \emph{upstream} of $v$ in this case, and we denote the set of upstream nodes of $v$ by $U_{v}(\mu_{-v})$,
where $\mu_{-v}$ denotes the action profile of all players except $v$.
Let $v \in U_v(\mu_{-v})$ by convention.
Note that $U_{v}(\mu_{-v})$ is only a function of $\mu_{-v}$ as it does not depend on the action of player $v$.}
\AM{Do we include the current node in the set of downstream nodes and upstream nodes?}

Let $\mathcal{P}(\mu, v)$ be the path from agent $v$ on this successor graph.
We use the shorthand $C_v(\mu) = C(\mathcal{P}(\mu, v),1)$, to denote the expected cost from node $v$ when following the path $\mathcal{P}(\mu, v)$.
Since $\mathcal{P}(\mu, v)$ is a path along $\mathcal{SG}(\mu)$, it can either end at some node or it can end in a cycle.
If it ends in a cycle or at a node that is not a terminal node, we define the expected cost $C_v(\mu)$ to be infinity.
If it does end at a terminal node, we obtain the following recursive relation from (\ref{eq:min_exp_cost_recursion}):
\begin{align}\label{eq:rec_exp_dist}
C_{v}(\mu) = (1-p_{v})\left(l_{v\mu_v} + C_{\mu_{v}}(\mu)\right),
\end{align}
where $C_{v_t}(\mu) = 0$ for all $v_t \in T$.

\RC{
Let $A_{\text{ASG}}$ denote the set of action profiles such that the expected cost $C_{v}(\mu) < \infty$ for all $v \in \mathcal{V}$.
This will only happen if the path $\mathcal{P}(\mu, v)$ ends at a terminal node for all $v$.
This corresponds to $\mathcal{SG}(\mu)$ being an ASG with terminal nodes as sinks.
Specifically, $\mathcal{SG}(\mu)$ would be a forest with the root or sink of each tree being a terminal node.
An ASG is shown in Fig. \ref{fig:ASG} for the toy example from Fig. \ref{fig:problem_setup}.
}

\RC{
$\mu \in A_{\text{ASG}}$ implies that the action of player $v$ satisfies $\mu_{v} \in A_{v}^{c}(\mu_{-v})$, where
$A_{v}^{c}(\mu_{-v}) = \{u \in \mathcal{V}: (v,u) \in \mathcal{E}, u \notin U_{v}(\mu_{-v}), C_u(\mu) < \infty\}$ is the set of actions that result in a finite expected cost from $v$.
Note that $A_{v}^{c}(\mu_{-v})$ is a function of only $\mu_{-v}$.
This is because $u \notin U_{v}(\mu_{-v})$ implies $v \notin \mathcal{P}(\mu, u)$ which in turn implies that $C_{u}(\mu)$ is a function of only $\mu_{-v}$.
}

We next define the local cost function of player $v$ to be 
\begin{align}\label{eq:local_cost_func}
\mathcal{J}_{v}(\mu) = \sum_{u \in U_{v}(\mu)}\alpha_{u}C_{u}(\mu),
\end{align}
where $U_v(\mu)$ is the set of upstream nodes of $v$, and $\alpha_u>0 $ are constants such that $\alpha_{v_s} = 1$ and $\alpha_v = \epsilon^{'}$, for all $v\neq v_s$, where $\epsilon^{'} > 0$ is a small constant.

We next show that these local cost functions induce a potential game \RC{over the action space $A_{\text{ASG}}$}.
In order to do so, we first define a potential game \RC{over $A_{\text{ASG}}$}.\footnote{\RC{This differs from the usual definition of a potential game in that the joint action profiles are restricted to lie in $A_{\text{ASG}}$.}}

\begin{definition}[Potential Game \cite{monderer1996potential}]
$\{\mathcal{V}', \{A_v\}, \{\mathcal{J}_v\}\}$ is an exact potential game \RC{over $A_{\text{ASG}}$} if there exists a function \RC{$\phi: A_{\text{ASG}} \rightarrow \mathbb{R}$} such that 
\begin{align*}
\mathcal{J}_v(\mu_v^{'}, \mu_{-v}) - \mathcal{J}_{v}(\mu_v, \mu_{-v}) = \phi(\mu_v^{'}, \mu_{-v}) - \phi(\mu_v, \mu_{-v}),
\end{align*}
for all \RC{$\mu_{v}^{'} \in A_{v}^{c}(\mu_{-v}), \mu=(\mu_v, \mu_{-v})\in A_{\text{ASG}}$}, and $v \in \mathcal{V}'$, where $\mu_{-v}$ denotes the action profile of all players except $v$.
\end{definition}

The function $\phi$ is called the potential function.
In the following Lemma, we show that using local cost functions as described in (\ref{eq:local_cost_func}), results in an exact potential game.
\begin{lemma}\label{lemma:potential_game}
The game $\{\mathcal{V}', \{A_v\}, \{\mathcal{J}_v\}\}$, with local cost functions as defined in (\ref{eq:local_cost_func}), is an exact potential game \RC{over $A_{\text{ASG}}$} with potential function 
\begin{align}\label{eq:pot_func}
\phi(\mu) = \sum_{v \in \mathcal{V}'}\alpha_{v}C_{v}(\mu) = C_{v_s}(\mu) + \epsilon^{'} \sum_{v\neq v_s} C_{v}(\mu).
\end{align}
\end{lemma}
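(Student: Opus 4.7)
The plan is to establish the potential game property by directly computing $\phi(\mu) - \mathcal{J}_v(\mu)$ and showing it depends only on $\mu_{-v}$. First I would split the sum defining $\phi$ according to whether a node $u$ is upstream of $v$ or not:
\begin{align*}
\phi(\mu) = \sum_{u \in U_v(\mu_{-v})}\alpha_u C_u(\mu) + \sum_{u \in \mathcal{V}'\setminus U_v(\mu_{-v})}\alpha_u C_u(\mu).
\end{align*}
Recall from the excerpt that $U_v(\mu_{-v})$ is a function of $\mu_{-v}$ only, so by the definition of $\mathcal{J}_v$ in (\ref{eq:local_cost_func}) the first sum equals exactly $\mathcal{J}_v(\mu)$.

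The key step is then to argue that the second sum depends only on $\mu_{-v}$. For any $u \notin U_v(\mu_{-v})$, the directed path $\mathcal{P}(\mu,u)$ from $u$ in the successor graph $\mathcal{SG}(\mu)$ does not pass through $v$ (this is the contrapositive of the definition of upstream). Consequently, the recursive relation (\ref{eq:rec_exp_dist}) evaluated along $\mathcal{P}(\mu,u)$ never invokes the edge chosen by player $v$, and so $C_u(\mu)$ is a function of $\mu_{-v}$ only. This is essentially the same observation already made in the excerpt to justify that $A_v^c(\mu_{-v})$ is well defined, so the justification is already in place.

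Putting these together, for any $\mu=(\mu_v,\mu_{-v})\in A_{\text{ASG}}$ we can write
\begin{align*}
\phi(\mu_v,\mu_{-v}) - \mathcal{J}_v(\mu_v,\mu_{-v}) = \Psi(\mu_{-v}),
\end{align*}
where $\Psi(\mu_{-v}) := \sum_{u \in \mathcal{V}'\setminus U_v(\mu_{-v})}\alpha_u C_u(\mu_{-v})$ does not depend on $\mu_v$. Replacing $\mu_v$ by any alternative $\mu_v' \in A_v^c(\mu_{-v})$ and subtracting yields
\begin{align*}
\mathcal{J}_v(\mu_v',\mu_{-v}) - \mathcal{J}_v(\mu_v,\mu_{-v}) = \phi(\mu_v',\mu_{-v}) - \phi(\mu_v,\mu_{-v}),
\end{align*}
which is exactly the exact-potential-game condition. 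The main obstacle, and the only subtle point, is justifying that $C_u(\mu)$ for $u \notin U_v(\mu_{-v})$ truly is independent of $\mu_v$; everything else is bookkeeping. I expect this to be straightforward given the acyclic structure of $\mathcal{SG}(\mu)$ on $A_{\text{ASG}}$, since acyclicity guarantees that "not upstream of $v$" is equivalent to "$v$ does not appear on the downstream path," so the recursion (\ref{eq:rec_exp_dist}) never touches $\mu_v$.
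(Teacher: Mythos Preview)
Your proposal is correct and takes essentially the same approach as the paper: both arguments split the sum defining $\phi$ into the upstream part (which equals $\mathcal{J}_v$) and the non-upstream part, and both rely on the key observation that $C_u(\mu)$ is independent of $\mu_v$ whenever $u \notin U_v(\mu_{-v})$. Your presentation via $\phi - \mathcal{J}_v = \Psi(\mu_{-v})$ is slightly more streamlined than the paper's direct computation of $\phi(\mu_v',\mu_{-v}) - \phi(\mu)$, but the underlying idea is identical.
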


\begin{proof}
Consider a node $v$ and $\mu = (\mu_v, \mu_{-v})$ and $\mu_{v}^{'}$ such that 
$C_v(\mu_v^{'}, \mu_{-v}) < C_v(\mu_v, \mu_{-v})$.
From (\ref{eq:rec_exp_dist}), we have that 
$
C_u(\mu_v^{'}, \mu_{-v}) < C_u(\mu_v, \mu_{-v}), \;\; \forall u \in U_{v}(\mu),
$
where $U_{v}(\mu)$ is the set of upstream nodes from $v$.
Furthermore,
$
C_u(\mu_v^{'}, \mu_{-v}) = C_u(\mu_v, \mu_{-v}), \;\; \forall u \notin U_{v}(\mu).
$
Thus, we have
\begin{align*}
\phi(\mu_v^{'}, \mu_{-v}) - \phi(\mu)  & = \sum_{u \in \mathcal{V}'}\alpha_u \left[C_{u}(\mu_v^{'}, \mu_{-v}) - C_{u}(\mu) \right] \\
& = \sum_{u \in U_v(\mu)}\alpha_u \left[C_{u}(\mu_v^{'}, \mu_{-v}) - C_{u}(\mu)\right]\\
& = \mathcal{J}_{v}(\mu_v^{'}, \mu_{-v}) - \mathcal{J}_{v}(\mu),
\end{align*}
for all \RC{$\mu_{v}^{'} \in A^{c}_v(\mu_{-v})$, $\mu \in A_{\text{ASG}}$}, and $v \in \mathcal{V}'$.
\AM{Do the infinities involved cause a problem? 
For instance, if $\mathcal{J}_i(\mu)$ is finite for some $i$, but $\phi(\mu)=\infty$, then is it still an ordinal potential game?}
\end{proof}

Minimizing $\phi(\mu)$ gives us a solution that can be arbitrarily close to that of Min-Exp-Cost-Simple-Path since we can select the value of $\epsilon^{'}$ appropriately. 
Let $\mu^{*} = \argmin_{\mu} \phi(\mu)$ and $\mu^{\text{OPT}} = \argmin_{\mu} C_{v_s}(\mu)$.
Then,
$
C_{v_{s}}(\mu^{*}) + \epsilon^{'} \sum_{u \neq v_s} C_{u}(\mu^{*}) \leq C_{v_s}(\mu^{\text{OPT}}) + \epsilon^{'} \sum_{u \neq v_s} C_{u}(\mu^{\text{OPT}}).
$
Rearranging gives us
\begin{align*}
C_{v_s}(\mu^{*}) & \leq C_{v_s}(\mu^{\text{OPT}}) + \epsilon^{'} \left[\sum_{u \neq v_s} C_{u}(\mu^{\text{OPT}}) - \sum_{u \neq v_s} C_{u}(\mu^{*})\right]\\
& \leq C_{v_s}(\mu^{\text{OPT}}) + \epsilon^{'} |\mathcal{V}'|D,
\end{align*}
where $D$ is the diameter of the graph.
Thus minimizing $\phi(\mu)$ gives us an $\epsilon$-suboptimal solution to the Min-Exp-Cost-Simple-Path problem, where $\epsilon=\epsilon^{'} |\mathcal{V}'|D$.

We next show how to asymptotically obtain the global minimizer of $\phi(\mu)$ by utilizing a learning process known as log-linear learning \cite{marden2012revisiting}.

\subsubsection{Log-linear Learning}\label{subsubsec:log_linear_learning}
\RC{
Let $\mu_{v} = a_{\emptyset}$ correspond to node $v$ not pointing to any successor node.
We refer to this as a null action.}
Then, the log-linear process utilized in our setting is as follows:
\begin{enumerate}
\item The action profile $\mu(0)$ is initialized with a null action, i.e., \RC{$\mu_v(0) = a_{\emptyset}$ for all $v$}.
The local cost function is thus $\mathcal{J}_{v}(\mu(0)) = \infty$, for all $v \in \mathcal{V}'$.
\item At every iteration $k+1$, a node $v$ is randomly selected 
\RC{from $\mathcal{V}'$ uniformly}.
\RC{If $A_{v}^{c}(\mu_{-v}(k))$ is empty, we set $\mu_v(k+1) = a_{\emptyset}$.
Else, node $v$ selects action $\mu_v(k+1)=\mu_v \in A_{v}^{c}(\mu_{-v}(k))$ with the following probability: 
\begin{align*}
\mathrm{Pr}(\mu_v)= \frac{e^{-\frac{1}{\tau}\left(\mathcal{J}_v(\mu_v, \mu_{-v}(k))\right)}}{\sum_{\mu_{v}^{'} \in A_v^{c}(\mu_{-v}(k))}e^{-\frac{1}{\tau}\left(\mathcal{J}_{v}(\mu_v^{'}, \mu_{-v}(k))\right)}},
\end{align*}
}
where $\tau$ is a tunable parameter known as the temperature.
The remaining nodes repeat their action, i.e., $\mu_{u}(k+1) = \mu_{u}(k)$ for $u\neq v$.
\end{enumerate}

We next show that log-linear learning asymptotically obtains an $\epsilon$-suboptimal solution to the Min-Exp-Cost-Path problem.
We first show, in the following Lemma, that it asymptotically provides an $\epsilon$-suboptimal solution to the Min-Exp-Cost-Simple-Path problem.
\begin{theorem}\label{theorem:log_lin_asympt}
As $\tau \rightarrow 0$, log-linear learning on a potential game with a local cost function defined in (\ref{eq:local_cost_func}), asymptotically provides an $\epsilon$-suboptimal solution to the Min-Exp-Cost-Simple-Path problem.
\end{theorem}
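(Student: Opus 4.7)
The plan is to apply the standard convergence theorem for log-linear learning on finite potential games to the game constructed above, and then combine it with the $\epsilon$-suboptimality bound already derived just before the statement. The target is to show that the Markov chain $\{\mu(k)\}$ generated by the log-linear process has a unique stationary distribution that, as $\tau\to 0$, concentrates on $\arg\min_{\mu\in A_{\text{ASG}}}\phi(\mu)$, and then to invoke Lemma \ref{lemma:potential_game} together with the inequality $C_{v_s}(\mu^{*}) \leq C_{v_s}(\mu^{\text{OPT}}) + \epsilon^{'}|\mathcal{V}'|D$.

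First, I would argue that the chain is effectively a Markov chain on the finite set $A_{\text{ASG}}$. Starting from the all-null profile $\mu(0)$, one shows that with positive probability each player can be visited in an order consistent with some acyclic successor graph (e.g.\ processing nodes in order of increasing shortest-path distance to the nearest terminal), so that after finitely many steps $\mu(k)\in A_{\text{ASG}}$. Once $\mu \in A_{\text{ASG}}$, the current action $\mu_{v}$ of any selected player always lies in $A_{v}^{c}(\mu_{-v})$, so the null action is never triggered and the chain remains in $A_{\text{ASG}}$ thereafter.

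Second, I would verify the two ingredients needed for the Gibbs stationary distribution. Aperiodicity is immediate, since the selected player can always repeat its current action with positive probability. For irreducibility on $A_{\text{ASG}}$, I would show that any ASG can be transformed into any other ASG through a sequence of single-player updates that each stay inside $A_{\text{ASG}}$; a convenient strategy is to funnel both endpoints through the canonical shortest-path-to-nearest-terminal tree $\mu^{\mathrm{SP}}\in A_{\text{ASG}}$, updating players in reverse topological order so that each reassignment preserves acyclicity. With irreducibility and aperiodicity in hand, detailed balance follows from the potential-game identity of Lemma \ref{lemma:potential_game}: for any one-player deviation $\mu_v\to\mu_v'$ inside $A_{\text{ASG}}$,
\begin{align*}
\frac{P(\mu \to \mu')}{P(\mu' \to \mu)}
& = e^{-\frac{1}{\tau}\left(\mathcal{J}_v(\mu') - \mathcal{J}_v(\mu)\right)} = e^{-\frac{1}{\tau}\left(\phi(\mu') - \phi(\mu)\right)},
\end{align*}
which identifies the unique stationary distribution as $\pi^{\tau}(\mu)\propto e^{-\phi(\mu)/\tau}$ on $A_{\text{ASG}}$.

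Third, taking $\tau\to 0$, the Gibbs measure $\pi^{\tau}$ concentrates on the set of global minimizers $\mu^{*}\in\arg\min_{\mu\in A_{\text{ASG}}}\phi(\mu)$. Any such $\mu^{*}$ lies in $A_{\text{ASG}}$, so $\mathcal{SG}(\mu^{*})$ is acyclic with terminal sinks, and the induced path $\mathcal{P}(\mu^{*},v_s)$ is a feasible simple path for Min-Exp-Cost-Simple-Path. Combining with the bound $C_{v_s}(\mu^{*}) \leq C_{v_s}(\mu^{\text{OPT}}) + \epsilon^{'}|\mathcal{V}'|D$ derived immediately above the theorem statement yields $\epsilon$-suboptimality with $\epsilon = \epsilon^{'}|\mathcal{V}'|D$.

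The main obstacle I expect is the irreducibility step. Unlike classical log-linear learning where each player's action set is a fixed finite set, here the per-player feasible set $A_v^{c}(\mu_{-v})$ depends on the current joint profile because single-player updates must preserve global acyclicity. I would handle this by using the canonical shortest-path tree as a hub: showing that from any ASG one can incrementally re-point players (in reverse topological order, so that every intermediate profile is still an ASG) until one reaches $\mu^{\mathrm{SP}}$, and symmetrically reach any target ASG from $\mu^{\mathrm{SP}}$.
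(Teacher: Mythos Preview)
Your proposal is correct and follows essentially the same route as the paper: show the chain is eventually absorbed into $A_{\text{ASG}}$ (the paper phrases this as ``$A_{\text{ASG}}$ is a closed communicating class'' plus ``every $\mu\in A_\emptyset$ is transient''), establish irreducibility on $A_{\text{ASG}}$, verify detailed balance via the potential identity of Lemma~\ref{lemma:potential_game} to obtain the Gibbs stationary law $\Pi(\mu)\propto e^{-\phi(\mu)/\tau}$, and pass to the $\tau\to 0$ limit.

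One minor simplification and one caution. The paper does not route through a canonical hub: it transitions directly from any $\mu^{s}$ to any $\mu^{f}\in A_{\text{ASG}}$ by re-pointing nodes in order of increasing hop-distance from the terminals \emph{in $\mu^{f}$} (the sets $R_1,R_2,\ldots$), which is cleaner than your two-leg construction through $\mu^{\mathrm{SP}}$. More importantly, the ``reverse topological order'' you invoke must be taken with respect to the \emph{target} tree, not the source ASG; if you order by the source, the intended new successor $\mu^{\mathrm{SP}}_{v}$ can still be upstream of $v$ in the current profile and the single-player move is infeasible. With that clarification your argument goes through and matches the paper's.
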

\begin{proof}
See Appendix \ref{appendix:proof_log_lin_asympt} for the proof.
\end{proof}

\begin{lemma}
As $\tau \rightarrow 0$, log-linear learning on a potential game with a local cost function defined in (\ref{eq:local_cost_func}) on the complete graph $\mathcal{G}_{\text{comp}}$, asymptotically provides an $\epsilon$-suboptimal solution to the Min-Exp-Cost-Path problem.
\end{lemma}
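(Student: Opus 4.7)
The plan is to combine the two key results already established: Theorem \ref{theorem:log_lin_asympt}, which guarantees that log-linear learning with the local cost defined in (\ref{eq:local_cost_func}) asymptotically yields an $\epsilon$-suboptimal solution to Min-Exp-Cost-Simple-Path on whatever graph it is applied to, and Lemma \ref{lemma:simple_path_complete_graph}, which establishes that a solution to Min-Exp-Cost-Simple-Path on the complete graph $\mathcal{G}_{\text{comp}}$ yields a solution to Min-Exp-Cost-Path on the original graph $\mathcal{G}$.

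First, I would apply Theorem \ref{theorem:log_lin_asympt} with the graph taken to be $\mathcal{G}_{\text{comp}}$ (whose edge weights are the shortest-path distances $l_{uv}^{\text{min}}$ in $\mathcal{G}$). As $\tau \to 0$, log-linear learning then asymptotically produces, with probability approaching one, a joint action profile $\mu^{*}$ that minimizes the potential $\phi$ of (\ref{eq:pot_func}) on $\mathcal{G}_{\text{comp}}$. By the $\epsilon$-suboptimality argument preceding Theorem \ref{theorem:log_lin_asympt}, the induced simple path from $v_s$ has expected cost satisfying
\begin{equation*}
C_{v_s}(\mu^{*}) \;\leq\; C_{v_s}(\mu^{\text{OPT}}_{\text{comp}}) + \epsilon^{'}|\mathcal{V}'|D_{\text{comp}},
\end{equation*}
where $\mu^{\text{OPT}}_{\text{comp}}$ is the optimal Min-Exp-Cost-Simple-Path action profile on $\mathcal{G}_{\text{comp}}$ and $D_{\text{comp}}$ is its diameter. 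Since $\epsilon^{'}$ is a free tunable constant, this upper bound can be made arbitrarily small, so we obtain an $\epsilon$-suboptimal solution to Min-Exp-Cost-Simple-Path on $\mathcal{G}_{\text{comp}}$.

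Next, I would invoke Lemma \ref{lemma:simple_path_complete_graph} to lift this solution back to $\mathcal{G}$. The lemma asserts that the optimal expected cost of Min-Exp-Cost-Simple-Path on $\mathcal{G}_{\text{comp}}$ equals the optimal expected cost of Min-Exp-Cost-Path on $\mathcal{G}$, and that the mapping is constructive: each edge $(u,v)$ used in the simple path on $\mathcal{G}_{\text{comp}}$ is replaced by the shortest path from $u$ to $v$ in $\mathcal{G}$. Applying this expansion to the simple path extracted from $\mu^{*}$ produces a feasible path in $\mathcal{G}$ whose expected cost matches $C_{v_s}(\mu^{*})$, and therefore lies within $\epsilon$ of the optimal Min-Exp-Cost-Path value on $\mathcal{G}$.

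The only subtlety worth flagging is to verify that the cost-preserving edge expansion from Lemma \ref{lemma:simple_path_complete_graph} respects the expected-cost structure: because intermediate nodes along a shortest-path expansion either have $p_v = 0$ (contributing no probability of success, so the recursion (\ref{eq:rec_exp_dist}) is preserved) or are handled by the construction in the lemma's proof, the expected cost of the expanded path in $\mathcal{G}$ equals $C_{v_s}(\mu^{*})$. Combining this equality with the $\epsilon$-suboptimality obtained on $\mathcal{G}_{\text{comp}}$, and noting that $\epsilon^{'}$ (and hence $\epsilon$) can be chosen arbitrarily small, completes the argument.
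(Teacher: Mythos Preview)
Your proposal is correct and follows essentially the same two-step approach as the paper: apply Theorem~\ref{theorem:log_lin_asympt} on $\mathcal{G}_{\text{comp}}$, then invoke Lemma~\ref{lemma:simple_path_complete_graph} to transfer the result to Min-Exp-Cost-Path on $\mathcal{G}$. One small correction to your final paragraph: the expanded path in $\mathcal{G}$ need not have cost \emph{equal} to $C_{v_s}(\mu^{*})$ (intermediate nodes on a shortest-path segment may well have $p_v>0$ and need not have been visited earlier, contrary to your $p_v=0$ remark), but its expected cost is always \emph{at most} $C_{v_s}(\mu^{*})$ since first-visiting such nodes can only discount subsequent edge costs---and this inequality is all the $\epsilon$-suboptimality argument requires.
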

\begin{proof}
From Theorem \ref{theorem:log_lin_asympt}, we know that log-linear learning asymptotically provides an $\epsilon$-suboptimal solution to the Min-Exp-Cost-Simple-Path problem on the complete graph $\mathcal{G}_{\text{comp}}$.
Using Lemma \ref{lemma:simple_path_complete_graph}, we then utilize this solution to obtain an $\epsilon$-suboptimal solution to the Min-Exp-Cost-Path problem on $\mathcal{G}$.
\end{proof}

\RC{
\begin{remark}
We implement the log-linear learning algorithm by keeping track of the expected cost $C_{v}(\mu(k))$ in memory, for all nodes $v \in \mathcal{V}'$.
In each iteration, we compute the set of upstream nodes of the selected node $v$ in order to compute the set $A_v^{c}(\mu(-k))$.
From (\ref{eq:rec_exp_dist}), we can see that the expected cost of each node upstream of $v$ can be expressed as a linear function of $C_{v}(\mu)$.
Then we can compute an expression for $\mathcal{J}_{v}(\mu) = \sum_{u \in \mathcal{U}_{v}(\mu_{-v})}C_{u}(\mu)$ as a linear function of the expected cost $C_{v}(\mu)$ with a computational cost of $O(|\mathcal{V}'|)$.
We can then compute $\mathcal{J}_{v}(\mu_v, \mu_{-v})$ for all $\mu_v \in A_v^{c}(\mu(-k))$ using this pre-computed expression for $\mathcal{J}_{v}()$.
Finally, once $\mu_{v}(k+1)$ is selected, we update the expected cost of $v$ and all its upstream nodes using (\ref{eq:rec_exp_dist}).
Thus, the overall computation cost of each iteration is $O(|\mathcal{V}'|)$.
\end{remark}
}

\AM{
\begin{remark}
Log-linear learning is closely related to the simulated annealing process \cite{kirkpatrick1983optimization}.
More precisely, from a centralized perspective, a variant of log-linear learning termed binary log-linear learning \cite{marden2012revisiting}, can be viewed as a type of simulated annealing.
\end{remark}

\begin{remark}
A natural question that arises is why we cannot employ some version of simulated annealing directly on our objective of $C_{v_s}(\mu)$.
The reason is that by setting a small weight $\epsilon^{'}$ for the expected cost $C_{u}(\mu)$ for all $u\neq v_s$, a self-organizing tendency is promoted and an ASG is naturally formed.
Moreover, since the expected cost from a node is directly related to the expected cost from the neighboring node it routes to, improving the expected cost from its neighboring nodes is a reasonable action.
If on the other hand, we had used $C_{v_s}(\mu)$ as the objective alone, large regions of the action space for $\mu$ would be infeasible and navigating through the action space would be more difficult.
As a result of this, the simulated annealing procedure would not be effective.
\AM{Reference for simulated annealing not performing well when there are large gaps in performance}
\end{remark}
}

\AM{Give intuition on how it makes sense that the utility of an agent is the sum of the weighted expected costs of all nodes downstream from it.
This is because its choice affects all nodes downstream of it.
For instance, if we were concerned with minimizing $C_{v_s}(\mu)$, then a node will be more risk averse if it has $v_s$ on its downstream path.
Otherwise, it is free to take larger risks, i.e., actions that can result in larger cost for it.}

\section{Fast Non-myopic Path Planners}\label{sec:non_myopic_planners}
In the previous section, we proposed an approach that finds an $\epsilon$-suboptimal solution to the Min-Exp-Cost-Path problem asymptotically.
However, for certain applications, finding a suboptimal but fast solution may be more important.
This motivates us to propose two suboptimal path planners that are non-myopic and very fast.
We use the term non-myopic here to contrast with the myopic approaches of choosing your next step based on your immediate or short-term reward (e.g., local greedy search).
We shall see an example of such a myopic heuristic in Section \ref{sec:numerical_results}.
 
In this part, we first propose a non-myopic path planner based on a game theoretic framework that finds a directionally local minimum of the potential function $\phi$ of (\ref{eq:pot_func}).
We next propose a path planner based on an SSP formulation that provides us with the optimal path among the set of paths satisfying a mild assumption.

We assume simple paths in this Section.
Lemma \ref{lemma:simple_path_complete_graph} can then be used to find a optimum non-simple path with minimal computation.
Alternatively, the simple path solution can also be directly utilized.

\subsection{Best Reply Process}\label{subsec:br_asg}
Consider the potential game $\{\mathcal{V}', \{A_v\}, \{\mathcal{J}_{v}\}\}$ of Section \ref{sec:asmpt_near_opt_planner} with local cost functions $\{\mathcal{J}_{v}\}$ as given in (\ref{eq:local_cost_func}).
We next show how to obtain a directionally local minimum of the potential function $\phi(\mu) =  C_{v_{s}}(\mu) + \epsilon^{'}\sum_{v \neq v_s}C_{v}(\mu)$.
In order to do so, we first review the definition of a Nash equilibrium.
\begin{definition}[Nash Equilibrium \cite{fudenberg1991game}]
An action profile $\mu^{\text{NE}}$ is said to be a pure Nash equilibrium if 
\begin{align*}
\mathcal{J}_{v}(\mu^{\text{NE}}) \leq \mathcal{J}_v(\mu_v, \mu_{-v}^{\text{NE}}), \;\; \forall \mu_v \in A_v, \forall v \in \mathcal{V}'
\end{align*}
where $\mu_{-v}$ denotes the action profile of all players except $v$.
\end{definition}

It can be seen that an action $\mu^{\text{NE}}$ is a Nash equilibrium of a potential game if and only if it is a directionally local minimum of $\phi$, i.e., 
$
\phi(\mu_v^{'}, \mu_{-v}^{\text{NE}}) \geq \phi(\mu^{\text{NE}}), \;\; \forall \mu_{v}^{'} \in A_v, \forall v \in \mathcal{V}'.
$
Since we have a potential game, a Nash equilibrium of the game is a directionally local minimum of $\phi(\mu)$.
We can find a Nash equilibrium of the game using a learning mechanism such as the best reply process \cite{marden2012revisiting}, which we next discuss.

\RC{
Let $\mu_{v} = a_{\emptyset}$ correspond to node $v$ not pointing to any successor node.
We refer to this as a null action.
}
The best reply process utilized in our setting is as follows:
\begin{enumerate}
\item The action profile $\mu(0)$ is initialized with a null action, i.e., \RC{$\mu_v(0) = a_{\emptyset}$ for all $v$}.
The local cost function is thus $\mathcal{J}_{v}(\mu(0)) = \infty$, for all $v \in \mathcal{V}'$.
\item At iteration $k+1$, a node $v$ is randomly selected from $\mathcal{V}'$ \RC{uniformly}.
\RC{If $A_{v}^{c}(\mu_{-v}(k))$ is empty, we set $\mu_v(k+1) = a_{\emptyset}$.
Else,}
the action of node $v$ is updated as 
\begin{align*}
\mu_{v}(k+1) & = \argmin_{\mu_v \in \RC{A_v^{c}(\mu_{-v}(k))}} \mathcal{J}_{v}(\mu_{v}, \mu_{-v}(k))\\
& = \argmin_{\mu_v \in \RC{A_v^{c}(\mu_{-v}(k))}} C_{v}(\mu_{v}, \mu_{-v}(k))\\
 & =\argmin_{u \in \RC{A_v^{c}(\mu_{-v}(k))}} \left\{ (1-p_v)\left[l_{vu} + C_{u}(\mu(k))\right] \right\},
\end{align*}
where the second and third equality follow from (\ref{eq:rec_exp_dist}).
The actions of the remaining nodes stay the same, i.e., $\mu_{u}(k+1) = \mu_{u}(k)$, $\forall u \neq v$.
\end{enumerate}

The best reply process in a potential game converges to a pure Nash equilibrium \cite{marden2012revisiting}, which is also a directionally local minimum of $\phi(\mu)=  C_{v_{s}}(\mu) + \epsilon^{'}\sum_{v \neq v_s}C_{v}(\mu)$.

Since a node is selected at random at each iteration in the best reply process, analyzing its convergence rate becomes challenging.
Instead, in the following Theorem, we analyze the convergence rate of the best reply process when the nodes for update are selected deterministically in a cyclic manner.
We show that it converges quickly to a directionally local minimum, and is thus an efficient path planner.
\begin{theorem}[Computational complexity]\label{theorem:BR_finite_iters}
Consider the best reply process where we \RC{select the next node for update in a round robin fashion}.
Then, this process converges after at most $|\mathcal{V}'|^2$ iterations.
\end{theorem}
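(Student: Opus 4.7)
The plan is to combine the potential-game structure of Lemma~\ref{lemma:potential_game} with the round-robin update order, and to track how nodes ``freeze'' at their Nash-equilibrium actions one layer at a time, working outward from the terminal nodes. Let $n = |\mathcal{V}'|$, and call a \emph{round} any block of $n$ consecutive iterations in which each non-terminal node is updated exactly once. I would first record the two structural facts that do most of the work: (i) because the game is an exact potential game, the best reply of any player never strictly increases $\phi$, so once the action profile is unchanged throughout a complete round we are at a pure Nash equilibrium; and (ii) the set $A_{\text{ASG}}$ is finite, so the deterministic round-robin process must eventually stabilize, hence the only question is \emph{how fast}. The bound $|\mathcal{V}'|^2$ says we should expect $n$ rounds, which matches the maximum possible depth of any ASG over $n$ non-terminal nodes, strongly suggesting a layered induction on depth-to-terminal.

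The concrete inductive claim I would attempt is: at the end of round $k$, every non-terminal node whose depth in the eventual Nash-equilibrium successor graph $\mathcal{SG}(\mu^{*})$ is at most $k$ has already locked in its Nash-equilibrium action, i.e.\ $\mu_v(kn) = \mu_v^{*}$ and $\mu_v(k'n) = \mu_v^{*}$ for all $k' \geq k$. The base case $k=0$ is vacuous (only terminals have depth $0$). For the inductive step, I would argue that when a depth-$(k+1)$ node $v$ is updated at some point during round $k+1$, its Nash-equilibrium successor $u = \mu_v^{*}$ lies in $A_v^{c}(\mu_{-v})$ at that moment, because by the inductive hypothesis $u$ already carries its correct Nash-equilibrium expected cost $C_u(\mu^{*}) < \infty$. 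Using the recursion~(\ref{eq:rec_exp_dist}) and the fact that any still-unstabilized alternative $u' \in A_v^{c}(\mu_{-v})$ has $C_{u'}(\mu) \geq C_{u'}(\mu^{*})$ (since the potential along its own downstream subtree can only have decreased toward its Nash value), the best-reply minimization would then pick $u$, locking $v$ in at depth $k+1$.

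The main obstacle will be the second half of the inductive step: controlling the comparison between the Nash successor $u$ and the transient alternatives $u'$ whose subtrees have not yet stabilized. A priori, a transient $u'$ could temporarily carry an expected cost \emph{smaller} than $C_{u'}(\mu^{*})$ if some node below it is currently pointing to a cheap but infeasible-in-the-long-run neighbor, which would appear to make $u'$ preferable to $u$ for $v$. I would handle this by choosing the Nash equilibrium $\mu^{*}$ to be the specific one the deterministic round-robin process converges to (so that ``Nash'' and ``converged'' coincide), and by strengthening the induction to include the statement that $C_w(\mu(kn)) \geq C_w(\mu^{*})$ for every $w$ of depth $> k$, which follows from monotonicity of $\phi$ plus the local recursion. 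Once the inductive claim is established, the theorem follows by noting that the depth of any node in the ASG $\mathcal{SG}(\mu^{*})$ is at most $n$, so after $n$ rounds every node is locked in; this amounts to at most $n \cdot n = |\mathcal{V}'|^2$ individual iterations.
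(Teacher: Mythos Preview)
Your proposal is essentially the same approach as the paper's: a layered induction on depth-to-terminal in the converged successor graph $\mathcal{SG}(\mu^{*})$, showing that after round $k$ all nodes of depth $\leq k$ are locked at their final actions. One refinement worth noting: the paper does \emph{not} fold the inequality $C_w(\mu) \geq C_w(\mu^{*})$ into the inductive hypothesis, nor does it derive it from monotonicity of $\phi$. Instead it proves the stronger per-node, per-iteration monotonicity $C_v(\mu(k+1)) \leq C_v(\mu(k))$ for \emph{all} $v$ as a standalone fact, directly from the best-reply update and the recursion~(\ref{eq:rec_exp_dist}) (the updated node's cost weakly drops, hence so do all its upstream costs, and everything else is unchanged). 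This immediately gives $C_w(\mu(j)) \geq C_w(\mu^{*})$ at every iteration $j$, which is exactly what you need inside the round to rule out a transient alternative looking temporarily cheaper; invoking only $\phi$-monotonicity would not suffice for this per-node bound.
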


\begin{proof}
See Appendix \ref{appendix:proof_br_conv} for the proof.
\end{proof}

\RC{
\begin{remark}
We implement the best reply process by keeping track of the expected cost $C_{v}(\mu(k))$  in memory, for all nodes $v \in \mathcal{V}'$.
In each iteration of the best reply process, we compute the set of upstream nodes of the selected node $v$ in order to compute the set $A_v^{c}(\mu(-k))$.
Moreover, we compute $l_{v\mu_{v}} + C_{\mu_v}(\mu(k))$ for all $\mu_v \in A_v^{c}(\mu(-k))$ to find the action $\mu_v$ that minimizes the expected cost from $v$.
Finally, once $\mu_{v}(k+1)$ is selected, we update the expected cost of $v$ as well as all the nodes upstream of it using (\ref{eq:rec_exp_dist}).
Then, the computation cost of each iteration is $O(|\mathcal{V}'|)$.
Thus, from Theorem \ref{theorem:BR_finite_iters}, the best reply process in a round robin setting has a computational complexity of $O(|\mathcal{V}'|^{3})$.
\end{remark}
}

\AM{
\begin{remark}
From a centralized perspective, the best reply process can be viewed as a type of greedy algorithm for minimizing the potential function $\phi(\mu)=  C_{v_{s}}(\mu) + \epsilon^{'}\sum_{v \neq v_s}C_{v}(\mu)$, which arrives at a locally optimum solution of $\phi(\mu)$.
\end{remark}
}

\subsection{Imposing a Directed Acyclic Graph}\label{subsec:idag}
We next propose an SSP-based path planner.
We enforce that a node cannot be revisited by imposing a directed acyclic graph (DAG), $\mathcal{G}_{\text{DAG}}$, on the original graph.
The state of the SSP formulation of Section \ref{subsec:mdp_formulation} is then just the current node $v \in \mathcal{V}'$.
The transition \RC{probability from state $v$ to state $u$ is then simply given as} $P_{vu}(a_v) = \left\{\begin{array}{lll} 1-p_v, & \text{if } u=f(a_v)\\ 
p_v, & \text{if } u=s_t\\
0, & \text{else}\end{array}\right.$,
where $f(a_v) = \left\{\begin{array}{ll} a_v, & \text{if } a_v \in \mathcal{V}'\\ 
s_{t}, & \text{if } a_v \in T\end{array}\right.$,
and the stage cost \RC{of action $u$ at state $v$} is given as $g(v,u)= (1-p_v)l_{vu}$.
We refer to running value iteration on this SSP as the IDAG (imposing a DAG) path planner.

Imposing a DAG, $\mathcal{G}_{\text{DAG}} = (\mathcal{V}, \mathcal{E}_{\text{DAG}})$, corresponds to modifying the action space of each state $v$ such that only a subset of the neighbors are available actions, i.e., $A_v = \{u:(v,u) \in \mathcal{E}_{\text{DAG}} \}$.
For instance, given a relative ordering of the nodes, a directed edge would be allowed from node $u$ to $v$, only if $v\geq u$ with respect to some ordering.
As a concrete example, consider the case where a directed edge from node $u$ to $v$ exists only if $v$ is farther away from the starting node $v_s$ on the graph than node $u$ is, i.e., $l^{\text{min}}_{v_sv}>l^{\text{min}}_{v_su}$, where $l^{\text{min}}_{v_sv}$ is the cost of the shortest path from $v_s$ to $v$ on the original graph $\mathcal{G}$.
More specifically, the imposed DAG has the same set of nodes $\mathcal{V}$ as the original graph, and the set of edges is given by $\mathcal{E}_{\text{DAG}} = \{(u,v) \in \mathcal{E}: l^{\text{min}}_{v_sv}>l^{\text{min}}_{v_su}\}$, where $(u,v)$ represents a directed edge from $u$ to $v$.
For example, consider an $n\times n$ grid graph, where neighboring nodes are limited to $\{\text{left, right, top, down}\}$ nodes.
In the resulting DAG, only outward flowing edges from the start node are allowed, i.e., edges that take you further away from the start node.
For instance, consider the start node $v_s$ as the center and for each quadrant, form outward moving edges, as shown in Fig. \ref{fig:SSP_DAG}.
In the first quadrant only right and top edges are allowed, in the second quadrant only left and top edges and so on.
Fig. \ref{fig:SSP_DAG} shows an illustration of this, where several feasible paths from  $v_s$ to a terminal node are shown.

\begin{figure}
\centering
\includegraphics[scale=0.25]{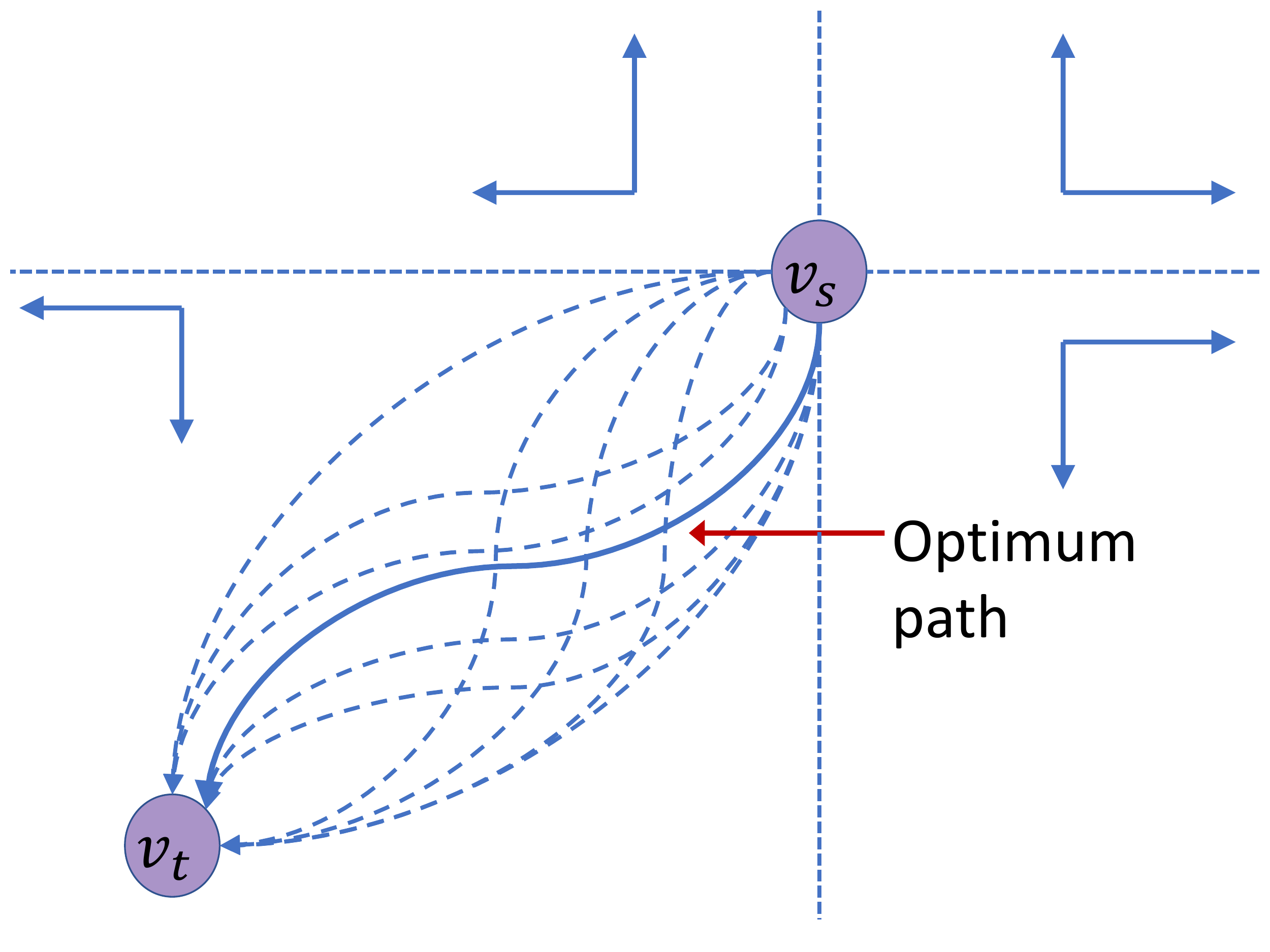}
\caption{\small{A DAG is imposed which allows only ``outward" motion.
The solution produced by SSP would be the best among all such paths from the start node $v_s$ to the terminal node $v_t$.}}
\label{fig:SSP_DAG}
\end{figure}

Imposing this DAG is equivalent to placing the following requirement that a feasible path must satisfy: \emph{Each successive node on the path must be further away from the starting node $v_s$}, i.e., for a path $\mathcal{P}=(v_1=v_s, v_2, \cdots, v_m)$, the condition $l^{\text{min}}_{v_sv_i} > l^{\text{min}}_{v_sv_{i-1}}$ should be satisfied.
In the case of a grid graph with a single terminal node, this implies that a path must always move towards the terminal node, which is a reasonable requirement to impose.
We next show that we can obtain the optimal solution among all paths satisfying this requirement using value iteration.

The optimal solution with minimum expected cost on the imposed DAG $\mathcal{G}_{\text{DAG}}$ can be found by running value iteration:
\begin{align*}
J_{v}(k+1)& = \min_{u \in A_v} \left\{(1-p_v)l_{vu} + (1-p_v)J_{f(u)}(k)\right\},
\end{align*}
with the policy at iteration $k+1$ given by
\begin{align*}
\mu_{v}(k+1)& = \argmin_{u \in A_v} \left\{(1-p_v)l_{vu} + (1-p_v)J_{f(u)}(k)\right\},
\end{align*}
for all $v \in \mathcal{V}'$, where $J_{s_t}(k) = 0$, for all $k$.

\begin{table*}[t]
\centering
\resizebox{0.8\linewidth}{!}{
\begin{tabular}{c|c|c|c|c|c|c|c|c}
Size of grid ($n$) & $5$ & $10$ & $15$ & $20$ & $25$ & $30$ & $40$ & $50$\\ \hline
RTDP (MDP formulation) & $3.510$ & - & - & - & - & - & - & -\\ \hline 
Simulated Annealing & $3.510$ & $7.515$ & $84.138$ & $159.88$ & $150.59$ & $25.997$  & $426.71$ & $406.71$\\ \hline 
Best Reply & $3.510$ & $6.722$ & $8.588$ & $10.437$ & $10.569$ & $11.032$ & $12.930$ & $12.530$\\ \hline 
Log-linear & $3.510$ & $6.722$ & $8.588$ & $10.437$ & $10.569$ & $11.032$ & $13.450$ & $12.186$\\ 
\end{tabular}}
\small \caption{\small Expected traveled distance for the path produced by the various approaches for different grid sizes ($n$) with a single connected point (\RC{$n_t=1$}).
We can see that RTDP is unable to produce a viable path for $n\geq 10$ and that simulated annealing produces paths with poor performance for increasing $n$.}
\label{table:exp_dist_rtdp_sim_ann}
\end{table*}

The following lemma shows that we can find this optimal solution efficiently.
\begin{lemma}[Computational complexity]\label{lemma:idag_comp_complexity}
When starting from $J_{v}(0) = \infty$, for all $v \in \mathcal{V}'$, the value iteration method will yield the optimal solution after at most $|\mathcal{V}'|$ iterations.
\end{lemma}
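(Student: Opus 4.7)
The plan is to exploit the acyclic structure of $\mathcal{G}_{\text{DAG}}$ to show that value iteration sweeps the nodes in reverse topological order, with one layer of nodes being finalized per iteration. Concretely, I will define the \emph{DAG-depth} $d(v)$ of a non-terminal node $v$ as the maximum number of edges on any directed path in $\mathcal{G}_{\text{DAG}}$ from $v$ to a terminal node (with $d(v_t)=0$ for $v_t \in T$). Since $\mathcal{G}_{\text{DAG}}$ is a DAG on $|\mathcal{V}|$ nodes whose sinks are precisely the terminal nodes, every such path is simple and has length at most $|\mathcal{V}'|$, so $d(v) \leq |\mathcal{V}'|$ for every $v \in \mathcal{V}'$.

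The main step is an induction on $k$ showing that $J_v(k) = J_v^{*}$ for every $v$ with $d(v) \leq k$. The base case $k=0$ is trivial because the only nodes with $d(v)=0$ are the terminal nodes, for which $J_{s_t}(k)=0$ by definition. For the inductive step, fix $v$ with $d(v) \leq k+1$; then every successor $u \in A_v$ satisfies $d(f(u)) \leq k$, so by the induction hypothesis $J_{f(u)}(k) = J_{f(u)}^{*}$. Plugging this into the value iteration update
\begin{align*}
J_{v}(k+1) = \min_{u \in A_v}\left\{(1-p_v)l_{vu} + (1-p_v)J_{f(u)}(k)\right\}
\end{align*}
yields the Bellman-optimal value $J_v^{*}$, where I will invoke Lemma~\ref{lemma:mdp_bellman} (the SSP Bellman equation specialized to the DAG-restricted SSP) to identify this expression with $J_v^{*}$.

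Since $d(v) \leq |\mathcal{V}'|$ for every $v \in \mathcal{V}'$, the induction implies $J_v(|\mathcal{V}'|) = J_v^{*}$ for all $v$, which is the claimed bound. The one subtlety worth a sentence is the initialization with $J_v(0) = \infty$: I need to note that, in the first update for any node of DAG-depth $1$, all of its successors are terminal and thus already have the finite value $0$, so the $\min$ is taken over finite quantities and the $\infty$ initialization for non-terminal nodes never corrupts the update. Beyond that the argument is routine, and I do not foresee a real obstacle; the only thing to be careful about is that the definition of ``iteration'' matches the synchronous update written above (all $v$ updated from $J(k)$), so that the layered argument goes through without worrying about stale values within a sweep.
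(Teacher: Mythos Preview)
Your argument is correct and is precisely the standard layered (reverse-topological) induction that underlies the result the paper invokes; the paper itself does not spell out any proof but simply cites the convergence analysis of value iteration on an SSP with an acyclic transition structure in \cite{bertsekas1995dynamic}. So you have written out what the reference would provide, and in that sense your approach and the paper's are the same.

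One small point to tighten: your sentence ``whose sinks are precisely the terminal nodes'' is not guaranteed by the construction of $\mathcal{G}_{\text{DAG}}$ in Section~\ref{subsec:idag}; a non-terminal node could in principle have no outgoing edge (or have outgoing edges only to other such nodes), so $d(v)$ might be undefined for some $v \in \mathcal{V}'$. This does not break the argument, because for any such $v$ the action set yields $J_v(k)=\infty=J_v^{*}$ at every iteration, and hence the inductive step still goes through when such a node appears as a successor. Just replace the unqualified claim about sinks with a one-line observation that nodes without a DAG-path to $T$ keep $J_v(k)=\infty=J_v^{*}$ throughout, and restrict the depth-based induction to the remaining nodes.
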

\begin{proof}
This follows from the convergence analysis of value iteration on an SSP with a DAG structure \cite{bertsekas1995dynamic}.
\end{proof}

\RC{
\begin{remark}
Each stage of the value iteration process has a computation cost of $O(|\mathcal{E}_{\text{DAG}}|)$ since for each node we have as many computations as there are outgoing edges.
Thus, from Lemma \ref{lemma:idag_comp_complexity}, we can see that the computational cost of value iteration is $O(|\mathcal{V}'||\mathcal{E}_{\text{DAG}}|)$.
\end{remark}
}

\begin{remark}
\RC{Log-linear learning, best reply, and IDAG, each have their own pros and cons.
For instance, log-linear learning has strong asymptotic optimality guarantees.
In contrast, best reply converges quickly to a directionally-local minimum but does not possess similar optimality guarantees.
Numerically, for the applications considered in Section \ref{sec:numerical_results}, the best reply solver performs better than the IDAG solver.
However, the IDAG approach is considerably fast and provides a natural understanding of the solution it produces, being particularly suitable for spatial path planning problems.
For instance, as shown in Fig. \ref{fig:SSP_DAG}, the solution of IDAG for the imposed DAG is the best solution among all paths that move outward from the start node.
More generally, it is the optimal solution among all the paths allowed by the imposed DAG.}
\end{remark}

\section{Numerical Results}\label{sec:numerical_results}
In this section, we show the performance of our approaches for Min-Exp-Cost-Path, via numerical analysis of two applications.
In our first application, a rover is exploring mars, to which we refer as the SamplingRover problem.
In our second application, we then consider a realistic scenario of a robot planning a path in order to find a connected spot to a remote station.
We see that in both scenarios our solvers perform well and outperform the naive and greedy heuristic approaches.

\subsection{Sampling Rover}
The scenario considered here is loosely inspired by the RockSample problem introduced in \cite{smith2004heuristic}.
A rover on a science exploration mission is exploring an area looking for an object of interest for scientific studies.
For instance, consider a rover exploring Mars with the objective of obtaining a sample of water.
Based on prior information, which could for instance be from orbital flyovers over the area of interest or from the estimation by experts, the rover has an a priori probability of finding the object at any location.

An instance of the SamplingRover[$n$,\RC{$n_t$}] consists of an $n \times n$ grid with \RC{$n_t$} locations of guaranteed success, i.e., \RC{$n_t$} nodes such that $p_{v}=1$.
The probability of success at each node is generated independently and uniformly from $[0,0.1]$.
At any node, the actions allowed by the rover are $\{\text{left, right, top, down}\}$.
The starting position of the rover is taken to be at the center of the grid, $v_s = \left(\lfloor \frac{n}{2} \rfloor, \lfloor \frac{n}{2} \rfloor \right)$.
When the number of points of guaranteed success (\RC{$n_t$}) is $1$, we take the location of the node with $p_v=1$ at $(0,0)$.

\begin{figure}
    \centering
    \includegraphics[width=0.7\linewidth]{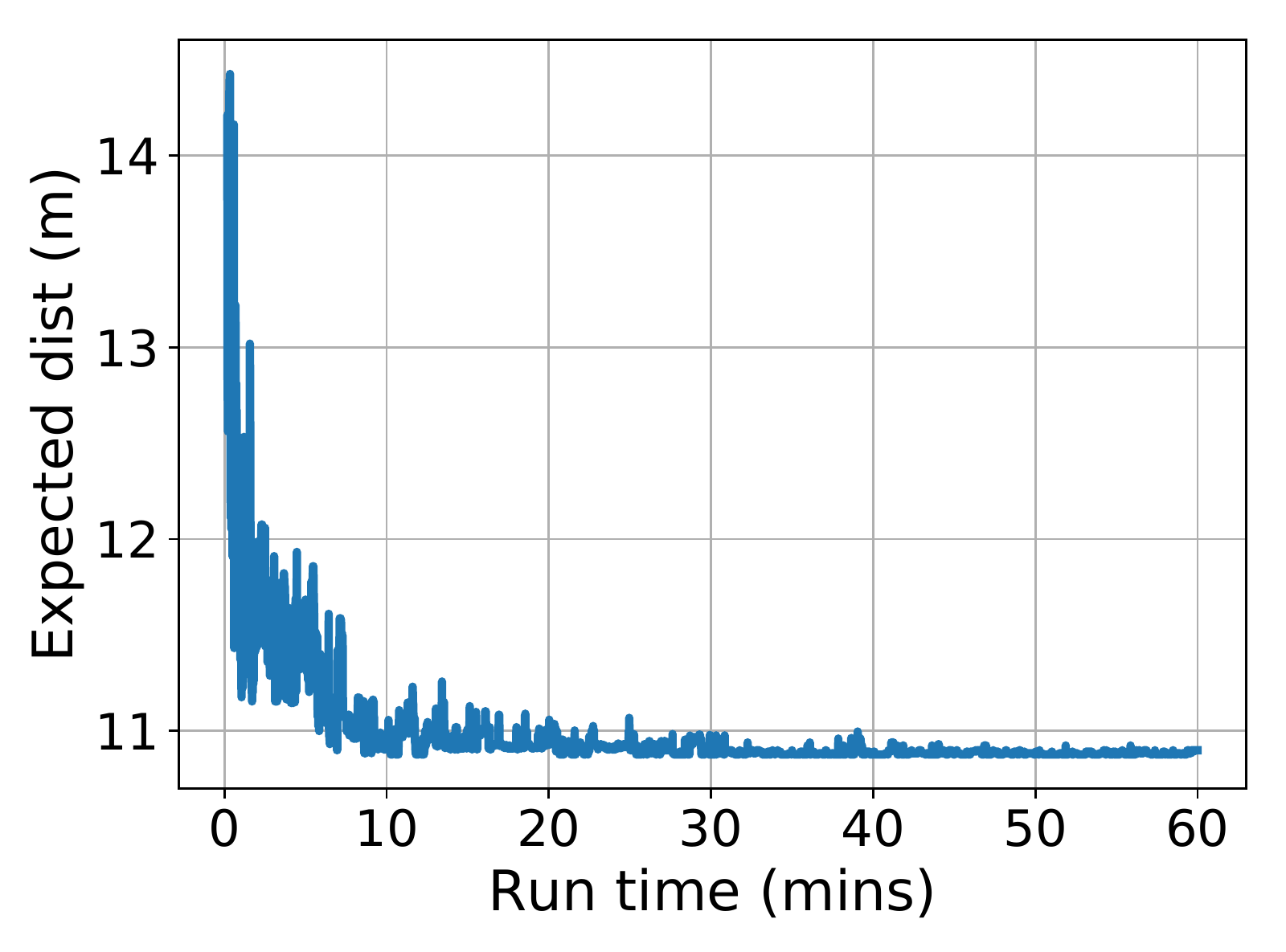}
 \caption{\small Evolution of the expected traveled distance with time for the log-linear learning approach with $n=25$ and \RC{$n_t=1$}.}
    \label{fig:exp_dist_time_log_lin}
\end{figure}

\begin{figure}
    \centering
    \includegraphics[width=0.75\linewidth]{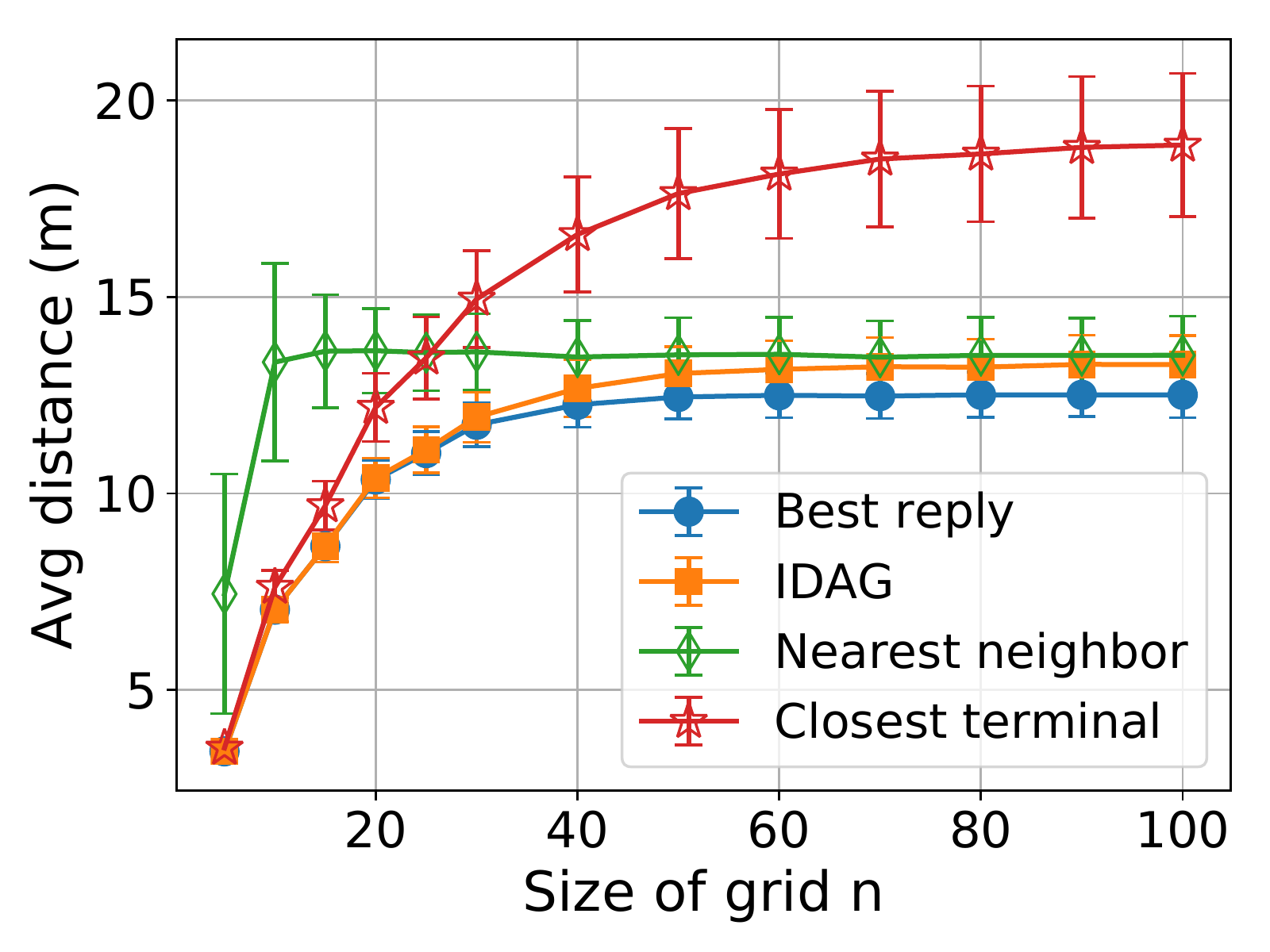}
 \caption{\small The expected traveled distance by the various approaches for different grid sizes ($n$) with a single connected point (\RC{$n_t=1$}).
The results are averaged over $1000$ different probability of success maps.
The corresponding standard deviation is also shown in the form of error bars.
We can see that the best reply and IDAG approaches outperform the greedy and closest terminal heuristics.}
    \label{fig:exp_dist_all_approaches}
\end{figure}

\begin{figure*}[ht]
    \centering
    \begin{subfigure}[b]{0.425\linewidth}
        \centering
        \includegraphics[width=\linewidth]{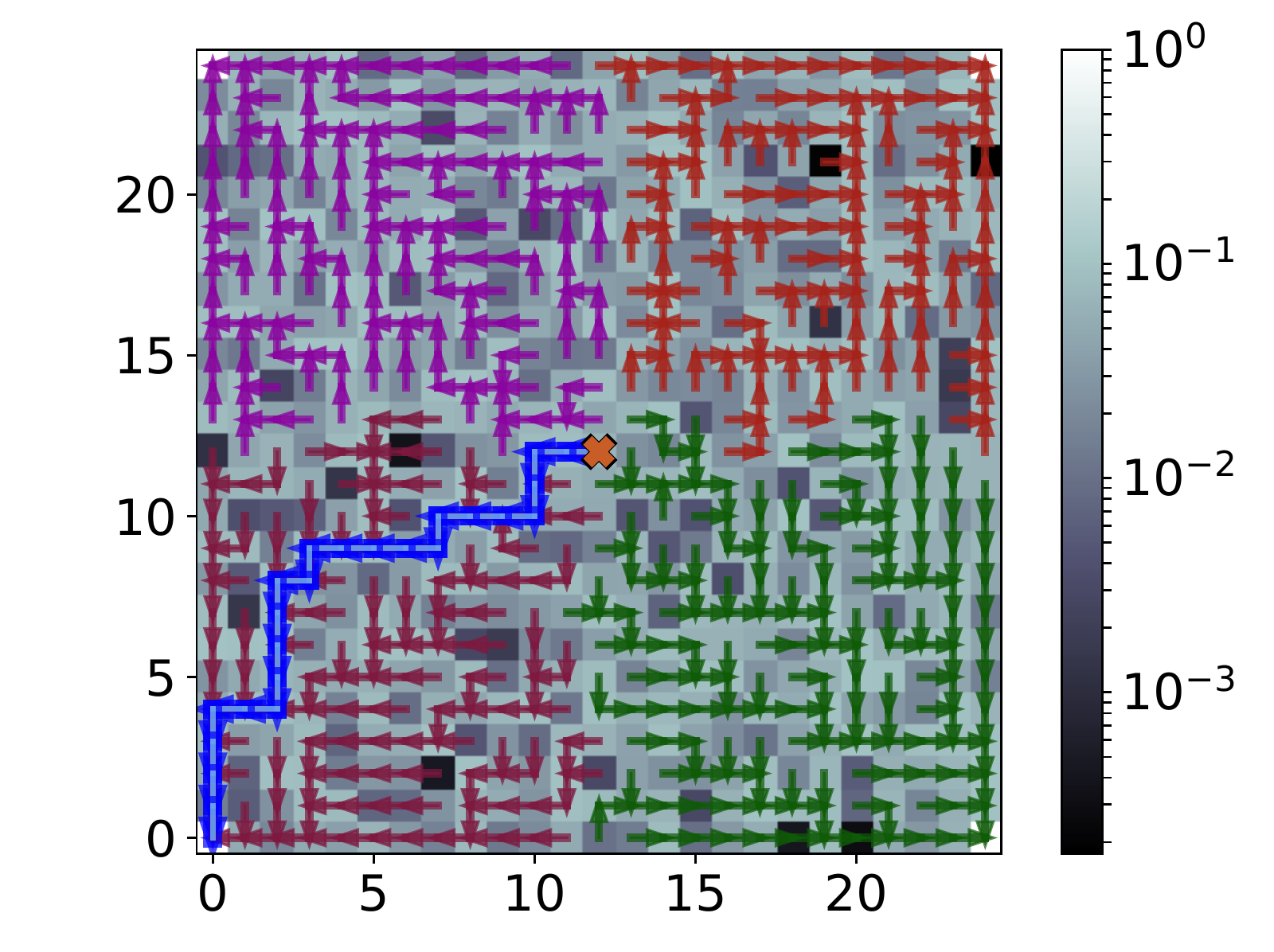}
    \end{subfigure}
    \hspace{0.2in}
    \begin{subfigure}[b]{0.425\linewidth}
        \centering
        \includegraphics[width=\linewidth]{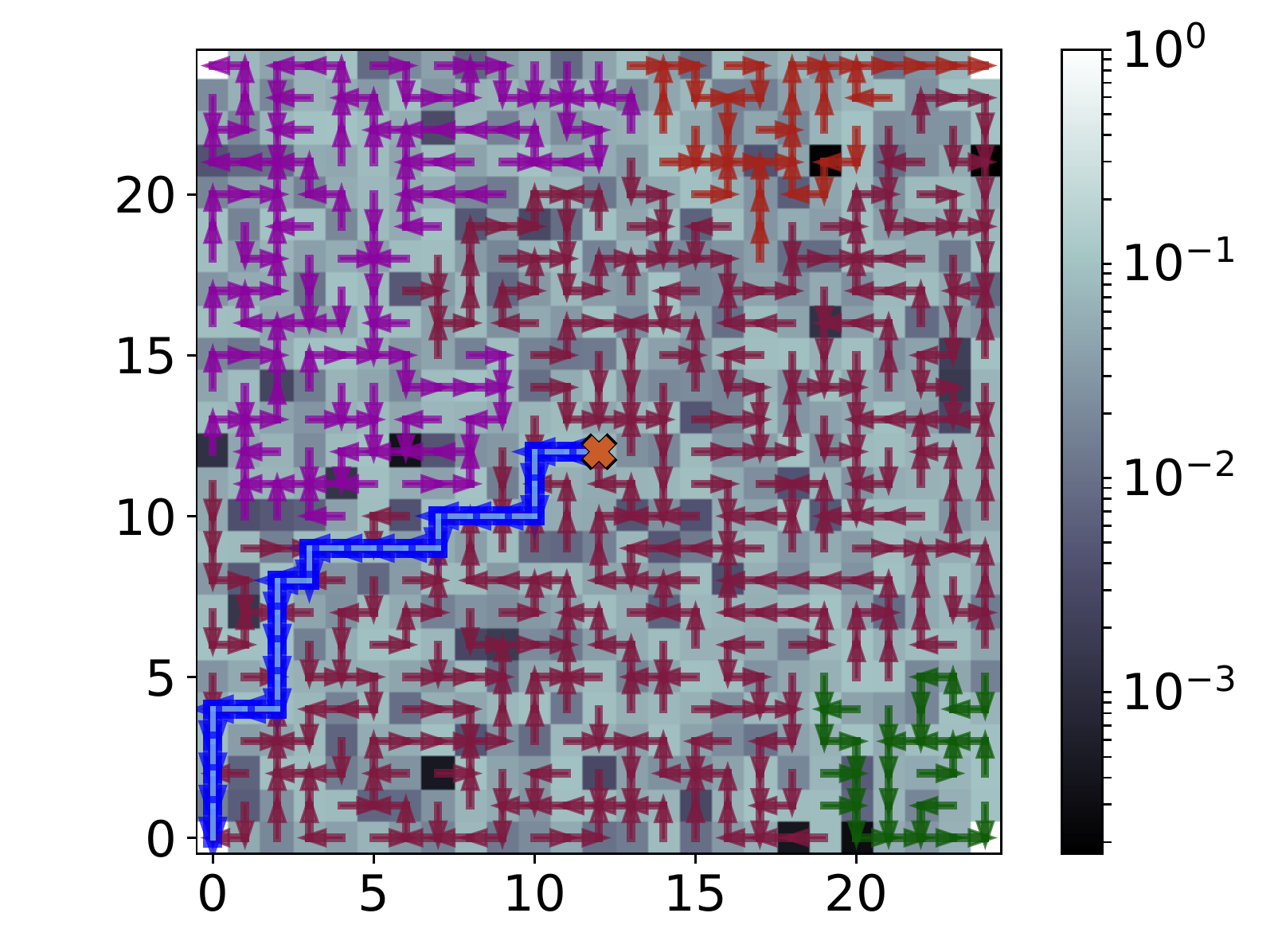}
    \end{subfigure}
\caption{\small Acyclic successor graph (ASG) of (left) best reply process and (right) log-linear learning process for $n=25$, when there are four nodes with $p_v=1$.
\RC{The trees corresponding to each of the four terminal nodes are marked in purple, red, green and brown.}
The path traveled from the starting node is also plotted (in blue).
The starting position at $(12,12)$ is marked by the orange ``x''.
The background color plot specifies the probability of success of each node.
Readers are referred to the color pdf for better visibility.}
    \label{fig:k_4_path_25_asg}
    \vspace{-0.1in}
\end{figure*}

We found that log-linear learning on a complete graph produces similar results as log-linear learning on the original grid graph, but over longer run-times.
Thus, unless explicitly mentioned otherwise, when we refer to the best reply or the log-linear learning approach, it is with respect to finding a simple path on the original grid graph.
We set weight $\epsilon^{'} = 10^{-6}$ in $\phi(\mu)=  C_{v_{s}}(\mu) + \epsilon^{'}\sum_{v \in \mathcal{V}', v \neq v_s}C_{v}(\mu)$.
We use a decaying temperature for log-linear learning.
Through experimentation, we found that a decaying temperature of $\tau \propto k^{-0.75}$ (where $k$ is the iteration number) performs well.

We first compare our approach with alternate approaches for solving the Min-Exp-Cost-Path problem.
We consider one instance of a probability of success map.
We then implement Real Time Dynamic Programming (RTDP) \cite{barto1995learning}, which is a heuristic search method that tries to obtain a good solution quickly for the MDP formulation of Section \ref{subsec:mdp_formulation}.
Furthermore, we also implemented Simulated Annealing as implemented in \cite{kirkpatrick1983optimization} for the traveling salesman problem, where we modify the cost of a state to be the expected cost from the starting node.
Moreover, the starting position of the rover is fixed as the start of the simulated annealing path.
Table \ref{table:exp_dist_rtdp_sim_ann} shows the performance of RTDP, simulated annealing and our (asymptotically  $\epsilon$-suboptimal) log-linear learning  and  (non-myopic fast) best reply  approaches for various grid sizes ($n$) when \RC{$n_t=1$}, where for each approach we impose a computational time limit of an hour.
We see that RTDP is unable to produce viable solutions for $n\geq 10$ due to the state explosion problem of the MDP formulation, as discussed in Section \ref{subsec:mdp_formulation}.
Moreover, the performance of simulated annealing worsens significantly with increasing values of $n$.
On the other hand, the best reply and log-linear learning approach produce solutions with good performance that outperform simulated annealing considerably (e.g., simulated annealing has $15$ times more expected traveled distance than the best reply approach for $n=20$).

We next show the asymptotically $\epsilon$-suboptimal behavior of the log-linear learning approach of Section \ref{sec:asmpt_near_opt_planner}.
Fig. \ref{fig:exp_dist_time_log_lin} shows the evolution of the expected distance with time for the solution produced by log-linear learning for an instance of a probability of success map with $n=25$ and \RC{$n_t=1$}.
\RC{In comparison, the best reply and IDAG approaches converged in $1.75$ s and $0.25$ s respectively.}
\begin{remark}
We note that based on several numerical results, we have observed that the best reply and IDAG approaches produce results very close to those produced by log-linear learning.
They thus act as fast efficient solvers.
On the other hand, the log-linear learning approach provides a guarantee of optimality (within $\epsilon$) asymptotically.
Thus all $3$ approaches are useful depending on the application requirements.
\end{remark}

We next compare our proposed approaches with two heuristics.
The first is a heuristic of moving straight towards the closest node with $p_{v}=1$, which we refer to as the \emph{closest terminal} heuristic.
The second is a myopic greedy heuristic, where the rover at any time moves towards the node with the highest $p_v$ among its unvisited neighbors.
We refer to this as the \emph{nearest neighbor} heuristic.
\RC{These are} similar to strategies utilized in the optimal search theory literature \cite{chung2012analysis, bourgault2003optimal}, where myopic strategies with limited lookahead are typically utilized.
Fig. \ref{fig:exp_dist_all_approaches} shows the performance of the best reply, IDAG, nearest neighbor and closest terminal heuristic for various grid sizes ($n$) when \RC{$n_t=1$}.
We generated a $1000$ different probability of success maps, and averaged the expected traveled distance over them to obtain the plotted performance for each $n$.
Also, the error bars in the plot represent the standard deviation of each approach.
In Fig. \ref{fig:exp_dist_all_approaches}, we can see that the best reply and IDAG approach outperform the greedy nearest neighbor heuristic as well as the closest terminal heuristic significantly.
Moreover, the best reply approach outperforms the IDAG approach for larger $n$.

In order to gain more insight into the nature of the solution produced by our proposed approaches, we next consider a scenario where \RC{$n_t=4$}, where we place the four nodes of guaranteed success at the four corners of the workspace, i.e., at $(0,0)$, $(0,n-1)$
$(n-1,0)$ and $(n-1,n-1)$.
Fig. \ref{fig:k_4_path_25_asg} shows the ASG of the best reply process and log-linear learning for a sample such scenario, where we impose a computational time limit of $1$ hour on the log-linear learning approach.
\AM{Best reply process carried out multiple times. How to express this?}
We see that in both cases, the resulting ASG is a forest with $4$ trees, each denoted with a different color in Fig. \ref{fig:k_4_path_25_asg}, where the roots of the $4$ trees correspond to the $4$ nodes of guaranteed success.
As discussed in Section \ref{sec:non_myopic_planners}, the solution ASG of the best reply process is an equilibrium where no node can improve its expected traveled distance by switching the neighbor it routes to.
The route followed by the rover is also plotted on the ASG, which can be seen to visit nodes of higher probability of success.
Fig. \ref{fig:k_4_path_25_plan_conn_path} shows a plot of the routes traveled by the IDAG and the nearest neighbor approach.
In this instance, the paths produced by the best reply and log-linear learning approach were the same as that of the IDAG approach.


\begin{figure}
    \centering
     \includegraphics[width=0.7\linewidth]{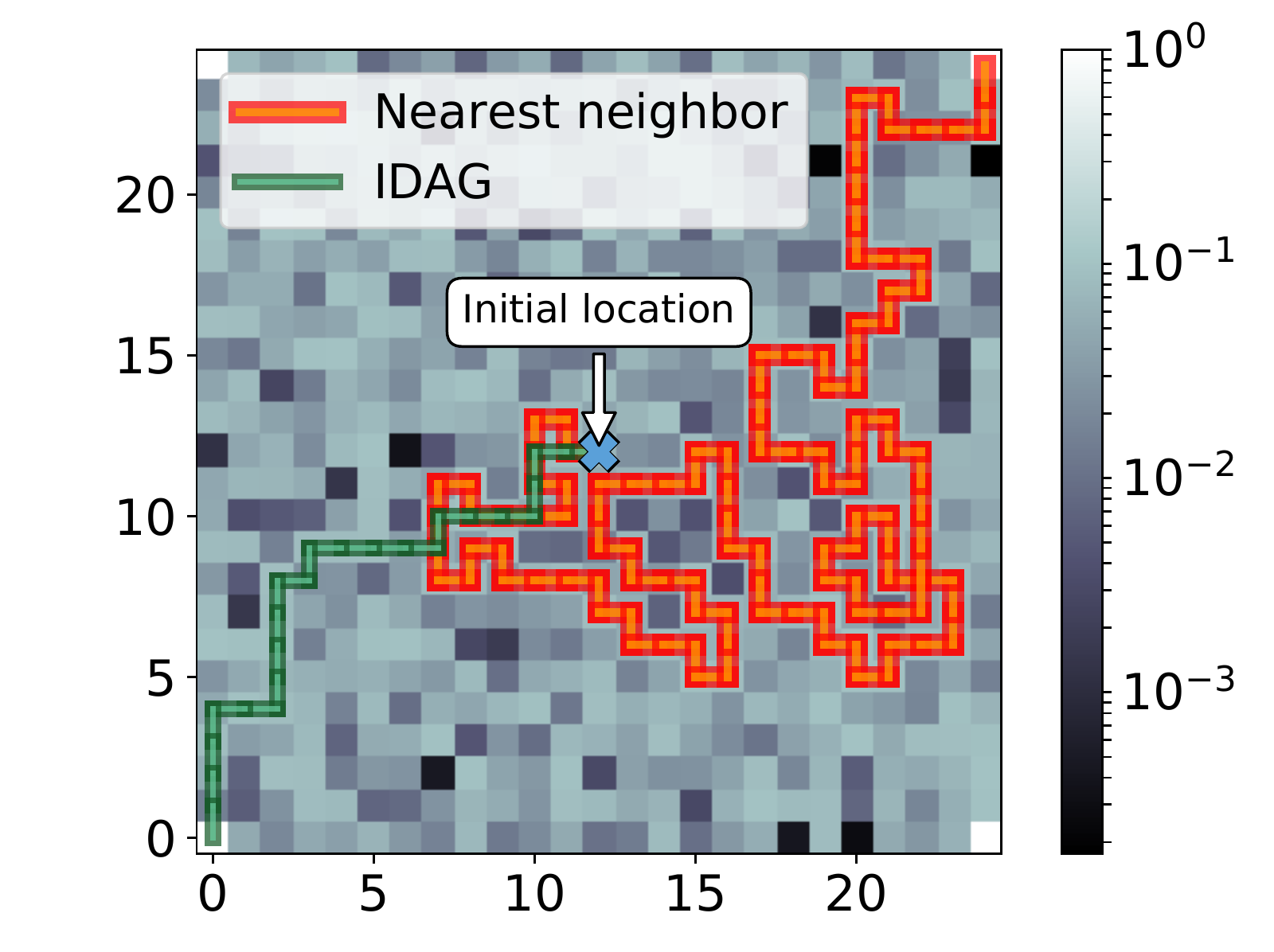}
\caption{\small Path traveled by IDAG and nearest neighbor approach for $n=25$, when there are four nodes with $p_v=1$.
The solution path produced by the best reply and log-linear learning approaches are the same as that of the IDAG approach in this instance.
The background color plot specifies the probability of success of each node.
Readers are referred to the color pdf for better visibility.}
    \label{fig:k_4_path_25_plan_conn_path}
\end{figure}


We next consider the case of \RC{$n_t=0$}, which corresponds to no terminal node being present.
This is an instance of the Min-Exp-Cost-Path-NT problem.
In this setting, the solution we are looking for is a tour of all nodes $\{v \in \mathcal{V}:p_{v}>0\}$ that minimizes $\sum_{e \in \mathcal{E}(\mathcal{P})} \prod_{v \in \mathcal{V}(\mathcal{P}_{e})} (1-p_v) l_{e}$.
In order to facilitate the use of our approaches on the Min-Exp-Cost-Path-NT problem, we introduce a terminal node in the grid graph as discussed in the construction in the proof of Lemma \ref{lemma:mecpntd_red_mecpd}.
We include an edge weight $l = 1.5 \times \frac{D}{\min_{v} p_{v}}$ between the artificial terminal node and all other nodes, where $D=2n$ is the diameter of the graph.
Note that these solution paths may not visit all the nodes in the grid graph, due to the limited computation time.
The best reply process was run $100$ times and the best solution was selected among the solutions produced.
Moreover, we impose a computational time limit of $1$ hour on the log-linear learning approach.
Fig. \ref{fig:tour_25_asg} shows the ASG for the best reply and log-linear learning process as well as the path traveled from the starting node for both cases.
We can see that the paths produced by both approaches traverse through nodes of high probability of success.
Since success is not guaranteed when traversing along a solution path of an approach, expected distance \RC{until} success is no longer well defined.
In other words, we no longer have a single metric by which to judge the quality of a solution.
Instead, we now have two metrics, the probability of failure along a path and the expected distance of traversing the path.
Table. \ref{table:tour_performance} shows the performance of the best reply and log-linear approaches on these metrics for the sample scenario shown in Fig. \ref{fig:tour_25_asg}.
We see that both best reply and log-linear approaches produce a solution with good performance.

\begin{figure}
       \hspace{-0.175in}
        \includegraphics[width=1.1\linewidth]{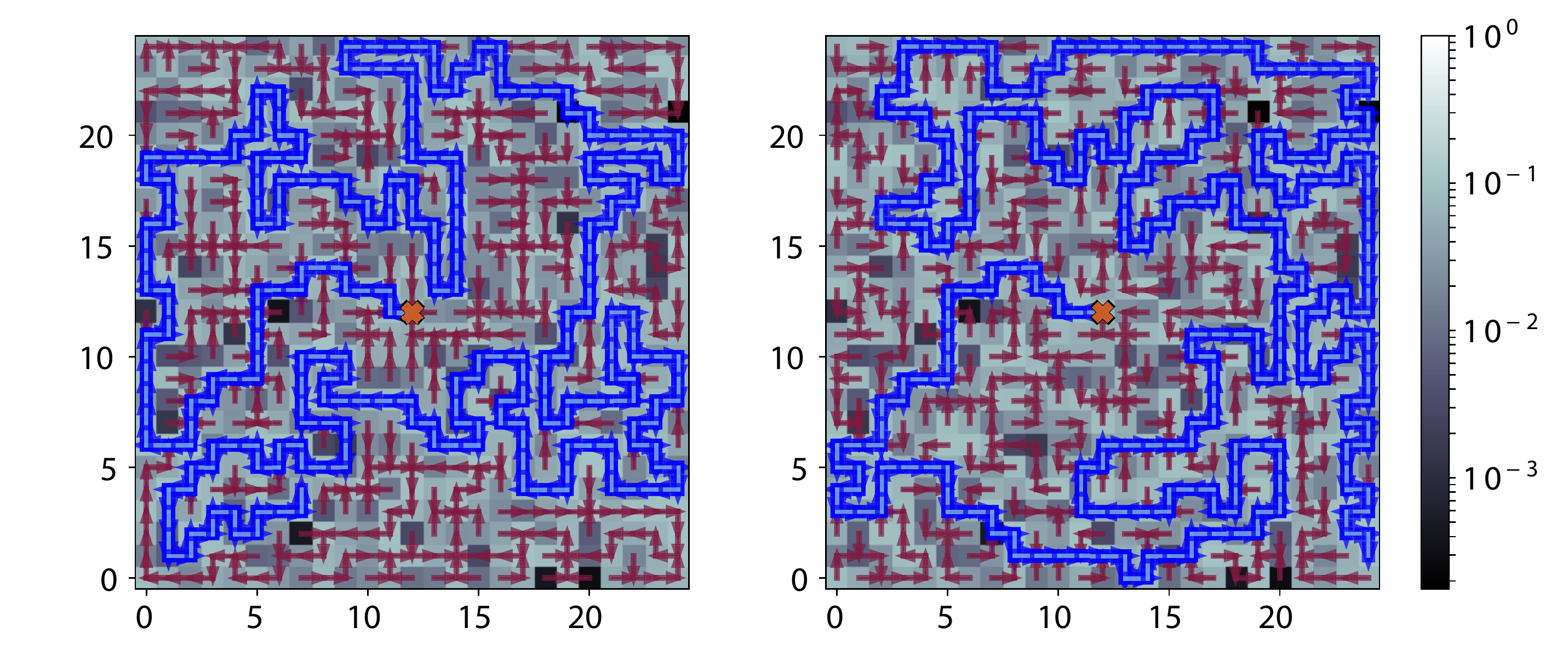}
\caption{\small Acyclic successor graph (ASG) of (left) best reply process and (right) log-linear learning process for $n=25$ when there is no terminal node.
The path traveled from the starting node is also plotted (in blue).
The starting position at $(12,12)$ is marked by the orange ``x''.
The background color plot specifies the probability of success of each node.}
    \label{fig:tour_25_asg}
    \vspace{-0.2in}
\end{figure}


\begin{table}
\centering
\resizebox{0.95\linewidth}{!}{%

\begin{tabular}{c|c|c}
 & Exp distance (m) & Prob of fail along path \\ \hline 
Best reply & $12.40$ & $3.5e-7$  \\ \hline 
Log-linear learning & $12.51$ & $2.5e-7$
\end{tabular}}
\caption{\small The expected traveled  distance and the probability of failure along path for the best reply and log-linear learning approaches.
}
\label{table:tour_performance}
\end{table}

\subsection{Connectivity seeking robot}\label{subsec:conn_seeking_robot}
In this section, we consider the scenario of a robot seeking to get connected to a remote station.
We say that the robot is connected if it is able to reliably transfer information to the remote station.
This would imply satisfying a Quality of Service (QoS) requirement such as a target bit error rate (BER), which would in turn imply a minimum required received channel power given a fixed transmit power.
Thus, in order for the robot to get connected, it needs to find a location where the channel power, when transmitting from that location, would be greater than the minimum required channel power.
However, the robot's prior knowledge of the channel is stochastic. 
Thus, for a robot seeking to do this in an energy efficient manner, its goal would be to plan a path such that it gets connected with a minimum expected traveled distance.

For the robot to plan such a path, it would require an assessment of the channel quality at any unvisited location.
In previous work, we have shown how the robot can probabilistically predict the spatial variations of the channel based on a few a priori measurements \cite{malmirchegini2012spatial}.
Moreover, we consider the multipath component to be time varying as in \cite{muralidharan2018pconn}.
See \cite{malmirchegini2012spatial} for details on this channel prediction as well as performance of this framework with real data and in different environments.

Consider a scenario where the robot is located in the center of a $50$ m $\times$ $50$ m workspace as shown in Fig. \ref{fig:path_plan_conn_path}, with the remote station located at the origin.
The channel is generated using the realistic probabilistic channel model in \cite{goldsmith2005wireless, malmirchegini2012spatial}, with the following parameters that were obtained from real channel measurements in downtown San Francisco \cite{smith2004urban} : path loss exponent $n_{\text{PL}} = 4.2$, shadowing power $\sigma_{\text{SH}} = 2.9$ and shadowing decorrelation distance $\beta_{\text{SH}} = 12.92$ m.
Moreover, the multipath fading is taken to be uncorrelated Rician fading with the parameter $K_{\text{ric}} = 1.59$.
In order for the robot to be connected, we require a minimum required received power of $P_{R,\text{th},\text{dBm}} = -80$ dBmW.
We take the maximum transmission power of a node to be $P_{0,\text{dBm}} = 27$ dBmW \cite{lonn2004output}.

The robot is assumed to have $5$ \% a priori measurements in the workspace.
It utilizes the channel prediction framework described above to predict the channel at any unvisited location.
We discretize the workspace of the robot into cells of size $1$ m by $1$ m.
A cell is connected if there exists a location in the cell that is connected.
Then, the channel prediction framework of \cite{malmirchegini2012spatial} is utilized to estimate the probability of connectivity of a cell.
See \cite{muralidharan2018pconn} for more details on this estimation.
We next construct a grid graph with each cell serving as a node on our graph.
This gives us a grid graph of dimension $50$x$50$ with a probability of connectivity assigned to each node.
We also add a new terminal node to the graph with probability of connectivity $1$, which represents the remote station at the origin.
We attach the node in the workspace closest to the remote station to this terminal node with an edge cost equal to the expected distance \RC{until} connectivity when moving straight towards the remote station from the node.
This can be calculated based on the work in \cite{muralidharan2017fpd}.

\begin{figure}
    \centering
    \includegraphics[width=0.75\linewidth]{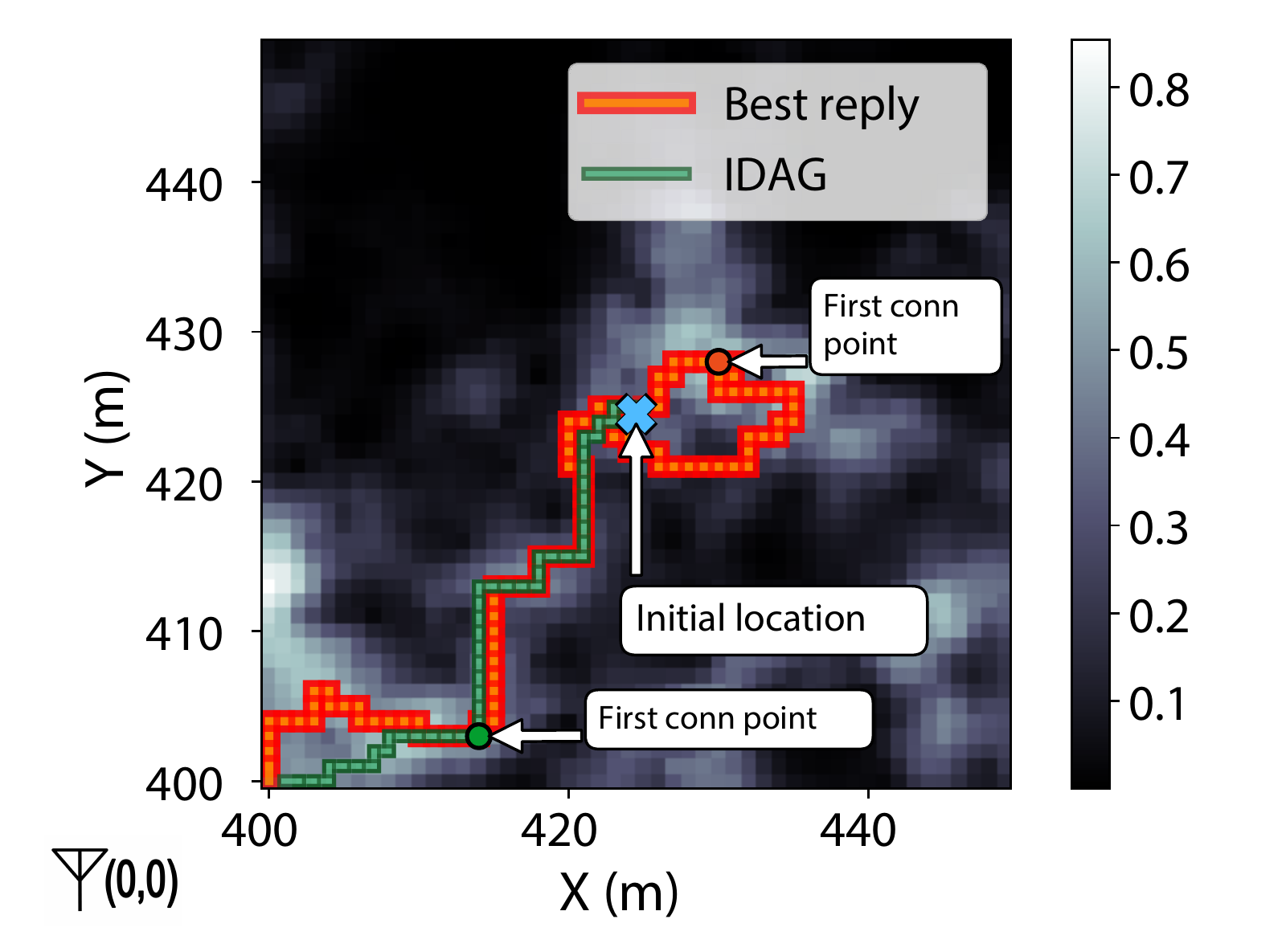}
\caption{\small Solution paths produced by the best reply and IDAG approaches for a channel realization.
Also shown is the first connected node on the respective paths for the true channel realization.
The background plot denotes the predicted probability of connectivity, which is used by the robot for path planning.}
    \label{fig:path_plan_conn_path}
\end{figure}

\begin{table*}[ht]
\centering
\resizebox{0.67\linewidth}{!}{%

\begin{tabular}{c|c|c|c|c}
 & Best reply & IDAG & Nearest neighbor & Closest terminal \\ \hline 
Avg distance (m) & $28.40 \rpm 25.93 $ & $32.90 \rpm 26.12$ & $44.17 \rpm 56.22$ & $50.24 \rpm 30.38$ 
\end{tabular}}
\caption{\small The average traveled distance along with the corresponding standard deviation, for our proposed approaches and for the greedy nearest neighbor and closest terminal heuristic approaches.
The average is obtained by averaging over $500$ channel realizations.
We can see that our approaches results in a significant reduction in the traveled distance.}
\label{table:performance_ch_approach}
\end{table*}

\begin{figure}
    \centering
    \includegraphics[width=0.7\linewidth]{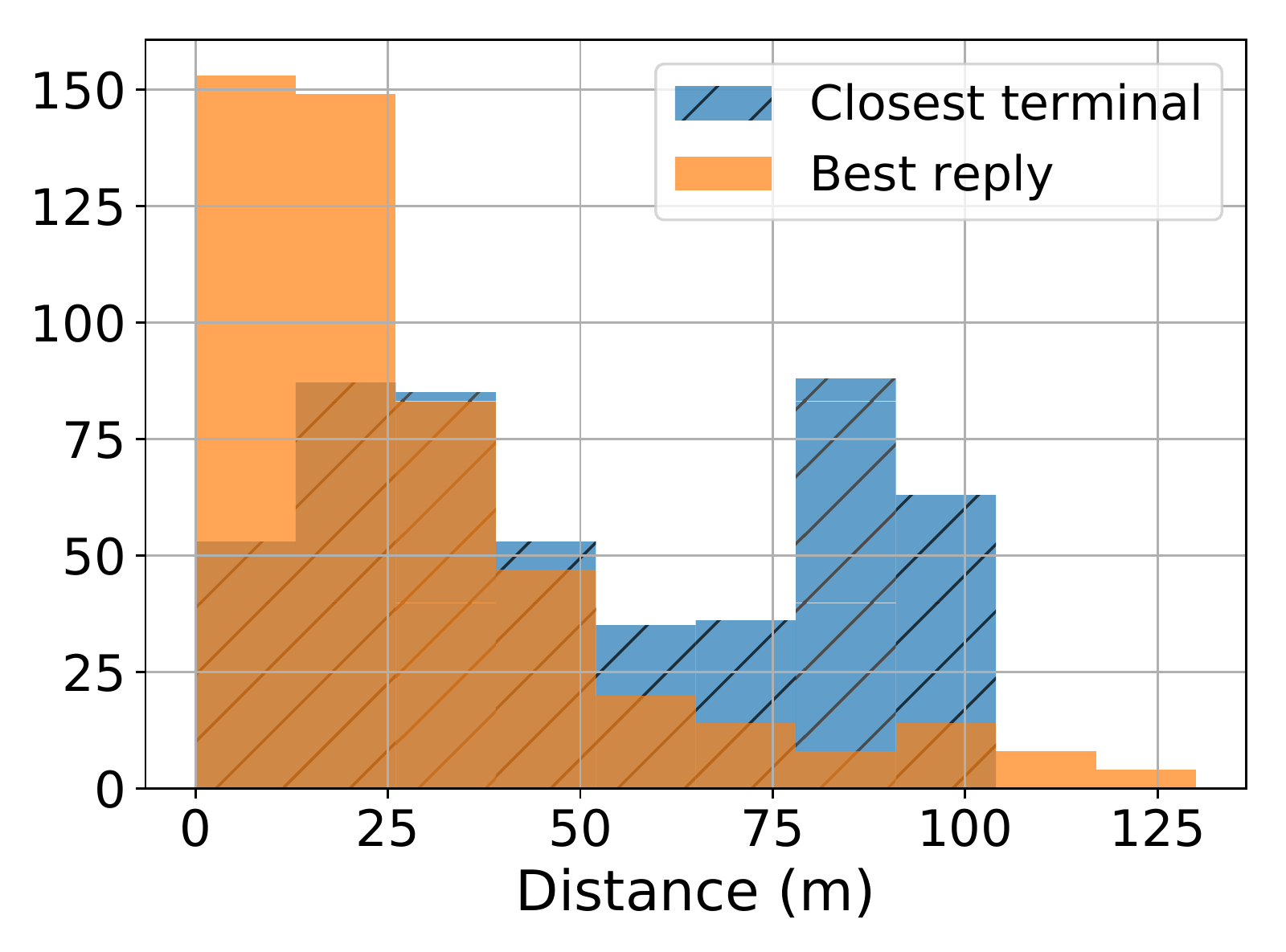}
\caption{\small Histogram of the expected cost of the best reply and closest terminal heuristic over $500$ channel realizations.}
    \vspace{-0.1in}
    \label{fig:hist_approaches}
\end{figure}

We next compare our proposed approaches with the greedy nearest neighbor heuristic as well as the closest terminal heuristic of moving straight towards the remote station.
We calculate the performance of the approaches based on the true probability of connectivity of a node calculated based on the true value of the channel.
Fig. \ref{fig:path_plan_conn_path} shows the solution path produced by the best reply and IDAG heuristic for a sample channel realization.
The background plot denotes the predicted probability of connectivity.
We see that the paths produced take detours on the path to the connected point to visit areas of good probability of connectivity.
Table. \ref{table:performance_ch_approach} shows the expected distance along with the corresponding standard deviation, for the best reply, IDAG, nearest neighbor and closest terminal approaches averaged over $500$ channel realizations.
We do not include the performance of log-linear as it takes longer to arrive at a good solution and is thus impractical to average over $500$ channel realizations.
However, in our simulations, we did observe that the performance of best reply was generally similar to the performance of log-linear learning.
We see that the best reply and IDAG approach outperformed the nearest neighbor and closest terminal heuristics significantly.
For instance, the best reply approach provided an overall $35 \%$ and $44 \%$ reduction in the expected traveled distance when compared to the nearest neighbor and closest terminal heuristics respectively.
Fig. \ref{fig:hist_approaches} shows the histogram of the expected cost of the best reply and closest terminal heuristic over the $500$ channel realizations.
We can see that the expected cost associated with the best reply heuristic is typically better than that associated with the closest terminal heuristic.

\begin{remark}
Note that our framework can be extended to the case where the robot updates the probabilities of success as it operates in the environment.
\end{remark}

\section{Conclusions \RC{and future work}}
In this paper, we considered the problem of path planning on a graph for minimizing the expected cost \RC{until} success.
We showed that this problem is NP-hard and that it can be posed in a Markov Decision Process framework as a stochastic shortest path problem.
We proposed a path planner based on a game-theoretic framework that yields an $\epsilon$-suboptimal solution to this problem asymptotically.
In addition, we also proposed two non-myopic suboptimal strategies that find a good solution efficiently.
Finally, through numerical results we showed that the proposed path planners outperform naive and greedy heuristics significantly.
We considered two scenarios in the simulations, that of a rover on mars searching for an object for scientific study, and that of a realistic path planning scenario for a connectivity seeking robot.
Our results then indicated a significant reduction in the expected traveled distance (e.g., $35 \%$ reduction for the path planning for connectivity scenario), when using our proposed approaches.

\RC{There are several open questions and interesting directions to pursue in this area.
One such direction is developing algorithms with provable performance guarantees that run in polynomial time ($\alpha$-approximation algorithms) for the Min-Exp-Cost-Path problem.
The applicability of the results of this paper to areas such as satisficing search and theorem solving \cite{simon1975optimal} is another interesting future direction.
}

\AM{The paper \cite{molaverdikhani2009mapping} creates a probability map of finding habitable exoplanets in space.
We can combine this with the fact that TSP is used in Astronomy \cite{bailey2000fuel} to motivate applying our approach to this.
Another example is exploration for water on mars.
Based on the orbiter around mars, scientists have hypotheses that water may be found in some valley like creeks (by creating a gravitational map) and have recently confirmed (in 2015) the presence of saltwater in one of those creeks.
Imagine the ground rover searching for water and we have probability map of finding water at those locations.}

\appendix
\section{Appendix}

\subsection{Proof of Lemma \ref{lemma:simple_path_complete_graph}}\label{appendix:lemma_simple_path}

We first describe some properties of the solution of Min-Exp-Cost-Path and Min-Exp-Cost-Simple-Path.

\begin{definition}
Consider a path $\mathcal{P}=(v_1,v_2,\cdots, v_m)$. 
A node $v_i$ is a \emph{revisited} node in the $i^{\text{th}}$ location of $\mathcal{P}$ if $v_i=v_j$ for some $j<i$.
A node $v_i$ is a \emph{first-visit} node in the $i^{\text{th}}$ location of $\mathcal{P}$ if $v_i\neq v_j$ for all $j<i$.
\end{definition}

\begin{property}\label{property:min_exp_cost_path}
Let $\mathcal{P}^{*}=(v_1,v_2,\cdots, v_m)$ be a solution to Min-Exp-Cost-Path on $\mathcal{G}$.
Consider any subpath $(v_{i},v_{i+1},\cdots,v_{j-1},v_{j})$ of $\mathcal{P}^{*}$ such that $v_i$ and $v_j$ are first-visit nodes, and  $v_{i+1},\cdots,v_{j-1}$ are revisited nodes.
Then, \RC{$(v_{i},v_{i+1},\cdots,v_{j-1}, v_{j})$} is the shortest path between $v_i$ and $v_j$.
\end{property}

\begin{proof}
We show this by contradiction.
Assume otherwise, i.e., \RC{$(v_{i},v_{i+1},\cdots,v_{j-1}, v_{j})$} is not the shortest path between $v_{i}$ and $v_{j}$.
Let $\mathcal{Q}$ be the path produced by replacing this subpath in $\mathcal{P}^{*}$ with the shortest path between $v_i$ and $v_j$.
\RC{
Let us denote this shortest path by $(v_i, u_{i+1}, \cdots, u_{\tilde{j}-1}, u_{\tilde{j}})$ where $u_{\tilde{j}} = v_{j}$.
Then,
\begin{align*}
C(\mathcal{Q}, i) & =   (1-p_{v_i})\Bigg[ l_{v_{i}u_{i+1}} + \Big[\prod_{k \in \mathcal{K}_{i+1}}(1-p_{u_k})\Big]l_{u_{i+1}u_{i+2}} \\
& \;\;+ \cdots + \Big[\prod_{k \in \mathcal{K}_{\tilde{j}-1}}(1-p_{u_k})\Big]\Big[l_{u_{\tilde{j}-1}u_{\tilde{j}}} + C(\mathcal{Q},\tilde{j})\Big] \Bigg]\\
& \leq (1-p_{v_i})\Bigg[ l_{v_{i}v_{j}}^{\text{min}} + \Big[\prod_{k \in \mathcal{K}_{\tilde{j}-1}}(1-p_{u_k})\Big]C(\mathcal{Q},\tilde{j})\Bigg],
\end{align*}
where $\mathcal{K}_{m} = \{k\in \{i+1,\cdots,m\}: u_k \text{ is a first visit node of } \mathcal{Q}\}$.
}
The nodes \RC{$(u_{i+1}, \cdots, u_{\tilde{j}})$} could be first visit nodes of $\mathcal{Q}$ or repeated nodes.
We next show that in either scenario the expected cost of $\mathcal{Q}$ would be smaller than that of $\mathcal{P}$.
\RC{
If they are all revisited nodes or if they are first-visit nodes that are not revisited after node $u_{\tilde{j}}$, then $C(\mathcal{Q},\tilde{j}) = C(\mathcal{P}^{*}, j)$.
If some or all of $(u_{i+1}, \cdots, u_{\tilde{j}})$ are first-visit nodes of $\mathcal{Q}$ that are visited later on, then $[\prod_{k \in \mathcal{K}_{\tilde{j}-1}}(1-p_{u_k})]C(\mathcal{Q},\tilde{j}) \leq C(\mathcal{P}^{*}, j)$, since success at a first visit node $u_k$ can occur earlier in  path $\mathcal{Q}$ in comparison to $\mathcal{P}^{*}$ (which discounts the cost of all following edges).
Thus, in either case, we have the inequality
\begin{align*}
C(\mathcal{Q}, i) & \leq (1-p_{v_{i}}) \left[l_{v_{i}v_{j}}^{\text{min}}+C(\mathcal{P}^{*},j)\right]\\
& <  (1-p_{v_i})\left[ l_{v_{i}v_{i+1}} + \cdots + l_{v_{j-1}v_{j}} + C(\mathcal{P}^{*},j) \right]\\
& = C(\mathcal{P}^{*}, i).
\end{align*}
This implies that $C(\mathcal{Q},1) < C(\mathcal{P}^{*},1)$ resulting in a contradiction.
} 
\end{proof}

\begin{property}\label{property:min_exp_cost_simple_path}
Let $\mathcal{P}^{*}=(v_1,v_2,\cdots, v_m)$ be a solution of Min-Exp-Cost-Simple-Path on complete graph $\mathcal{G}_{\text{comp}}$.
Consider any two consecutive nodes $v_i$ and $v_{i+1}$.
The shortest path between $v_i$ and $v_{i+1}$ in $\mathcal{G}$ would only consist of nodes that have been visited earlier in $\mathcal{P}^{*}$.
\end{property}
\AM{This is true only if the nodes on the shortest path have non-zero probability of connectivity.}
\begin{proof}
Suppose this is not true for consecutive nodes $v_i$ and $v_{i+1}$.
Then there exists at least a single node $u$ that lies on the shortest path between $v_i$ and $v_{i+1}$, and that has not been visited earlier in $\mathcal{P}^{*}$.
Let $\mathcal{Q}$ be the path formed from $\mathcal{P}^{*}$ when $u$ is added between $v_{i}$ and $v_{i+1}$. 
The expected cost of $\mathcal{Q}$ from the $i^{\text{th}}$ node onwards is given by 
\begin{align*}
C(\mathcal{Q},i) & = (1-p_{v_{i}}) \left[l_{v_{i}u} + (1-p_{u})\left[l_{uv_{i+1}} + C(\mathcal{Q},i+2)\right]\right] \\
& < (1-p_{v_{i}}) \left[l_{v_{i}v_{i+1}}+C(\mathcal{P}^{*},i+1)\right].
\end{align*}
This implies that the expected cost of $Q$ would be less than that of $\mathcal{P}^{*}$, resulting in a contradiction.
\end{proof}

\begin{proof}[Proof of Lemma \ref{lemma:simple_path_complete_graph}]
Let $\mathcal{P}$ be the solution to Min-Exp-Cost-Path on $\mathcal{G}$ and let $\mathcal{Q}$ be the solution of the Min-Exp-Cost-Simple-Path on $\mathcal{G}_{\text{comp}}$.
From Property \ref{property:min_exp_cost_path}, we know that the path produced by removing revisited nodes in $\mathcal{P}$, will produce a feasible solution to Min-Exp-Cost-Simple-Path on $\mathcal{G}_{\text{comp}}$ with the same cost as $\mathcal{P}$.
Thus, the cost of $\mathcal{P}$ is greater than or equal that of $\mathcal{Q}$.
Similarly, from Property \ref{property:min_exp_cost_simple_path}, we know that the path produced by expanding the shortest path between any adjacent nodes in $\mathcal{Q}$, will be a feasible solution to Min-Exp-Cost-Path on $\mathcal{G}$ with the same cost as $\mathcal{Q}$.
Thus, this path produced from $\mathcal{Q}$ will be an optimal solution to Min-Exp-Cost-Path on $\mathcal{G}$.
\end{proof}

\RC{
\subsection{Proof of Theorem \ref{theorem:log_lin_asympt}}\label{appendix:proof_log_lin_asympt}
Log-linear learning induces a Markov process on the action profile space $A_{\text{ASG}} \cup A_{\emptyset}$, where $A_{\emptyset} = \{\mu: \mu_{v}=a_{\emptyset} \text{ for some } v\}$.
In the following lemma, we first show that $A_{\text{ASG}}$ is a closed communicating recurrent class.
\begin{lemma}\label{lemma:A_ASG_closed_comm_class}
$A_{\text{ASG}}$ is a closed communicating recurrent class.
\end{lemma}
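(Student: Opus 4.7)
The plan is to verify three properties for $A_{\text{ASG}}$: it is closed under the log-linear Markov kernel, it is a communicating class, and, being a finite closed communicating class, it is automatically recurrent.

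For closedness, I would first check that from any $\mu \in A_{\text{ASG}}$ and any chosen player $v$, the set $A_v^c(\mu_{-v})$ is nonempty: the incumbent action $\mu_v$ itself lies in $A_v^c(\mu_{-v})$, since $(v,\mu_v) \in \mathcal{E}$, $\mu_v \notin U_v(\mu_{-v})$ (it is $v$'s successor in the acyclic graph $\mathcal{SG}(\mu)$), and $C_{\mu_v}(\mu) < \infty$. Hence the null-action fallback never fires in $A_{\text{ASG}}$. I would then argue that any $\mu_v^{\text{new}} \in A_v^c(\mu_{-v})$ keeps the profile in $A_{\text{ASG}}$. Nodes $w$ whose $\mu$-path avoids $v$ retain their cost, while for nodes $w$ whose $\mu$-path passes through $v$, the new path $w \to \cdots \to v \to \mu_v^{\text{new}} \to \cdots$ still terminates: its suffix coincides with $\mu_v^{\text{new}}$'s $\mu$-path, which by $\mu_v^{\text{new}} \notin U_v(\mu_{-v})$ reaches a sink without touching $v$. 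A short ASG argument (observing that any node shared between the $w\to\cdots\to v$ prefix and $\mu_v^{\text{new}}$'s tail would force $\mu_v^{\text{new}}$ into $U_v(\mu_{-v})$) rules out further repetitions, giving $C_w < \infty$ for every $w$.

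For the communicating property, the plan is to show that every $\mu' \in A_{\text{ASG}}$ is reachable from every $\mu \in A_{\text{ASG}}$ by a sequence of single-player updates each carrying strictly positive log-linear probability. I would process non-terminal nodes in reverse topological order of $\mathcal{SG}(\mu')$, updating $v$ only after every descendant of $v$ in $\mathcal{SG}(\mu')$ has been switched to its $\mu'$-action. At the moment $v$ is processed, the current profile $\tilde{\mu}$ agrees with $\mu'$ on every descendant of $v$, so starting from $\mu'_v$ and following $\tilde{\mu}_{-v}$ traces the $\mathcal{SG}(\mu')$-path from $\mu'_v$ to its sink; by acyclicity of $\mathcal{SG}(\mu')$ this path does not contain $v$, hence $\mu'_v \notin U_v(\tilde{\mu}_{-v})$ and $\mu'_v \in A_v^c(\tilde{\mu}_{-v})$. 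The log-linear rule selects $\mu'_v$ with positive probability, and by the closedness argument the resulting profile stays in $A_{\text{ASG}}$. Iterating through the ordering transforms $\mu$ into $\mu'$.

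Recurrence is then immediate, since $A_{\text{ASG}}$ is finite, closed, and communicating. The main obstacle is the communicating step: choosing a schedule of updates whose every intermediate profile stays in $A_{\text{ASG}}$ requires exploiting the tree structure of $\mathcal{SG}(\mu')$ carefully, and the reverse-topological order is what makes the cycle-avoidance verification at the boundary between already-updated and not-yet-updated nodes go through cleanly.
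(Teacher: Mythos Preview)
Your proposal is correct and follows essentially the same route as the paper. Your reverse-topological schedule on $\mathcal{SG}(\mu')$ is exactly the paper's layering $R_1,R_2,\ldots$ by hop-distance to the terminal nodes, and both closedness and recurrence are handled identically; if anything, you are more careful than the paper in explicitly verifying that each single-player update from $A_v^c(\mu_{-v})$ keeps the profile inside $A_{\text{ASG}}$, a step the paper leaves implicit.
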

}
\begin{proof}
\RC{
We first show that $A_{\text{ASG}}$ is a communicating class, i.e., there is a finite transition sequence from $\mu^{s}$ to $\mu^{f}$  with non-zero probability for all $\mu^{s}, \mu^{f} \in A_{\text{ASG}}$.
Consider the set of states $R_0,R_1,\cdots,$ defined by the recursion
$R_{k+1} = \{v:\mu_{v}^{f} \in R_{k}\}$, where $ R_0 = T$,
i.e., $R_{k}$ is the set of all nodes that are $k$ hops away from the set of terminal nodes $T$ in the ASG $\mathcal{SG}(\mu^{f})$.
Let $\bar{k}$ be the last of the sets that is non-empty.
Since $\mu^{f} \in A_{\text{ASG}}$, we have $\bar{k} \leq |\mathcal{V}|$ and $\cup_{m=0}^{\bar{k}}R_{m} = \mathcal{V}$.
We transition from $\mu^{s}$ to $\mu^{f}$ by sequentially switching from $\mu_{v} = \mu_{v}^{s}$ to $\mu_v = \mu_{v}^{f}$, for all $v \in R_{k}$, starting at $k=1$ and incrementing $k$ until $k=\bar{k}$, i.e., we first change the action of nodes in $R_1$, and then $R_2$ and so on until $R_k$.
We next show that this transition sequence has a non-zero probability by showing that each component transition has a non-zero probability.
At stage $k+1$, consider the transition where we switch the action of a node $v \in R_k$, and let $\mu$ be the current action.
At this stage we have already changed the action of players in $R_1, \cdots, R_k$, and for the current graph $\mathcal{SG}(\mu)$, there is a path leading from $\mu_{v}^{f} \in R_k$ to a terminal node in $R_0$.
Moreover, $\mu_{v}^{f}$ is not upstream of $v$ since the intermediate nodes of the path are in $R_{k-1}, \cdots, R_{1}$.
Then, $\mu_{v}^{f} \in A_v^{c}(\mu_{-v})$, which implies that the transition  $(\mu_v^{s},\mu_{-v}) \rightarrow (\mu_{v}^{f}, \mu_{-v})$ has a non-zero probability.
Thus, $A_{\text{ASG}}$ is a communicating class.
}

\RC{
We next show that $A_{\text{ASG}}$ is closed.
Consider a state $\mu \in A_{\text{ASG}}$, and a node $v \in \mathcal{V'}$.
Then, $A_v^{c}(\mu_{-v})$ is not empty, since $\mu_v \in A_{v}^{c}(\mu_{-v})$.
This implies that $\mu_v$ can not be set as the null action $a_{\emptyset}$.
Thus, $A_{\text{ASG}}$ is closed.
Since $A_{\text{ASG}}$ is a closed communicating class, every action profile $\mu \in A_{\text{ASG}}$ is a recurrent state.
}
\end{proof}

\RC{
We next show, in the following lemma, that all states in $A_{\emptyset}$ are transient states.
\begin{lemma}\label{lemma:null_set_transience}
Any state $\mu \in A_{\emptyset}$ is a transient state.
\end{lemma}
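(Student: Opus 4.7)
The plan is to establish transience by exhibiting, from any $\mu \in A_{\emptyset}$, a finite transition sequence with strictly positive probability that lands in $A_{\text{ASG}}$, and then invoking closedness of $A_{\text{ASG}}$ from the preceding lemma. Once the chain enters $A_{\text{ASG}}$ it never escapes, so with positive probability $\mu$ is never revisited, which by the standard characterization of transience in countable-state Markov chains completes the proof.

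To construct the escape sequence, I would layer the vertex set by breadth-first distance to $T$ in the original graph $\mathcal{G}$: set $R_0 = T$ and $R_{k+1} = \{v \in \mathcal{V}' \setminus \bigcup_{j \le k} R_j : v \text{ has a neighbor in } R_k\}$. Because $\mathcal{G}$ is connected and $T$ is non-empty, $\bigcup_{k=0}^{\bar{k}} R_k = \mathcal{V}$ for some $\bar{k} \le |\mathcal{V}|$. Starting from $\mu$, I walk the chain through updates in the order $R_1, R_2, \ldots, R_{\bar{k}}$: at stage $k$ each node $v \in R_k$ is selected in turn and its action is set to some fixed neighbor $u(v) \in R_{k-1}$. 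Uniform random selection gives positive probability of picking the intended $v$ at each step, so it suffices to check that the targeted action lies in $A_v^c(\mu_{-v})$, ensuring a non-$a_\emptyset$ update is made with positive probability by the Boltzmann rule.

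This feasibility check is the only delicate point, and it is resolved by the layering. When $v \in R_k$ is updated, all nodes in $R_1, \ldots, R_{k-1}$ have already been assigned successors in strictly lower layers, so the directed successor path starting at $u(v) \in R_{k-1}$ proceeds through $R_{k-2}, R_{k-3}, \ldots, R_0$ and terminates at a node of $T$; hence $C_{u(v)}(\mu) < \infty$. Moreover $v$ cannot be upstream of $u(v)$ in the current successor graph, since that path remains in strictly lower layers and never revisits layer $k$. Therefore $u(v) \in A_v^c(\mu_{-v})$, the denominator in the log-linear update is positive, and the transition $\mu_v \mapsto u(v)$ occurs with positive probability. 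After the stage-$\bar{k}$ updates, every node has a successor leading to $T$, so the resulting joint profile lies in $A_{\text{ASG}}$.

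Multiplying the finitely many positive transition probabilities yields a strictly positive probability of reaching $A_{\text{ASG}}$ from $\mu$ in finitely many steps; combined with the closedness of $A_{\text{ASG}}$ established previously, this gives $\mathrm{Pr}(\text{ever return to } \mu \mid \text{start at } \mu) < 1$, so $\mu$ is transient. I anticipate that the main obstacle will be only the bookkeeping: making precise that after the updates in $R_1, \ldots, R_{k-1}$ are in place, the partial successor graph on the already-updated nodes is acyclic with sinks in $T$, so that $C_{u(v)}(\mu)$ is genuinely finite at the moment $v$ is considered. The layered order makes this essentially immediate, which is why no deeper machinery is needed.
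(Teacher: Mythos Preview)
Your proposal is correct and follows essentially the same approach as the paper: exhibit a positive-probability transition sequence from any $\mu \in A_{\emptyset}$ into $A_{\text{ASG}}$ via a layered update, then invoke closedness of $A_{\text{ASG}}$ to conclude transience. The only cosmetic difference is that the paper picks an arbitrary target $\mu^{f} \in A_{\text{ASG}}$ and layers by hop distance to $T$ in the successor graph $\mathcal{SG}(\mu^{f})$ (reusing verbatim the construction from the preceding lemma), whereas you layer by BFS distance to $T$ in the underlying graph $\mathcal{G}$ and thereby build a specific target profile on the fly; your target is simply a particular choice of the paper's $\mu^{f}$, so the two arguments coincide.
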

}
\begin{proof}
\RC{
Consider a state $\mu^{s} \in A_{\emptyset}$ and a state $\mu^{f} \in A_{\text{ASG}}$.
We can design a transition sequence of non-zero probability from $\mu^{s}$ to $\mu^{f}$ similar to how we did so in the proof of Lemma \ref{lemma:A_ASG_closed_comm_class}, as the sequence designed did not depend on $\mu^{s}$.
Moreover, from Lemma \ref{lemma:A_ASG_closed_comm_class}, we know that $A_{\text{ASG}}$ is a closed class.
Thus, there is a finite non-zero probability that the state $\mu^{s} \in A_{\emptyset}$ will never be revisited.
}
\end{proof}

\begin{proof}[Proof of Theorem \ref{theorem:log_lin_asympt}]
\RC{
From Lemma \ref{lemma:A_ASG_closed_comm_class} and Lemma \ref{lemma:null_set_transience}, we know that there is exactly one closed communicating recurrent class.
Thus, the stationary distribution of the Markov chain induced by log-linear learning is unique.
The transition probability from state $\mu$ to $\mu^{'}=(\mu_{v}^{'}, \mu_{-v})$ for $\mu, \mu^{'} \in A_{\text{ASG}}$ is given as 
\begin{align*}
P_{\mu\mu^{'}} = \frac{1}{|\mathcal{V}^{'}|}\frac{e^{-\frac{1}{\tau}\left(\mathcal{J}_v(\mu_v^{'}, \mu_{-v})\right)}}{\sum_{\mu_{v}^{''} \in A_v^{c}(\mu_{-v})}e^{-\frac{1}{\tau}\left(\mathcal{J}_{v}(\mu_v^{''}, \mu_{-v})\right)}},
\end{align*}
denote .
We can reformulate this as 
\begin{align*}
P_{\mu\mu^{'}} = \frac{1}{|\mathcal{V}^{'}|}\frac{e^{-\frac{1}{\tau}\left(\phi(\mu_v^{'}, \mu_{-v})\right)}}{\sum_{\mu_{v}^{''} \in A_v^{c}(\mu_{-v})}e^{-\frac{1}{\tau}\left(\phi(\mu_v^{''}, \mu_{-v}(k))\right)}},
\end{align*}
using $\mathcal{J}_v(\mu_v^{'}, \mu_{-v}) - \mathcal{J}_{v}(\mu_v, \mu_{-v}) = \phi(\mu_v^{'}, \mu_{-v}) - \phi(\mu_v, \mu_{-v})$ from Lemma \ref{lemma:potential_game}.
Then, we can see that the probability distribution $\Pi \in \Delta (A_{\text{ASG}})$ given by
\begin{align*}
\Pi(\mu) = \frac{e^{-\frac{1}{\tau}\phi(\mu)}}{\sum_{\mu^{''} \in A_{\text{ASG}}}e^{-\frac{1}{\tau}\phi(\mu^{''})}},
\end{align*}
satisfies the detailed balance equation 
$\Pi_{\mu}P_{\mu \mu^{'}} = \Pi_{\mu^{'}}P_{\mu^{'} \mu}$.
Thus, $\Pi$ is the unique stationary distribution.
As temperature $\tau \rightarrow 0$, the weight of the stationary distribution will be on the global minimizers of the potential function \cite{marden2012revisiting}.
In other words,
$
\lim_{\tau \rightarrow 0} \sum_{\mu \in \argmin_{\mu^{'} \in A_{\text{ASG}}}\phi(\mu^{'})} \Pi(\mu) = 1.
$
Thus, asymptotically, log-linear learning provides us with the global minimizer of $\phi(\mu)=  C_{v_{s}}(\mu) + \epsilon^{'}\sum_{v \neq v_s}C_{v}(\mu)$, an $\epsilon$-suboptimal solution to the Min-Exp-Cost-Simple-Path problem.}
\end{proof}

\subsection{Proof of Theorem \ref{theorem:BR_finite_iters}}\label{appendix:proof_br_conv}
\begin{proof}
\RC{
We first show that there exists an $k_l$ such that $\mu(k) \in A_{\text{ASG}}$ for all  $k \geq k_l$.
Let $A_{\emptyset} = \{\mu: \mu_{v}=a_{\emptyset} \text{ for some } v\}$ denote the set of action profiles with at least one player playing a null action.
Consider a action profile $\mu \in A_{\emptyset}$.
Then there must exist a node $v \in \{u: \mu_{u} = a_{\emptyset}\}$ which has a neighbor in $T \cup \{u: \mu_{u} \neq a_{\emptyset}\}$, since otherwise $\{u:\mu_{u} = a_{\emptyset}\}$ and $T \cup \{u: \mu_{u} \neq a_{\emptyset}\}$ are not connected, contradicting the assumption that the graph is connected.
Then, $A_{v}^{c}(\mu_{-v})$ is non-empty, and when node $v$ is selected in the round robin iteration it will play a non-null action.
Moreover, $\mu_{v}\neq a_{\emptyset}$ for all subsequent iterations, since its current action at any iteration $k$ will always belong to $A_{v}^{c}(\mu_{-v}(k))$.
We can apply this reasoning repeatedly to show that eventually at some iteration $k_l$ the set $\{u: \mu_{u}(k_l) = a_{\emptyset}\}$ will be empty, i.e., $\mu(k_l) \in A_{\text{ASG}}$.
Furthermore, $\mu(k) \in A_{\text{ASG}}$ for all $k \geq k_l$.
}

\RC{We next}
prove that $C_{v}(\mu(k+1)) \leq C_{v}(\mu(k))$ for all $v \in \mathcal{V}'$ and for all $k$.
Let $v$ be the node selected at stage $k+1$.
\RC{
Clearly, if $\mu_{v}(k) = a_{\emptyset}$ this is true.
Else,}
\begin{align}
C_{v}(\mu(k+1)) & = \min_{\RC{u \in A_v^{c}(\mu_{-v}(k))}} \left\{ (1-p_v)\left[l_{vu} + C_{u}(\mu(k))\right] \right\} \nonumber\\
& \leq (1-p_v)\left[l_{v\mu_{v}(k)} + C_{\mu_{v}(k)}(\mu(k))\right] \label{eq:1}\\
& = C_{v}(\mu(k))\nonumber, 
\end{align}
where (\ref{eq:1}) follows since $\mu_{v}(k) \in A_v^{c}(\mu_{-v}(k))$.
From (\ref{eq:rec_exp_dist}), we have that 
$ C_u(\mu(k+1)) \leq C_u(\mu(k)), \;\; \forall u \in U_{v}(\mu)$,
where $U_{v}(\mu)$ is the set of upstream nodes from $v$.
Furthermore,
$ C_u(\mu(k+1)) = C_u(\mu(k)), \;\; \forall u \notin U_{v}(\mu)$.
Thus, $C_{v}(\mu(k+1)) \leq C_{v}(\mu(k))$ for all $v \in \mathcal{V}'$.

Since $\{C_{v}(\mu(k))\}_{k}$ is a monotonically non-increasing sequence, bounded by below from $0$, we know that the limit exists.
Moreover, since $\mu$ belongs to a finite space, we know that convergence must occur in a finite number of iterations.
It should be noted however, that the limit can be different based on the order \RC{of the nodes in the round robin}.
Let \RC{$\mu^{*} \in A_{\text{ASG}}$} denote the solution at convergence for the particular order of nodes.
\RC{We assume that, when selecting $\mu_{v}$, ties are broken using a consistent set of rules, since otherwise we may cycle repeatedly through action profiles having the same expected costs $\{C_{v}(\mu)\}_{v}$.}

We next show that we converge to this limit in $|\mathcal{V}'|^2$ iterations.
Let $n=|\mathcal{V}'|$.
Consider the set of states $R_0,R_1,\cdots,$ defined by the recursion
$R_{k+1} = \{v:\mu_{v}^{*} \in R_{k}\}$, where $ R_0 = T$,
i.e., $R_{k}$ is the set of all nodes that are $k$ hops away from the set of terminal nodes $T$ in the ASG $\mathcal{SG}(\mu^{*})$.
Let $\bar{k}$ be the last of the sets that is non-empty.
Since \RC{$\mu^{*} \in A_{\text{ASG}}$}, we have $\bar{k} \leq n$ and $\cup_{m=0}^{\bar{k}}R_{m} = \mathcal{V}$.
We next show by induction that
$ \mu_v(nk) = \mu_v^{*}, \;\; \forall v \in \cup_{m=0}^{k}R_m$,
for $k=0,1,\cdots,\bar{k}$.
This is true for $k=0$.
Assume that it holds true at stage $k$, i.e., $\mu_{v}(nk) = \mu_{v}^{*}$ for all $v \in\cup_{m=1}^{k}R_{0}$.
Since $\{C_{v}(\mu(k))\}_{k}$ is monotonically non-increasing, we have $C_{v}(\mu^{*}) \leq C_{v}(\mu(k+1))$.
Moreover, since any node $v \in \cup_{m=0}^{k+1}R_m$ would be selected once in \RC{round} $k+1$ of the \RC{round robin} process, we have
\begin{align}
C_{v}(\mu(n(k+1))) &= \min_{\RC{u \in A_{v}^{c}(\mu_{-v}(n(k+1)-1))}} \Big\{ (1-p_v) \times   \nonumber \\
&  \;\;\;\;\;\;\;\;\;\;\;\;\; \big[l_{vu} + C_{u}(\mu(n(k+1)-1))\big] \Big\} \nonumber\\
& \leq (1-p_v)\left[l_{v\mu_{v}^{*}} + C_{\mu_v}(\mu^{*})\right] \label{eq:2} \\ 
& = C_{v}(\mu^{*}). \nonumber
\end{align}
where (\ref{eq:2}) follows based on the induction hypothesis, since $\mu_{v}^{*}$ leads to a direct path to a terminal node, and is not an upstream node of $v$.
Thus, $\mu_{v}(n(k+1)) = \mu_{v}^{*}$ for all $v \in \cup_{m=0}^{k+1}R_m$.
This implies that the best reply process, when we cycle through the nodes \RC{in a round robin}, converges within at most $n^2$ iterations.
\end{proof}

\vspace{-0.1in}
\subsection{Relation to \RC{the Discounted-Reward Traveling Salesman Problem}}\label{appendix:pc_tsp_relation}
In this section, we show \RC{the relationship between the Min-Exp-Cost-Path-NT problem of Section \ref{subsec:min_exp_cost_tour} and the Discounted-Reward-TSP, a path planning problem studied in the theoretical computer science community \cite{blum2007approximation}.
Note that this section is merely pointing out the relationship between the objectives/constraints of the two problems, and is not claiming that one is reducible to the other.}
In \RC{Discounted-Reward-TSP}, each node $v$ has a prize $\pi_{v}$ associated with it and each edge $(u, v)$ has a cost $l_{uv}$ associated with it.
The goal is to find a path $\mathcal{P}$ that visits all nodes and that maximizes the discounted reward collected $\sum_{v} \gamma^{l^{\mathcal{P}}_{v}} \pi_{v}$, where $\gamma<1$ is the discount factor, and $l^{\mathcal{P}}_{v} = \sum_{e \in \mathcal{E}(\mathcal{P}_{v})}l_e$ is the cost incurred along path $\mathcal{P}$ until node $v$.

\RC{In the setting of our Min-Exp-Cost-Path-NT problem}, the prize of a node $v$ \RC{is taken as} $\pi_{v} = \log_{\gamma}(1-p_{v})$ for a value of $\gamma < 1$.
Our Min-Exp-Cost-Path-NT objective can \RC{then} be reformulated as $\sum_{e \in \mathcal{E}(\mathcal{P})} \gamma ^{\pi^{\mathcal{P}}_{e}} l_{e}$, where $\pi^{\mathcal{P}}_{e} = \sum_{v \in \mathcal{V}(\mathcal{P}_{e})} \pi_{v}$ is the reward collected along path $\mathcal{P}$ until edge $e$ is encountered.
We can refer to this problem as the Discounted-Cost-TSP problem, drawing a parallel to the Discounted-Reward-TSP problem described above.
\RC{However, note that our problem is not the same as the Discounted-Reward-TSP problem.
Rather, we simply illustrated a relationship between the two problems, which can lead to further future explorations in this area.}

\vspace{-0.1in}
\subsection{Formulation as Stochastic Shortest Path Problem with Recourse}\label{appendix:ssp_recourse}
In this section, we show that we can formulate the Min-Exp-Cost-Path problem as a special case of the stochastic shortest path problem with recourse \cite{polychronopoulos1993stochastic}.
The terminology of stochastic shortest path here is different from its usage in \ref{subsec:mdp_formulation}.
The stochastic shortest path problem with recourse consists of a graph where the edge weights are random variables taking values from a finite range.
As the graph is traversed, the realizations of the cost of an edge is learned when one of its end nodes are visited.
The goal is to find a policy that minimizes the expected cost from a source node $v_s$ to a destination node $v_{t}$.
The best policy would determine where to go next based on the currently available  
information.

Consider the Min-Exp-Cost-Path problem on a graph $\mathcal{G}=(\mathcal{V},\mathcal{E})$, with probability of success $p_{v}\in [0,1]$, for all $v \in \mathcal{V}$.
We can formulate this as a special case of this stochastic shortest path problem with recourse, by adding a node $v_t$ which acts as the destination node.
Each node in $\mathcal{G}$ is connected to $v_t$ with a edge of random weight.
The edge from $v$ to $v_t$ has weight 
$
l_{vv_{t}} =  \left\{\begin{array}{ll} 0, & \text{w.p. } p_{v}\\
\infty, & \text{w.p. } 1-p_{v} \end{array}\right. .
$
The remaining set of edges $\mathcal{E}$ are deterministic.
The solution to the shortest path problem from $v_s$ to $v_t$ with recourse, would provide a policy that would give us the solution to the Min-Exp-Cost-Path problem.
The policy in this special case would produce a path from $v_{s}$ to a node in the set of terminal nodes $T$.
However, the general stochastic shortest path with recourse is a much harder problem to solve than the Min-Exp-Cost-Path problem and
the heuristics utilized for stochastic shortest path with recourse are not particularly suited to our specific problem.
For instance, in the open loop feedback certainty equivalent heuristic \cite{gao2006optimal}, at each iteration, the uncertain edge costs are replaced with their expectation and the next node is chosen according to the deterministic shortest path to the destination.
In our setting this would correspond to the heuristic of moving along the deterministic shortest path to the closest terminal node.
Such a heuristic would ignore the probability of success $p_{v}$ of the nodes.

\bibliography{ref}

\begin{thebibliography}{10}

\bibitem{muralidharan2018pconn}
A.~Muralidharan and Y.~Mostofi, ``Path planning for a connectivity seeking
  robot,'' in {\em Globecom Workshops}, IEEE, 2017.

\bibitem{malmirchegini2012spatial}
M.~Malmirchegini and Y.~Mostofi, ``On the spatial predictability of
  communication channels,'' {\em IEEE Transactions on Wireless Communications},
  vol.~11, no.~3, pp.~964--978, 2012.

\bibitem{molaverdikhani2009mapping}
K.~Molaverdikhani and M.~Tabeshian, ``Mapping the probability of microlensing
  detection of extra-solar planets,'' {\em arXiv preprint arXiv:0911.4424},
  2009.

\bibitem{rosenthal2012someone}
S.~Rosenthal, M.~Veloso, and A.~Dey, ``Is someone in this office available to
  help me?,'' {\em Journal of Intelligent \& Robotic Systems}, vol.~66, no.~1,
  pp.~205--221, 2012.

\bibitem{karaman2010incremental}
S.~Karaman and E.~Frazzoli, ``Incremental sampling-based algorithms for optimal
  motion planning,'' {\em Robotics Science and Systems VI}, vol.~104, 2010.

\bibitem{likhachev2008anytime}
M.~Likhachev, D.~Ferguson, G.~Gordon, A.~Stentz, and S.~Thrun, ``Anytime search
  in dynamic graphs,'' {\em Artificial Intelligence}, vol.~172, no.~14,
  pp.~1613--1643, 2008.

\bibitem{jaillet1985probabilistic}
P.~Jaillet, {\em Probabilistic traveling salesman problems}.
\newblock PhD thesis, Massachusetts Institute of Technology, 1985.

\bibitem{bertsimas1992vehicle}
D.~J. Bertsimas, ``A vehicle routing problem with stochastic demand,'' {\em
  Operations Research}, vol.~40, no.~3, pp.~574--585, 1992.

\bibitem{chung2012analysis}
T.~Chung and J.~Burdick, ``Analysis of search decision making using
  probabilistic search strategies,'' {\em IEEE Transactions on Robotics},
  vol.~28, no.~1, pp.~132--144, 2012.

\bibitem{hollinger2009efficient}
G.~Hollinger, S.~Singh, J.~Djugash, and A.~Kehagias, ``Efficient multi-robot
  search for a moving target,'' {\em The International Journal of Robotics
  Research}, vol.~28, no.~2, pp.~201--219, 2009.

\bibitem{chung2011search}
T.~Chung, G.~Hollinger, and V.~Isler, ``Search and pursuit-evasion in mobile
  robotics,'' {\em Autonomous robots}, vol.~31, no.~4, p.~299, 2011.

\bibitem{bourgault2003optimal}
F.~Bourgault, T.~Furukawa, and H.~Durrant-Whyte, ``Optimal search for a lost
  target in a bayesian world,'' in {\em Field and service robotics},
  pp.~209--222, Springer, 2003.

\bibitem{blum1994minimum}
A.~Blum, P.~Chalasani, D.~Coppersmith, B.~Pulleyblank, P.~Raghavan, and
  M.~Sudan, ``The minimum latency problem,'' in {\em Proceedings of the
  twenty-sixth annual ACM symposium on Theory of computing}, pp.~163--171, ACM,
  1994.

\bibitem{simon1975optimal}
H.~A. Simon and J.~Kadane, ``Optimal problem-solving search: All-or-none
  solutions,'' {\em Artificial Intelligence}, vol.~6, no.~3, pp.~235--247,
  1975.

\bibitem{bertsekas1995dynamic}
D.~Bertsekas, {\em Dynamic programming and optimal control}, vol.~1.
\newblock Athena Scientific Belmont, MA, 1995.

\bibitem{garey2002computers}
M.~Garey and D.~Johnson, {\em Computers and intractability}, vol.~29.
\newblock W. H. Freeman, New York, 2002.

\bibitem{garcia1993loop}
J.~Garcia-Lunes-Aceves, ``Loop-free routing using diffusing computations,''
  {\em IEEE/ACM Transactions on Networking (TON)}, vol.~1, no.~1, pp.~130--141,
  1993.

\bibitem{monderer1996potential}
D.~Monderer and L.~Shapley, ``Potential games,'' {\em Games and economic
  behavior}, vol.~14, no.~1, pp.~124--143, 1996.

\bibitem{marden2012revisiting}
J.~R. Marden and J.~S. Shamma, ``Revisiting log-linear learning: Asynchrony,
  completeness and payoff-based implementation,'' {\em Games and Economic
  Behavior}, vol.~75, no.~2, pp.~788--808, 2012.

\bibitem{fudenberg1991game}
D.~Fudenberg and J.~Tirole, ``Game theory, 1991,'' {\em Cambridge,
  Massachusetts}, vol.~393, p.~12, 1991.

\bibitem{smith2004heuristic}
T.~Smith and R.~Simmons, ``Heuristic search value iteration for pomdps,'' in
  {\em Proceedings of the 20th conference on Uncertainty in artificial
  intelligence}, pp.~520--527, AUAI Press, 2004.

\bibitem{barto1995learning}
A.~Barto, S.~Bradtke, and S.~Singh, ``Learning to act using real-time dynamic
  programming,'' {\em Artificial intelligence}, vol.~72, no.~1-2, pp.~81--138,
  1995.

\bibitem{kirkpatrick1983optimization}
S.~Kirkpatrick, C.~Gelatt, and M.~Vecchi, ``Optimization by simulated
  annealing,'' {\em science}, vol.~220, no.~4598, pp.~671--680, 1983.

\bibitem{goldsmith2005wireless}
A.~Goldsmith, {\em Wireless communications}.
\newblock Cambridge university press, 2005.

\bibitem{smith2004urban}
W.~Smith and D.~Cox, ``Urban propagation modeling for wireless systems,'' tech.
  rep., DTIC Document, 2004.

\bibitem{lonn2004output}
S.~L{\"o}nn, U.~Forssen, P.~Vecchia, A.~Ahlbom, and M.~Feychting, ``Output
  power levels from mobile phones in different geographical areas; implications
  for exposure assessment,'' {\em Occupational and Environmental Medicine},
  vol.~61, no.~9, pp.~769--772, 2004.

\bibitem{muralidharan2017fpd}
A.~Muralidharan and Y.~Mostofi, ``First passage distance to connectivity for
  mobile robots,'' in {\em American Control Conference (ACC)}, pp.~1517--1523,
  2017.

\bibitem{blum2007approximation}
A.~Blum, S.~Chawla, D.~Karger, T.~Lane, A.~Meyerson, and M.~Minkoff,
  ``Approximation algorithms for orienteering and discounted-reward tsp,'' {\em
  SIAM Journal on Computing}, vol.~37, no.~2, pp.~653--670, 2007.

\bibitem{polychronopoulos1993stochastic}
G.~Polychronopoulos and J.~Tsitsiklis, ``Stochastic shortest path problems with
  recourse,'' 1993.

\bibitem{gao2006optimal}
S.~Gao and I.~Chabini, ``Optimal routing policy problems in stochastic
  time-dependent networks,'' {\em Transportation Research Part B:
  Methodological}, vol.~40, no.~2, pp.~93--122, 2006.

\end{thebibliography}
\bibliographystyle{ieeetr}
\end{document}